\theoremstyle{plain}
\newtheorem{theorem}{Theorem}[section]
\newtheorem{proposition}[theorem]{Proposition}
\newtheorem{lemma}[theorem]{Lemma}
\theoremstyle{definition}
\newtheorem{definition}[theorem]{Definition}
\newtheorem{assumption}[theorem]{Assumption}
\theoremstyle{remark}
\DeclareMathOperator*{\Rd}{\mathbb{R}^d}
\newcommand{\tg}{\tilde{g}}
\newcommand{\tm}{\tilde{m}}
\newcommand{\EE}{\mathbb{E}}
\def\eqref#1{equation~\ref{#1}}
\def\1{\bm{1}}
\def\mG{{\bm{G}}}
\def\mM{{\bm{M}}}
\def\mP{{\bm{P}}}
\def\mR{{\bm{R}}}
\def\mV{{\bm{V}}}
\def\mW{{\bm{W}}}
\DeclareMathAlphabet{\mathsfit}{\encodingdefault}{\sfdefault}{m}{sl}
\SetMathAlphabet{\mathsfit}{bold}{\encodingdefault}{\sfdefault}{bx}{n}
\newcommand{\E}{\mathbb{E}}
\newcommand{\R}{\mathbb{R}}
\def\ALG{\texttt{FRUGAL}}
\icmltitlerunning{\ALG: Memory-Efficient Optimization by Reducing State Overhead for Scalable Training}
\begin{document}

\twocolumn[
\icmltitle{\ALG: Memory-Efficient Optimization by Reducing State Overhead for Scalable Training}




\begin{icmlauthorlist}
\icmlauthor{Philip Zmushko}{yandex,mipt}
\icmlauthor{Aleksandr Beznosikov}{mipt,isp,skol,yandex}
\icmlauthor{Martin Takáč}{mbz}
\icmlauthor{Samuel Horváth}{mbz}
\end{icmlauthorlist}

\icmlaffiliation{yandex}{Yandex, Russia}
\icmlaffiliation{mipt}{Moscow Institute of Physics and Technology, Russia}
\icmlaffiliation{isp}{Ivannikov Institute for System Programming RAS, Russia}
\icmlaffiliation{skol}{Skolkovo Institute of Science and Technology, Russia}
\icmlaffiliation{mbz}{Mohamed bin Zayed University of Artificial Intelligence, UAE}

\icmlcorrespondingauthor{Philip Zmushko}{zmushko.ph.a@gmail.com}

\icmlkeywords{Machine Learning, ICML}

\vskip 0.3in
]



\printAffiliationsAndNotice{}  
\setlength{\textfloatsep}{10pt}
\begin{abstract}

With the increase in the number of parameters in large language models, the training process increasingly demands larger volumes of GPU memory. 
A significant portion of this memory is typically consumed by the optimizer state. 
To overcome this challenge, recent approaches such as low-rank adaptation (LoRA), low-rank gradient projection (GaLore), and blockwise optimization (BAdam) have been proposed. 
However, in all these algorithms, the \textit{effective rank of the weight updates remains low-rank}, which can lead to a substantial loss of information from the gradient. 
This loss can be critically important, especially during the pre-training stage.
In this paper, we introduce \ALG\ (\textbf{F}ull-\textbf{R}ank \textbf{U}pdates with \textbf{G}r\textbf{A}dient sp\textbf{L}itting), a new memory-efficient optimization framework. 
\ALG\ leverages gradient splitting to perform low-dimensional updates using advanced 
algorithms (such as Adam), while updates along the remaining directions are executed via state-free methods like SGD or signSGD. 
Our framework can be integrated with various low-rank update selection techniques, including GaLore and BAdam. 
We provide theoretical convergence guarantees for our framework when using SGDM for low-dimensional updates and SGD for state-free updates.
Additionally, our method consistently outperforms concurrent approaches,
achieving state-of-the-art results in pre-training and fine-tuning tasks while balancing memory efficiency and performance metrics.
\vskip -0.15in
\end{abstract}

\section{Introduction}\label{sec:intro}

In recent years, Large Language Models (LLMs) such as GPT \citep{gpt4} and LLaMA-3 \cite{llama3} have demonstrated remarkable performance across various disciplines \citep{tasks1, tasks2, tasks3}.
However, a critical factor in achieving these results is the size of these models~\citep{chinchilla}.
Increasing the number of parameters leads to higher computational and memory costs. 
For example, an 8 billion parameter LLaMA-3 model in 16-bit format requires 32GB just for parameters and gradients. 
Using the standard Adam optimizer~\citep{kingma2014adam} adds another 32GB for $m$ and $v$ statistics. 
Moreover, achieving high-quality results often requires 32-bit precision for weights and optimizer states~\citep{bf16}, pushing memory requirements beyond even high-end GPUs like the A100-80GB.

Numerous research projects have been aimed at reducing these significant costs. 
These approaches include engineering solutions like gradient checkpointing \cite{gradient_checkpointing} and memory offloading \citep{memory_offloading}, which do not change the training trajectory. 
There are also methods that adjust the training algorithm by decreasing the number of trainable parameters \citep{frankle2018lottery, horvath2024maestro}
or their bit precision~\citep{mp8bit}, as well as optimizer statistics \citep{adam8bit, adafactor, adam-mini}. 

Parameter-Efficient Fine-Tuning (PEFT) methods, such as LoRA~\citep{lora} and Dora~\citep{dora}
reduce memory costs by training a relatively small number of parameters compared to the size of the original model, while the remaining modules are frozen. 
This approach has proven effective for the task of efficient fine-tuning of pre-trained models. 
However, PEFT methods have a fundamental limitation: parameter updates always lie in a low-dimensional subspace $L$, which prevents the use of these methods for pre-training~\citep{relora} and may restrict their capabilities in fine-tuning~\citep{finetuning_constraints}.

Recent works, such as GaLore~\citep{zhao2024galore}, ReLoRA~\citep{relora} and BAdam~\citep{luo2024badam}
offer a solution to this problem. 
These methods enable higher-dimensional full-parameter learning by periodically changing the optimizable low-rank subspace $L$. 
However, even though these methods result in overall parameter changes that are high-dimensional, the updates in each step remain low-dimensional. 
The dimensionality of the frozen subspace $\dim M = \dim L^{\perp}
$ significantly exceeds $\dim L$. 
The remaining information contained in the gradient is not utilized for parameter updates. 
Nevertheless, this information can still be leveraged to train the model.

\begin{figure}[t!]
\begin{center}
    \includegraphics[width=\columnwidth]{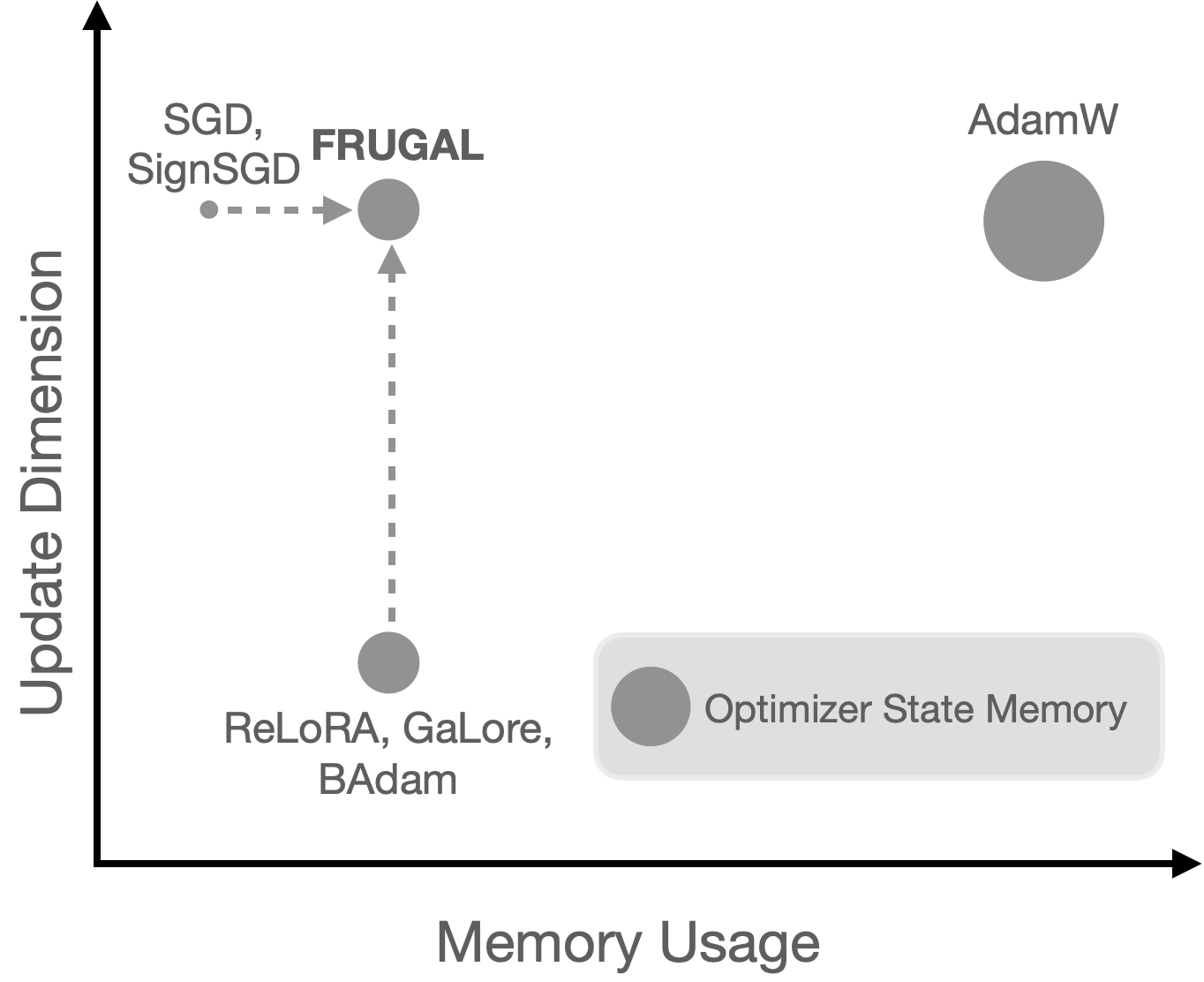}
    \vspace{-2.em}
    \caption{\ALG\ reduces memory usage by splitting gradient updates into low-dimensional updates with advanced optimizers (e.g., AdamW) and using state-free methods (e.g., SignSGD and SGD) for the rest.}
\end{center}
\vskip -0.1in
\end{figure}

We present the \ALG\;framework, designed to bridge this gap. 
Our approach stems from a crucial observation: although memory constraints prevent using optimizers with auxiliary optimizer state --- such as Adam~\citep{kingma2014adam} --- in the remaining subspace $M$, one still can update $M$ using state-free optimization algorithms like Stochastic Gradient Descent (SGD) or signSGD~\citep{signsgd-pmlr-v80-bernstein18a}.
This solution allows for high-dimensional updates, which provides additional opportunities to explore the parameter space and improves convergence.
We will further refer to the subspaces $L$ and $M$ according to the types of optimizers used for their updates - \textbf{state-full} and \textbf{state-free}.

{\bf Contributions.} We summarize the main contributions of our work as follows:
\begin{itemize}[leftmargin=*, itemsep=0pt, topsep=0pt]
    \item We present a new memory-efficient optimization framework that combines the use of advanced optimization algorithms for the state-full subspace with state-free algorithms for the complementary subspace. 
    The framework supports various types of state-full optimizers, state-free optimizers, and different methods for projecting the gradient onto the state-full subspace.
     \item We provide theoretical convergence guarantees for our framework. In the proof, we consider the case with SGDM as the state-full optimizer and SGD as the state-free optimizer, and we show that \ALG\;matches the best-known convergence rate in many scenarios.
    \item To verify the practical applicability of \ALG, we conduct extensive experiments in popular real-world scenarios\footnote{The code is available at
    \href{https://anonymous.4open.science/r/FRUGAL-D3CA}{\texttt{https://anonymous.4open.science/r/FRUGAL-D3CA}}.}.
    In these experiments, we pre-train LLaMA-like models (up to 1B parameters) on the Colossal Clean Crawled Corpus (C4)  dataset~\citep{c4} and fine-tune RoBERTa~\citep{liu2019roberta} on the GLUE benchmark~\citep{wang2018glue}. 
    The results show that our method significantly outperforms previous memory-efficient algorithms while using less memory budget.
    \item We demonstrate that only the Output layer in transformer-like models requires advanced optimizers like Adam, while other modules (including {RMSNorms} and {Embeddings}) can use simpler methods like signSGD without significant performance loss. 
    This opens up new possibilities for memory-efficient training and provides crucial insights into Transformers learning dynamics.
\end{itemize}

\begin{algorithm}[t!]
    \caption{\ALG\;\texttt{(State-Full, State-Free)}}
    \label{alg:Frugal}
        \textbf{Input:}  model $f_\mathbf{\theta}$ with $p$ parameter sets $\{\theta_i\in\R^{d_i}\}_{i=1}^p$, loss $\mathcal{L}$, gradient projectors $\{P_{k,i}\}_{i=1}^p,$ number of steps $K$
        \begin{algorithmic}[1]
            \FOR{$k=1,2, \hdots K$}
                \STATE get data batch $(x, y)$ 
                \STATE compute $\ell \leftarrow \mathcal{L}(f_{\theta}(x), y)$  \COMMENT{Forward}
                \FOR{$g_i = \frac{\partial \ell}{\partial \theta_i}$ from Backward} 
                    \STATE $g_{\text{full}, i} \leftarrow P_{k,i}(g_i),\; $ \COMMENT{Project Grad}
                    \STATE $g_{\text{free, i}} \leftarrow g_i - P^{-1}_{k,i}(g_{\text{full}, i})$ \COMMENT{Residual}
                    \STATE $\texttt{s}_{\theta_i} \leftarrow [P_{k,i}(P^{-1}_{k-1,i}(s),\; s \in \texttt{s}_{\theta_i}]$ \COMMENT{Project state}
                    \STATE $u_{\text{full, i}} \leftarrow \texttt{State-Full.update}(\theta_i, g_{\text{full}, i}, \texttt{s}_{\theta_i})$
                    \STATE $u_{\text{free, i}}\leftarrow \texttt{State-Free.update}(\theta_i, g_{\text{free}, i})$
                    \STATE $\theta_i \leftarrow \theta_i + P^{-1}_{k,i}(u_{\text{full}, i}) + u_{\text{free, i}}$
                \ENDFOR
            \ENDFOR
        \end{algorithmic}
\end{algorithm}


\section{Related work}\label{sec:related}

\textbf{Memory-efficient full-parameter learning.} 
Recent research has focused on reducing the memory footprint of LLM by decreasing the size of the optimizer states while maintaining their performance. 
Low-rank adaptation methods, such as LoRA~\citep{lora}, inject trainable rank decomposition matrices into linear layers, reducing memory requirements by optimizing only a few learnable adapters. 
ReLora~\citep{relora} builds upon this by merging low-rank adaptations into the main model weights during training, increasing the total rank of the update. 
BAdam~\citep{luo2024badam} leverages Block Coordinate Descent for full-parameter training by switching active blocks during fine-tuning. 
MicroAdam~\citep{modoranu2024microadam} compresses gradient information before feeding it into the optimizer state, significantly reducing the memory footprint while enabling full parameter learning with error feedback mechanisms. 
GaLore~\citep{zhao2024galore} maintains full parameter learning by projecting gradients onto a low-rank subspace using SVD decomposition, storing optimizer states in this reduced space. 
However, while these methods effectively reduce memory overhead, they all perform \textit{low-rank updates at each iteration}. 
In contrast, our approach utilizes all available gradient information to perform \textit{full-dimensional updates at each optimizer step}, offering a novel perspective on memory-efficient optimization for LLM.

However, we note that there also exist several concurrent works --- Fira~\citep{chen2024fira}, LDAdam~\citep{robert2024ldadam}, and Adamem~\citep{vyas2024adamem} --- that also adopt full-dimensional updates for similar goals. 
See \cref{sec:discussion} for a comparison and detailed discussion.

\textbf{Other memory-efficient optimization.} 
Several other methods have been proposed to reduce the memory footprint of optimizers.
AdaFactor~\citep{adafactor} attempts to mimic Adam's behavior while reducing memory usage through factorization of the variance matrix $v$. 
Adam-mini~\citep{adam-mini} further reduces memory by storing only one value $v$ per block.
\citet{adam8bit} and \citet{opt4bit} decrease memory footprint by quantizing optimizer states to lower-precision representations. 
\citet{lomo} proposed to reduce memory by fusing the backward operation with the optimizer update.
Notably, these approaches are \textit{orthogonal} to our method \ALG\;and \textit{can be combined with it} for further memory efficiency.

\textbf{Block Coordinate Descent.}
Block Coordinate Descent (BCD) is a well-established optimization method with a rich history in mathematical optimization~\citep{bcd1, bcd2,richtarik2014iteration,richtarik2015optimal,richtarik2015parallel}.
In recent years, a specific instance of BCD, known as \textit{layer-wise learning,} has been applied to deep learning. Notable examples include~\citet{luo2024badam, pan2024lisa}, which leverage this approach for LLM fine-tuning.
To the best of our knowledge, our work presents \textbf{the first theoretical analysis} of an extended BCD framework (\Cref{sec:theory}) where the \textit{remaining layers are also updated with a different algorithm}. 
This novel approach extends traditional BCD techniques, opening new avenues for full model optimization.

\begin{figure*}[h!]
    \includegraphics[width=\linewidth]{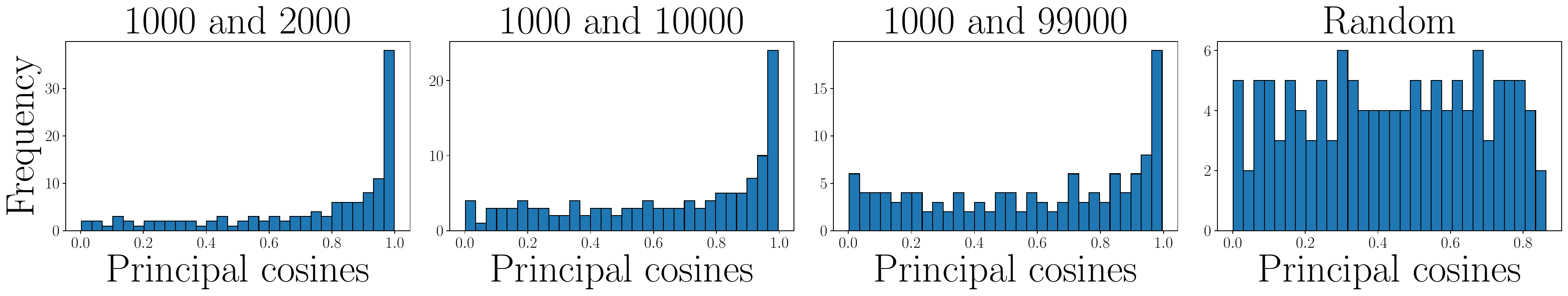}
    \vskip -0.15in
    \caption{Histograms of principal angle cosines. 
    The first three are taken between $\mP_t$ and $\mP_{t'}$ from different iterations $t$ and $t'$. 
    $\mP$ is obtained from the truncated SVD decomposition of the gradient $\mG$ of the Key projection from the 5th layer. 
    The last histogram is taken between two random semi-orthogonal projections $\mR$ and $\mR'$ for comparison.}
    \label{fig:histograms}
    \vskip -0.2in
\end{figure*}

\textbf{Sign-based methods for training language models.}
Since its introduction, Adam has become the de facto primary optimization algorithm, demonstrating superior practical results compared to SGD-based algorithms across various deep learning tasks. This difference is particularly noticeable when training Transformers on language tasks. 
While~\citet{adam_better_sgd1} hypothesized that Adam outperforms SGD in this setup due to \textit{the heavy-tailed distribution of sampling-induced errors,} \citet{sign_sgd_adam_similarity3} demonstrated that this superiority persists even in full-batch training. 
They proposed a new hypothesis suggesting that Adam's key success factor is related to \textit{its similarity to signSGD}~\citep{sign_sgd_adam_similarity1,sign_sgd_adam_similarity2}, and both~\citet{sign_sgd_adam_similarity3} and~\citet{anything_but_sgd} showed that the signed descent with momentum reduces the performance gap with Adam.
In contrast, to the best of our knowledge, \textbf{we are the first to train the majority of language model parameters using signSGD without momentum}, achieving minimal loss in quality.
This approach further demonstrates the effectiveness of sign-based methods for LLM training, paving the way for more efficient and scalable optimization strategies.

\section{Empirical Analysis and Motivation}\label{sec:analysis}
\subsection{The importance of exploring the entire space during the training process}\label{sec:method explore_space}
In recent work, \citet{zhao2024galore} proposed GaLore, an optimization method based on projecting the gradient matrix $\mG$ of each Linear layer\footnote{
Since Linear layers contain most parameters and require most memory, we primarly focus on them.
} 
onto a low-dimensional subspace. 
To obtain the projection matrix $\mP$, they use the SVD decomposition of $\mG_t$, which is recomputed with frequency $T$. 
The vectors or rows of $\mG$ are projected onto the first $r$ left or right singular vectors, respectively. 
This approach has theoretical foundations: the first $r$ singular vectors correspond to the first $r$ singular values and, therefore, should better utilize information from the spectrum of $\mG$.

\begin{table}[t]
\vskip -0.1in
\caption{Comparison of different projection and state-free subspace optimization strategies on pre-training LLaMA-130M on C4 with AdamW as the state-full algorithm.
}
\vspace{-10px}
\label{tab:low_vs_full}
\begin{center}
\begin{tabular}{cc|ccccc}
\toprule
Projection & Optimizes state-& \multicolumn{3}{c}{Validation perplexity $\downarrow$} \\
type & free subspace & 4k & 40k & 200k \\
\midrule
SVD & No & \textbf{39.75} & 24.38 & 21.11 \\
Random & No & 42.31 & \textbf{23.55} & \textbf{20.01} \\
\hline
Random & Yes & 37.26 &  21.53 &  18.64 \\
SVD & Yes & \textbf{33.96} & \textbf{21.01} & \textbf{18.35} \\
RandK & Yes & 36.38 & 21.25 & 18.63 \\
Blockwise & Yes & 37.20 & 21.42 & 18.60 \\
\hline
\multicolumn{2}{c|}{AdamW} & 33.95 & 20.56 & 18.13 \\
\bottomrule
\end{tabular}
\end{center}
\vskip -0.1in
\end{table}

Given the computational burden of SVD decomposition, a natural question arises about the possibility of employing a random semi-orthogonal projection matrix $\mR$ as an alternative to projecting onto the first $r$ singular columns with $\mP$.
Surprisingly, while the SVD decomposition provides better initial performance, the random projection proves superiority in long-term training, yielding significant improvements.
As an illustration, we took the pre-training\footnote{See~\Cref{sec:exp pre-training_main_results} for a detailed description and discussion.
}
of a 130M model with LLaMA-like architecture on the C4 dataset. 
The results are presented in the first part of~\Cref{tab:low_vs_full},
where we compare SVD and Random projections.

To investigate this phenomenon, we pre-trained the LLaMA-60M model and collected gradients $\mG_t$ from different iterations $t$ for examination. 
We evaluated the similarity of the projection matrices by calculating the principal angles between the projections $\mP_t$ from different steps. 
Similarly to the observations in Q-Galore~\citep{zhang2024q-galore}, we found that these projections show minimal change during training; see \Cref{fig:histograms} for details.

Here, we take the projection matrix of \texttt{k\_proj} from $5$-th layer and plot histograms of the cosine of the principal angles between pairs $\mP_t$ and $\mP_{t'}$ from different iterations. 
For comparison, we also include the random projections on the right. 
As can be seen, the distributions of cosines differ significantly for $\mP_t$ and for $\mR_t$. 
While $\mR_t$ feature no angles with cosines higher than $0.9,$ the top $57$ cosines for $\mP_t$ surpass $0.9$, even for gradients $1000$ steps apart.

This leads to the conclusion that although the SVD decomposition generally better captures the information contained in $\mG_t$, the original GaLore algorithm updates the weights only in a small subspace. 
We hypothesize that training with random projections yields superior results due to the more extensive investigation of the optimizable space during the training process.
\textit{This finding indicates that to achieve better convergence, it is important to find optimization algorithms that explore the entire space during the training process.}
\vspace{-4px}
\subsection{Advantage of the Full-Rank Updates}\label{sec:method full_rank}

The insight from~\Cref{sec:method explore_space} suggests that the training of language models performs significantly better when the entire parameter space is explored during the training process.  Given the importance of updating parameters in all directions, this poses the question: \textit{Is it optimal to use low-rank updates, as employed by methods such as GaLore, ReLoRA, and BAdam?} The effective rank of low-rank updates is significantly smaller than the full dimensionality of the parameter space, inevitably leading to a loss of valuable information contained in the gradient.

However, the method to leverage the full-rank gradient for updating parameters is not readily obvious. 
Using algorithms like Adam~\citep{kingma2014adam} is not an option due to the memory overhead they introduce, which is exactly what we aim to avoid. 
An alternative approach is to use state-free optimizers such as SGD or signSGD \citep{signsgd-pmlr-v80-bernstein18a}. 
Unfortunately, SGD has been shown to be ineffective for training transformer models, as shown in~\citet{adam_better_sgd1,adam_better_sgd2}.

Nevertheless, a recent study~\citet{anything_but_sgd} suggests a promising methodology: while SGDM generally does not work well with transformers, using SGDM for the majority of parameters and Adam for a selected subset can lead to effective training. 
This raises the question: Could a hybrid approach using SGD or signSGD instead of SGDM be viable?
If the key subset of parameters is handled by advanced algorithms, can the other parameters be trained effectively with state-free optimizers?

To address this question, we conducted an experiment on LLaMA-130M, where we utilized the Adam~\citep{kingma2014adam} for state-full parameters and signSGD \citep{signsgd-pmlr-v80-bernstein18a} for state-free parameters\footnote{See detailed description of the setup in~\Cref{app:pre-training_additional}}.
Once again we used Random projection and highlighted the result in the second part of~\Cref{tab:low_vs_full}.
Full-rank updates significantly enhance performance, approaching the efficiency of the memory-intensive Adam optimizer.
\textit{These findings underscore the potential of state-free algorithms for updating a substantial portion of the parameter space, paving the way for efficient and scalable optimization methods that deliver high performance without the significant memory costs traditionally associated with state-of-the-art optimizers.}

\section{Full-Rank Updates with GrAdient spLitting}\label{sec:method frugal}

\textbf{General framework.}
The setup outlined in the conclusion of~\Cref{sec:method full_rank} results in a general framework for memory-efficient optimization. 
It operates as follows: the entire space is partitioned into \textit{state-full} and \textit{state-free} subspaces.
The state-full subspace is updated using an advanced algorithm, while the state-free subspace is updated using a state-free method. 
After a certain number of steps, the state-full subspace is changed to better explore the optimization space. 
A formal description is presented in~\Cref{alg:Frugal}. 
 
We note that this framework allows for variation not only in the \textit{state-full} optimizer but also in the choice of \textit{projection} and \textit{state-free} optimizer.
However, determining the optimal state-free optimizer and the projection method onto the state-full subspace is not readily apparent. 
In this section, we strive to find the optimal configuration.

\textbf{State-free optimizer.} 
We conducted a preliminary experiment using different state-free algorithms to choose between SGD and signSGD \citep{signsgd-pmlr-v80-bernstein18a}. 
\Cref{tab:sgd_state_free} shows that signSGD outperforms SGD, leading us to favor signSGD. 
We attribute this performance to the similarities between signSGD and Adam~\citep{kingma2014adam}, as noted in \citet{sign_sgd_adam_similarity1, sign_sgd_adam_similarity2,sign_sgd_adam_similarity3}. 
Additionally, signSGD produces updates of similar magnitude to those generated by Adam, which simplifies the calibration of the learning rate for state-free parameters.

\textbf{Projection type.} When selecting a projection method, it is crucial to strike a balance between quality and memory efficiency. 
When using SVD decomposition for projection matrices, as in GaLore~\citep{zhao2024galore}, the method better preserves the information embedded in the gradient but requires additional memory for storing projection matrices and computational resources for performing the SVD.
To reduce computational demands, one could employ random coordinate projection denoted as RandK, but this requires additional memory or recomputation\footnote{See~\Cref{app:memory} for discussion on the memory requirements for different projection methods.}. 
A more structured alternative is to select not random entries but entire random columns or rows.
The most aggressive approach follows the method from BAdam, wherein an entire block is chosen as the state-full subspace.
The performance results obtained with all these variants are presented in the second part of~\Cref{tab:low_vs_full}.
SVD slightly outperforms both RandK and Block projections, demonstrating comparable performance.
Nonetheless, a downside is the increased compute and memory demand from SVD. 
Therefore, we opt for the blockwise selection, as it is the most memory-efficient --- requiring only the storage of active block indices.

In experiments in~\Cref{sec:exp}, we use a specific variant with AdamW as the State-Full optimizer and signSGD as the State-Free optimizer. 
We primarily employ blockwise projection but switch to column-wise projection when the number of parameters in any single block exceeds memory budget, as detailed in~\Cref{sec:exp fune-tuning}.
In addition, PyTorch-like pseudocode of our framework is presented in~\Cref{app:alg_pseudocode_additional}.

For Line 7, state projection, in Algorithm~\ref{alg:Frugal}, we note that if the projection does not change, i.e., $P_{k,i} = P_{k-1,i}$, then $P_{k,i}(P^{-1}_{k-1,i}(s)) = s$. 
Thus, we only need to project states when the projection changes from one round to another. 
However, our preliminary experiments with RandK selection showed that resetting states performs comparably to projection.
Therefore, we could replace this projection with state resetting when the projection changes, 
which also aligns with blockwise subspace selection. 
However, either resetting or projecting states is important since we want projected gradients and optimizer states to reside in the same space. 
For instance, GaLore ignores this step, which leads to degraded performance when projections are updated frequently; see~\Cref{app:reset_reproj} and~\Cref{sec:exp pre-training_ablation} for details.

\begin{algorithm}[t]
\caption{\ALG\;\texttt{(SGDM, SGD)}}
\label{alg: SGDM}
    \textbf{Input:} momentum weight $\beta\in [0,1)$, initialization $x^1\in\Rd$ and $m^0=0$, step sizes $\{\alpha^k>0\}_{k=1}^K$, momentum set $J_k \subset [d]$ for $k = 1, 2, \hdots $.
    \begin{algorithmic}[1]
    \FOR{$k=1,2, \hdots$}
        \STATE $\tilde{g}^{k}\leftarrow \nabla f_{\zeta^k}(x^{k})$
        \STATE $\tilde{m}_j^{k} \leftarrow (1-\beta)\tilde{g}_j^{k} + \beta \begin{cases}  \tilde{m}_j^{k-1} & \text{if } j \in J_k, \\ 
         0 & \text{otherwise;} \end{cases}$
        \STATE $\tilde{u}_j^{k} \leftarrow \begin{cases}  \tilde{m}_j^{k} & \text{if } j \in J_k, \\ \tilde{g}_j^{k} & \text{otherwise;} \end{cases}$
        \STATE $x^{k+1}\leftarrow x^k - \alpha^k \tilde{u}^k$
    \ENDFOR
    \end{algorithmic}
\end{algorithm}

\vspace{-5px}
\section{Theoretical Results}\label{sec:theory}

For the theoretical analysis, we consider the case where the $\textit{State-Free}$ optimizer is SGD and the $\textit{State-Full}$ optimizer is SGD with momentum (SGDM). For the projection, we use coordinate-wise projection. This special case of \ALG\ is provided in Algorithm~\ref{alg: SGDM}. We minimize the objective
\begin{align}
    \label{equ: objective} 
    \textstyle
    \min_{x \in \R^d} \big\{ f(x):=\E_{\zeta{^k}}[f_{\zeta{^k}} (x)]
    \big\},
\end{align}
where we access $f$ via a stochastic oracle that takes $x$ as input and returns $(f_{\zeta{^k}} (x), \nabla f_{\zeta{^k}} (x))$.
\subsection{Notation and Preliminaries}

We use $\|\cdot\|$ for the vector $\ell_2$-norm, and $\langle\cdot, \cdot\rangle$ stands for the dot product. 
Let $g^k$ denote the full gradient of $f$ at $x^k$, i.e., $g^k\coloneqq \nabla f(x^k)$, $\tilde{g}^k$ denote the stochastic gradient $\tilde{g}^k = \nabla f_{\zeta^k}(x^k)$ for random sample 
$\zeta^k,$
and $f^{*}\coloneqq \min_{x\in\Rd}f(x).$
We use subscript $j$ to denote the $j$-th coordinate.
We call a function L-smooth if it is continuously differentiable and its gradient is Lipschitz continuous:
\begin{equation}\label{equ: l-smooth definition}
    \|\nabla f(x) - \nabla f(y)\|\leq L\|x-y\|.
\end{equation}

\begin{table*}[t!]
\caption{Comparison of validation perplexity and memory estimation for various optimization methods across LLaMA model scales trained on C4. 
We also indicate the additional memory overhead introduced by the optimization algorithm. 
The values are calculated assuming that each float value occupies 4 bytes (float32). 
$\rho$ denotes the proportion of the Linear layer parameters in the state-full subspace.
Note that Embeddings, RMSNorms, and Output layer are always trained with AdamW.
}
\label{tab:pre-training_main_results}
\begin{center}
\begin{tabular}{l|cccc}
\toprule
& 60M & 130M & 350M & 1B \\
\hline
AdamW & 22.73 (0.43G) & 18.13 (1.00G) & 14.43 (2.74G) & 12.02 (9.98G) \\
\hline
GaLore, $\rho=0.25$ & 25.68 (0.30G) & 21.11 (0.54G) & 16.88 (1.10G)& 13.69 (3.41G)\\
BAdam, $\rho=0.25$ & 24.86 (0.29G) & 20.34 (0.52G) & 16.41 (1.05G)& 13.75 (3.23G)\\
\ALG,\;$\rho=0.25$ & \textbf{23.59} (0.29G) & \textbf{18.60} (0.52G) & \textbf{14.79} (1.05G)& \textbf{12.32} (3.23G)\\
\ALG,\;$\rho=0.0$ & 24.06 (0.24G) & 18.90 (0.37G) & 15.03 (0.49G)& 12.63 (0.98G)\\
\midrule
Training tokens & 20B & 20B & 24B & 30B\\
Number of iterations & 200k & 200k & 240k & 300k \\
\bottomrule
\end{tabular}
\end{center}
\vskip -0.15in
\end{table*}

\begin{assumption}
\label{assump: standard assumption}
We make the following assumptions, which are standard in non-convex stochastic optimization; see \citep{liu2020improved}.
\begin{enumerate}  
    \item \textbf{Smoothness:} The objective $f(x)$ in \eqref{equ: objective} is $L$-smooth~(\cref{equ: l-smooth definition}).
    \item \textbf{Unbiasedness:} At each iteration $k$, $\tg^k$ satisfies $\E_{\zeta^k} [\tg^k] = g^k$.
    \item \textbf{Independent samples:} The random samples $\{\zeta^k\}_{k=1}^{\infty}$ are independent.
    \item \textbf{Bounded variance:} The variance of $\tg_j^k$ with respect to $\zeta^k$ satisfies $\mathrm{Var}_{\zeta^k}(\tg_j^k) = \E_{\zeta^k}[\|\tg_j^k - g_j^k\|^2]\leq \sigma_j^2$ for some $\sigma_j^2>0$. We denote $\sigma^2 = \sum_{j=1}^d \sigma^2_j.$
\end{enumerate}
\end{assumption}

Finally, we define the probability that index $j \in J_k$ is selected, conditioned on the prior iteration $k-1$, as $p^k_j := \Pr_{k-1}[j \in J_k].$ Other useful quantities are $p^k_{\max} := \max_{j \in [d]}\{ p^{k}_j\}$ and $p^k_{\min} := \min_{j \in [d]}\{ p^{k}_j\}$.

\subsection{Convergence of Algorithm~\ref{alg: SGDM}}
\label{sec: SGDM theory}

Below, we present the main convergence theorem.

\begin{theorem}
\label{thm: nonconvex constant}
Let Assumption \ref{assump: standard assumption} hold and $\alpha^k = \alpha\leq \frac{1-\beta}{L(4-\beta+\beta^2)}$. Then, the iterates of Algorithm~\ref{alg: SGDM} satisfy
\begin{align*}
\label{equ: SGDM bound}
\textstyle
\frac{1}{k}\sum_{i=1}^k\E[\|g^i\|^2]
&= \mathcal{O}\biggl(\frac{f(x^1)-f^*}{k\alpha} + \\
&+ L\alpha\sigma^2 \Bigl(1 + \frac{\hat{p}^k_{\max}(1 - \Bar{p}^k_{\min})\beta}{(1-\beta)}\Bigr)\biggr),
\end{align*}
where $\Bar{p}^k_{\min} = \frac1k\sum_{i=1}^k\Bar{p}^i_{\min}$ and $\hat{p}^k_{\max} = \max_{i \in [k]} \{p^i_{\max}\}$.
\end{theorem}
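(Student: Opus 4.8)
The plan is to run the standard descent-lemma argument for SGD with momentum, but carefully tracking the two regimes coordinate-by-coordinate: coordinates in $J_k$ carry a momentum buffer, while coordinates outside $J_k$ use the raw stochastic gradient. First I would apply $L$-smoothness to $x^{k+1} = x^k - \alpha \tu^k$ to get
\begin{equation*}
f(x^{k+1}) \leq f(x^k) - \alpha \langle g^k, \tu^k\rangle + \tfrac{L\alpha^2}{2}\|\tu^k\|^2,
\end{equation*}
and then take conditional expectation given iteration $k-1$ (so that both $J_k$ and $\zeta^k$ are averaged out). The cross term $\E[\langle g^k,\tu^k\rangle]$ is the crux: on the state-free coordinates $\E[\tu_j^k] = g_j^k$ directly, while on the state-full coordinates $\tu_j^k = \tm_j^k$ is a geometric-type average of past stochastic gradients, so I would introduce the standard auxiliary/virtual sequence (e.g. $z^k = x^k - \frac{\beta}{1-\beta}\alpha(\text{momentum correction})$, restricted to the momentum block) or, alternatively, unroll $\tm_j^k = (1-\beta)\sum_{t} \beta^{\,k-t}\tg_j^t$ over the consecutive block of iterations for which $j$ stayed in $J$. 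The key point is that the expectation $p_j^k$ of the indicator $\mathbbm{1}[j\in J_k]$ is what governs how much "momentum mass" coordinate $j$ accumulates, which is where $\bar p_{\min}^k$ and $\hat p_{\max}^k$ will enter.

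Next I would bound the second-moment term $\E\|\tu^k\|^2$. Split the sum over coordinates into $j\in J_k$ and $j\notin J_k$. For $j\notin J_k$, $\E[\|\tg_j^k\|^2] \le \|g_j^k\|^2 + \sigma_j^2$. For $j\in J_k$, expand $\|\tm_j^k\|^2$; using Jensen / the convexity of $\|\cdot\|^2$ against the geometric weights $(1-\beta)\beta^{\,\ell}$ one gets $\|\tm_j^k\|^2 \le (1-\beta)\sum_\ell \beta^\ell \|\tg_j^{k-\ell}\|^2$ (over the relevant block), and then $\E\|\tg_j^{k-\ell}\|^2 \le \|g_j^{k-\ell}\|^2 + \sigma_j^2$. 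Summing the geometric series in $\beta$ produces the $\sigma^2/(1-\beta)$-type term, and the $\beta/(1-\beta)$ factor multiplied by $\hat p_{\max}^k$ is what yields the extra additive piece $\frac{\hat p_{\max}^k(1-\bar p_{\min}^k)\beta}{1-\beta}$ in the statement — the factor $(1-\bar p_{\min}^k)$ should come from the fact that a coordinate only keeps accumulating momentum while it stays selected, so the "effective horizon" of the geometric sum is damped by the probability $1-p_j^k$ of leaving $J$ at the next step (a coordinate that is almost always selected, $p_j^k\approx 1$, behaves like plain SGDM and the correction vanishes, matching $\bar p_{\min}^k\to 1$).

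Then I would assemble: telescoping $f(x^{k+1})-f(x^k)$ from $1$ to $k$, the cross term contributes $-\alpha\sum_i \E\|g^i\|^2$ plus lower-order momentum-bias terms that are controlled by the step-size condition $\alpha \le \frac{1-\beta}{L(4-\beta+\beta^2)}$ (this is exactly the constant chosen so that the $\|g^i\|^2$ coefficients coming out of the second-moment bound and the momentum-bias bound can be absorbed, leaving a positive $\Theta(\alpha)$ coefficient on $\sum_i\E\|g^i\|^2$ — I expect the $4-\beta+\beta^2$ to be the worst-case sum of a few such coefficients). Rearranging and dividing by $k\alpha$ gives $\frac1k\sum_i \E\|g^i\|^2 \le \mathcal{O}\big(\frac{f(x^1)-f^*}{k\alpha} + L\alpha\sigma^2(1+\frac{\hat p_{\max}^k(1-\bar p_{\min}^k)\beta}{1-\beta})\big)$.

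The main obstacle I anticipate is handling the momentum term cleanly when a coordinate's membership in $J_k$ changes over time — line 3 of Algorithm~\ref{alg: SGDM} resets $\tm_j^{k}$ to $(1-\beta)\tg_j^k$ whenever $j\notin J_k$, so the momentum buffer of coordinate $j$ is not a clean geometric average over all history but over the current maximal run of consecutive selections. Making the bookkeeping rigorous — in particular getting the bias term $-\alpha\langle g^k, \tm^k - \E[\text{something}]\rangle$ telescoped correctly across these resets and extracting precisely the factor $(1-\bar p_{\min}^k)$ rather than a crude bound of $1$ — is the delicate part; I would likely do it by conditioning on the selection pattern and using $p_j^k$ to control the tail of the geometric sum at each step, then take expectations over the pattern last.
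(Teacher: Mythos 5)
There is a genuine gap, and it sits exactly where you flag the "delicate part." Your plan controls the momentum bias by "telescoping ... lower-order momentum-bias terms that are controlled by the step-size condition," but the step-size condition alone cannot absorb them: after unrolling $\tm_j^k=(1-\beta)\sum_{i\ge t_j^k}\beta^{k-i}\tg_j^i$, the bias of $\tm^k$ relative to $g^k$ is bounded (via smoothness) by weighted sums of \emph{past successive iterate differences} $\|x^{l+1}-x^l\|^2$ reaching arbitrarily far back within each run, and these must be cancelled against future descent. The paper does this with a Lyapunov function $L^k=f(z^k)-f^*+\sum_{i\ge1}c_i\|x^{k+1-i}-x^{k-i}\|^2$ whose coefficients $c_i$ are chosen recursively so that the bias contributions (Lemma~\ref{lem: difference}, which carries the factor $p^k_{\max}$ because only coordinates in $J_k$ carry momentum bias) are exactly absorbed; solving for $c_1$ gives $c_1\lesssim \hat{p}^k_{\max}\,\beta L/(1-\beta)$ under the stated step size. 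It also works on the reformulated iteration (Algorithm~\ref{alg: SGDM_Theory}) with the coordinate-wise virtual iterate $z^k$ satisfying $z^{k+1}-z^k=-\alpha\tg^k$, rather than applying smoothness at $x^k$ as you primarily propose. Without this weighted-absorption device (or an equivalent exchange-of-summation argument), your telescoping does not close.

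Second, your anticipated source of the factor $(1-\Bar{p}^k_{\min})$ is wrong in mechanism, and following it would not produce the stated bound. The momentum coordinates' second moment is \emph{not} where the $\beta/(1-\beta)$ blow-up lives: the paper's Lemma~\ref{lem: m^k} shows $\E[\|\tm^k-m^k\|^2]\le \frac{1-\beta}{1+\beta}\sigma^2$ regardless of the run length, so "damping the effective horizon by the leaving probability" yields nothing of order $\frac{\beta}{1-\beta}\sigma^2$. In the actual proof the extra term arises from an interaction: the update norm $\E[\|x^{k+1}-x^k\|^2]$ contains the plain-SGD (unselected) coordinates, whose contribution is bounded by $(1-p^k_{\min})(\sigma^2+\E[\|g^k\|^2])$, and this is multiplied by the Lyapunov coefficient $c_1\lesssim \hat{p}^k_{\max}\beta L/(1-\beta)$ inherited from the bias lemma — giving the product $\hat{p}^k_{\max}(1-\Bar{p}^k_{\min})\frac{\beta}{1-\beta}L\alpha\sigma^2$. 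So $\hat{p}^k_{\max}$ and $(1-\Bar{p}^k_{\min})$ enter through two different objects (bias of selected coordinates, variance of unselected ones) coupled by the Lyapunov analysis; your sketch attributes both to the selected coordinates' geometric average and would miss this structure. The surrounding scaffolding of your plan (descent lemma, coordinate split, unbiasedness on state-free coordinates, Jensen on the geometric weights) is fine, but these two missing ingredients are the substance of the proof.
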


The proof is deferred to Appendix~\ref{app:theory}. Let us analyze the obtained result. Firstly, if $J_k = [d]$ or $J_k = \emptyset$, Algorithm~\ref{alg: SGDM} becomes SGDM and SGD, respectively. In this case, we have $\Bar{p}^k_{\min} = 1$ for SGDM and $\hat{p}^k_{\max} = 0$ for SGD. Therefore, the resulting rate is $\mathcal{O}\left(\nicefrac{1}{k\alpha} + L\alpha\sigma^2\right)$, which recovers the best-known rate for both SGD and SGDM under these assumptions~\cite{liu2020improved}. Furthermore, if at each step each coordinate is sampled independently with probability $p$, we have $\Bar{p}^k_{\min} = \hat{p}^k_{\max} = p$. Therefore, we recover the same rate if $p = \mathcal{O}\left(1 - \beta\right)$ or $p = \mathcal{O}\left(\beta\right)$. Finally, in the worst case (e.g., $J_k$ is deterministic and $0<|J_k|<d$), we have $\Bar{p}^k_{\min} = 0$ and $\hat{p}^k_{\max} = 1$. Thus, the rate becomes $\mathcal{O}\left(\nicefrac{1}{k\alpha} + \nicefrac{L\alpha\sigma^2}{1 - \beta}\right)$, which is worse by a factor of $\nicefrac{1}{1 - \beta}$. However, this is expected since the bias from momentum is not outweighed by the variance reduction effect, as only the coordinates with momentum enjoy reduced variance; see Lemmas~\ref{lem: m^k} and \ref{lem: difference} in the appendix for details.

\vspace{-5px}
\section{Pre-training experiments}\label{sec:exp}

In this section, we evaluate the performance of \ALG\ on the  language models pre-training. 

\subsection{Comparison to existig memory-efficient algorithms}\label{sec:exp pre-training_main_results}
To begin, we compare our framework with existing memory-efficient methods across four sizes of LLaMA-based architectures: 60M, 130M, 350M, and 1B.

\textbf{Setup.} 
The core setup for pre-training is taken from \citet{zhao2024galore}. 
We utilize LLaMA-based~\citep{llama} model architectures and train them on the Colossal Clean Crawled Corpus (C4) dataset~\citep{c4}.
The C4 dataset is intended for pre-training, making this setup a good approximation of real-world applications. 
A detailed description of the setup can be found in~\Cref{app:pre-training_additional}. 

\begin{table}[t!]
\vskip -0.07in
    \caption{Perplexity and memory consumption (weights, gradients and optimizer states) of different size LLaMA models pre-trained on C4 for 100k iterations (10B tokens) using AdamW with pure bf16 of mixed precision. 
    }
    \label{tab:mp_vs_bf16}
    \centering
    \begin{tabular}{c|c|c|c|c|c|c|c|c}
        \toprule
         Model size & Format & Memory & Perplexity \\
         \midrule
        175M & Mixed Precision & 2.0GB & \textbf{17.43} \\
        350M & Pure bf16 & 2.1GB & 17.75 \\
        
        \midrule
        350M & Mixed Precision & 4.2GB & \textbf{15.16} \\
        1.3B & Pure bf16 & 7.7GB & 16.51 \\
        \bottomrule
    \end{tabular}
\vskip -0.07in
\end{table}

However, we made several critical modifications compared to~\citet{zhao2024galore} to align the experimental setup with practical training scenarios. 
Below, we discuss each modification and provide a detailed rationale for these decisions.

\begin{table}[t!]
    \caption{Perplexity of LLaMA-130M models pre-trained on C4 for 100k iterations (10B tokens).
    The leftmost column indicates the modules moved to the state-free set and trained using signSGD. 
    The results show that \textbf{Output layer}, unlike Embeddings and RMSNorms, are exceptionally responsive to the choice of optimization algorithm from AdamW to signSGD.}
    \vspace{-9px}
    \label{tab:zero_density}
    \begin{center}
    \begin{tabular}{c|c}
    \toprule
    \textbf{State-free modules} & \textbf{Perplexity $\downarrow$} \\
    \midrule
    Linear (\ALG\; $\rho=0.0$ from~\Cref{tab:pre-training_main_results}) & 20.02 \\
    \hline
    Linear, \textit{RMSNorms} & 20.07 \\
    Linear, \textit{Embeddings} & 20.48 \\
    Linear, \textit{Embeddings, RMSNorms} & 20.55 \\
    Linear, \textbf{Output layer} & 34.66 \\
    \bottomrule
    \end{tabular}
    \end{center}
    \vskip -0.1in
\end{table}

\begin{itemize}[leftmargin=*, itemsep=0pt, topsep=0pt]
    \item \textbf{Training Duration.} 
    The training approach in \citet{zhao2024galore} aligns with the empirical rule from scaling laws~\citep{chinchilla}, which suggests using approximately 20 times the size of the model in tokens for training. 
    However, this number of tokens is far from achieving convergence. 
    In practice, models are typically trained for significantly longer periods~\citep{llama2,zhang2024tinyllama}.
    One reason for this discrepancy is that the original scaling laws do not account for the inference of the model after training~\citep{beyondchinchilla}. 
    For our experiments, we chose 200k steps for the 60M and 130M models, 240k for the 350M model, and 300k for the 1B and 3B models.
    \item \textbf{Mixed Precision.} 
    Pure 16-bit training has been shown to potentially compromise model convergence and accuracy~\citep{bf16}. 
    This degradation occurs because formats such as float16 or bfloat16, used to store master weights, lack the numerical precision needed for accurate and fine-grained weight updates.
    Consequently, mixed precision training has become a more common approach for training language models~\citep{le2023bloom,almazrouei2023falcon}.
    Moreover, even when training with fp8, the master weights are typically stored in fp32 format~\citep{liu2024deepseek}.

    Our experimental results strongly support the importance of precision choice: in~\Cref{tab:mp_vs_bf16} we show that adopting pure bf16 training led to such significant performance degradation that doubling the model size failed to compensate for it, effectively negating any memory benefits from reduced precision storage.
    While training in pure 16-bit format is also possible, stochastic rounding~\citep{stochastic_rounding, bf16} is often employed to mitigate the aforementioned issue. 
    Given that the goal of this research is to identify the optimal optimization algorithm, we deemed it more appropriate to compare optimizers in a transparent and stable setup that does not require auxiliary tricks. 
    Hence, we primarily used Mixed Precision training for its illustrative value in understanding each method's potential. 
    However, for completeness, we also conducted experiments in pure bfloat16 format, detailed in our ablation study~\Cref{sec:exp pre-training_ablation}.
\end{itemize}


\textbf{Baselines.} We use the following methods as baselines:

\begin{table}[t!]
\caption{
Pre-training LLaMA 3B on C4 dataset for 300K steps. Validation perplexity for different iterations is reported.
}
\label{tab:llama_3b}
\begin{center}
\begin{tabular}{l|ccc}
\toprule
\textbf{Method} & 100k & 200k & 300k \\
\hline
AdamW &  14.2 & 12.25 & 10.93 \\
\hline
\ALG,\;$\rho=0.25$ & 14.33 & 12.42 & 11.07 \\
\ALG,\;$\rho=0.0$ & 14.78 & 12.76 & 11.35 \\
\bottomrule
\end{tabular}
\end{center}
\vskip -0.1in
\end{table}
\begin{table*}[t!]
    \caption{\small{Evaluating \ALG\;for memory-efficient fine-tuning RoBERTa-Base on GLUE benchmark. Results represent the mean and standard deviation across 3 independent runs. Upper $\uparrow$ is better.}}
    \label{tab:fine-tuning}
    \begin{center}
    \footnotesize
    \addtolength{\tabcolsep}{-4pt}
    \begin{tabular}{l|cc|cccccccc|c} %
    \toprule
               \textbf{Method} & \textbf{Modules} & \textbf{Rank} & 
               \textbf{CoLA} & \textbf{STS-B} & \textbf{MRPC} & \textbf{RTE} & \textbf{SST2} & \textbf{MNLI} & \textbf{QNLI} & \textbf{QQP} & \textbf{Avg} \\
    \midrule
    Full-parameter & --- & --- & 
    63.6 & \textbf{91.2} & \textbf{90.2} & 78.7 & 94.8 & \textbf{87.6} & 92.8 & \textbf{91.9} & 86.4 \\

    \color{black}
    LoRA & QV & 8 & 
    63.8\textsubscript{$\pm$.6} & 90.9\textsubscript{$\pm$.1} & 89.1\textsubscript{$\pm$.4} & 79.2\textsubscript{$\pm$1.1} & 94.8\textsubscript{$\pm$.2} & \textbf{87.6\textsubscript{$\pm$.2}} & 93.1\textsubscript{$\pm$.1} & 90.6\textsubscript{$\pm$.0} & 86.1 \\
    \midrule
    GaLore & All & 8 
    & 60.0\textsubscript{$\pm$.2} & 90.8\textsubscript{$\pm$.1} & 89.0\textsubscript{$\pm$.7} & 79.7\textsubscript{$\pm$.9} & \textbf{94.9\textsubscript{$\pm$.5}} & \textbf{87.6\textsubscript{$\pm$.1}} & \textbf{93.3\textsubscript{$\pm$.1}} & 91.1\textsubscript{$\pm$.1} & 85.8\\
    \midrule
    GaLore & QV & 8 
    & 56.1\textsubscript{$\pm$.8} & 90.8\textsubscript{$\pm$.2} & 88.1\textsubscript{$\pm$.3} & 74.7\textsubscript{$\pm$1.9} & 94.3\textsubscript{$\pm$.1} & 86.6\textsubscript{$\pm$.1} & 92.6\textsubscript{$\pm$.1} & 89.4\textsubscript{$\pm$.1} & 84.1 \\
    \ALG\;& QV & 8 
    & 
    64.5\textsubscript{$\pm$.7} & 91.1\textsubscript{$\pm$.1} & 89.2\textsubscript{$\pm$.3} & \textbf{82.4\textsubscript{$\pm$.9}} & 94.8\textsubscript{$\pm$.2} & 87.4\textsubscript{$\pm$.1} & 92.8\textsubscript{$\pm$.1} & 91.4\textsubscript{$\pm$.1} & \textbf{86.7} \\
    \ALG\;& None & 0 
    & \textbf{64.8\textsubscript{$\pm$.5}} & 91.1\textsubscript{$\pm$.1} & 89.1\textsubscript{$\pm$.3} & 81.6\textsubscript{$\pm$.6} & \textbf{94.9\textsubscript{$\pm$.2}} & 87.3\textsubscript{$\pm$.1} & 92.8\textsubscript{$\pm$.1} & 91.3\textsubscript{$\pm$.1} & 86.6 \\
    \bottomrule
    \end{tabular}
    \end{center}
    \vspace{-1.2em}
\end{table*}

\begin{itemize}[leftmargin=*, itemsep=1pt, topsep=0pt]
    \item \textbf{Full-rank Training.}
    Training using memory-inefficient AdamW~\citep{loshchilov2017decoupled}. 
    Weights, gradients, and 
    statistics are stored and computed for all parameters.
    This serves as an upper bound for model performance.
    \item \textbf{GaLore.} \citet{zhao2024galore} proposed GaLore, a memory-efficient optimization algorithm that uses a low-rank projection of gradient matrices $\mG$. 
    Every $T$ steps, the current gradient matrix $\mG_t$ is used to compute the projection matrix $\mP$ via SVD decomposition. 
    The gradient is then projected onto the low-rank space, where the optimization step is performed.
    Subsequently, the resulting low-rank update is projected back into the full-rank space and added to the weights $\mW$.
    \item \textbf{BAdam.} \citet{luo2024badam} proposed a block coordinate descent (BCD)-type optimization method termed BAdam. 
    The parameters are divided into blocks, which are then updated one by one using AdamW.
    The optimized block is changed every $T$ steps. 
    Although this method was initially proposed only for fine-tuning, it is the closest method to our \ALG. 
    Unlike BAdam, in our algorithm, state-free blocks are not frozen but are updated using signSGD.
    \item \textbf{Other Algorithms.} Among other relevant methods, ReLoRA~\citep{relora}, MicroAdam~\citep{modoranu2024microadam}, Fira~\citep{chen2024fira}, LDAdam~\citep{robert2024ldadam}, and Adamem~\citep{vyas2024adamem} can also be highlighted. 
    However, we did not include them for comparison here for the following reasons: 
    1. \textit{ReLoRA}: This method was evaluated in~\citep{zhao2024galore}, where it significantly underperformed compared to GaLore.
    2. \textit{MicroAdam}: Its current implementation only supports bfloat16 master weights, whereas our main experiments conducted with mixed precision.
    3. \textit{Fira, LDAdam, and Adamem:} These methods are concurrent works that were published during the final stages of our work. 
    Accordingly, the majority of our experiments were completed before we became aware of these recent developments.
\end{itemize}


\textbf{Main results.} The results of our experiments are presented in~\Cref{tab:pre-training_main_results}, which includes both validation perplexity and memory footprint estimations for each method. 
We compared all memory-efficient methods under the same memory budget with a density $\rho=0.25$. 
Here, $\rho$ refers to the proportion of Linear layer parameters belonging to the state-full subspace. Similarly to GaLore, non-Linear modules (Embeddings, RMSNorms, Output layer) are optimized with AdamW. See~\Cref{app:pre-training_additional} for details.

We conducted a grid search to determine the optimal learning rate for AdamW, which we then applied to \ALG\ and BAdam~\citep{luo2024badam}. 
For GaLore~\citep{zhao2024galore}, we found that using this same learning rate produced better results than the originally suggested rate.
This discrepancy might be attributed to our experiments involving a significantly larger number of training steps than those for which GaLore's original learning rate was optimized.

\Cref{tab:pre-training_main_results} demonstrates that \ALG\ significantly outperforms memory-efficient baselines across all model sizes with the same memory budget, coming close to the performance of AdamW.

\vspace{-4px}

\subsection{Zero-density training} 
\Cref{tab:pre-training_main_results} also reveals a surprising result: \ALG\;with $\rho = 0.0$
outperforms both GaLore and BAdam, even when these competing methods use a higher density of $\rho = 0.25$. 
Essentially, 
for \texttt{FRUGAL} with $\rho=0.0$,
the parameters are divided into two parts --- a state-full part consisting of the Embeddings, RMSNorms, and Output layer, and a state-free part consisting of all other parameters. 
This division remains fixed throughout the training. 
We conducted additional experiments to determine the maximum subset of parameters that can be trained with a state-free optimizer without significant quality degradation. 
We systematically moved different combinations of the Embeddings, RMSNorms, and Output layer from the state-full to the state-free set and observed the results during the training of LLaMA-130M. 
\Cref{tab:zero_density} reveals that the Output layer demonstrates a dramatically higher sensitivity, with changes to its optimizer resulting in severe performance degradation.
This finding aligns with results from~\citet{anything_but_sgd}, where the authors demonstrated that most parameters can be trained using SGDM, but the Output layer require training with AdamW.

\subsection{LLaMA 3B training} \label{sec:3b_training}

To demonstrate the practical viability of our method for large-scale applications, we evaluated \ALG\ against AdamW on the pre-training of the LLaMA 3B model. 
Due to computational constraints, we conducted a single training run of 300k steps using a cosine learning rate scheduler with 10\% warmup steps. We used a learning rate of 5e-4, weight decay of 0.1, and gradient clipping of 1.0, with other hyperparameters consistent with~\Cref{app:pre-training_additional}. The results in~\Cref{tab:llama_3b} confirm that \ALG\ successfully scales to billion-parameter models without performance degradation, making it a viable option for industrial-scale applications.

\subsection{Ablation study}\label{sec:exp pre-training_ablation}

\begin{table*}[t!]
    \caption{Accuracy $\uparrow$ comparison of memory-efficient methods on LLaMA 3.1-8B across 8 commonsense reasoning tasks.}
    \label{tab:commonsense fine-tuning}
    \begin{center}
    \footnotesize
    \addtolength{\tabcolsep}{-4pt}
    \begin{tabular}{l|c|cccccccc|c} %
    \toprule
               \textbf{Method} & \textbf{Rank} & 
               \textbf{BoolQ} & \textbf{PIQA} & \textbf{SIQA} & \textbf{HellaSwag} & \textbf{WinoGrande} & \textbf{ARC-e} & \textbf{ARC-c} & \textbf{OBQA} & \textbf{Avg} \\
    \midrule
    LoRA & 32 & 73.39 & 88.19 & 79.48 & 95.13 & 83.50 & 90.24 & 79.52 & 84.00 & 84.18 \\
    \midrule
    GaLore & 32 & 74.13 & 88.36 & 78.97 & 94.88 & 83.82 & 91.08 & 80.2 & 85.80 & 84.65 \\
    \midrule
    
    \ALG\;& 32 
    & \textbf{74.25} & \textbf{88.47} & 79.12 & \textbf{95.15} & 86.11 & \textbf{91.37} & 79.86 & 84.80 & 84.89 \\
    \midrule
    \ALG\;&  0 
    & 71.44 & \textbf{88.47} & \textbf{80.40} & 94.74 & \textbf{86.35} & 90.95 & \textbf{81.14} & \textbf{86.00} & \textbf{84.94} \\
    \bottomrule
    \end{tabular}
    \end{center}
    \vspace{-1.5em}
\end{table*}

We also conducted additional experiments to verify the robustness of our framework to various hyperparameters. 

First, we began by evaluating different model architectures. Experiments with GPT-2 124M~\citep{gpt2} in~\cref{tab:pre-training_gpt} show that \ALG\ maintains its strong advantage over memory-efficient baselines, albeit with a somewhat wider gap to AdamW.
Second, an ablation study on the state-full subspace update frequency $T$ in~\Cref{tab:update_gap} shows that the performance keeps improving up to $T = 200$. 
We note that, unlike in~\citet{zhao2024galore}, the perplexity does not decrease significantly even when reducing the update frequency to $T=10$ ($\sim0.2$ drop vs. $\sim 4.$ drop for GaLore). A detailed explanation for this result can be found in~\Cref{app:reset_reproj}.
After that, when using other schedulers, the performance gap between \ALG\;and baselines remains consistent, as shown in~\Cref{tab:sch_constant,tab:sch_one_cycle}.
\Cref{tab:pre-training beta0.95} shows that the same holds for $\beta_2=0.95$ --- another popular value for the second moment decay parameter in AdamW-like methods.
\ALG\ also provides improvement over baselines for other state-full optimizers, as can be seen in experiments with Lion~\citep{lion} presented in~\Cref{tab:lion_state_full}.
Then, the results of the training in pure bfloat16 are presented in~\Cref{tab:bf16}, demonstrating consistency with our main experiments in \Cref{tab:pre-training_main_results}, i.e., \ALG\;significantly outperforms the baselines across these variations.
We also conducted experiments to show how perplexity changes with varying $\rho$, and the results are presented in~\Cref{tab:rho_ablation}.
Finally, we conducted an experiment to compare different strategies for selecting state-full blocks during training. 
The results in~\Cref{tab:block_strategy} show that there is no significant difference between random and structured block selection.

These experimental results validate that our framework's superiority is resilient to hyperparameter variations.

\section{Fine-tuning experiments}\label{sec:exp fune-tuning}


\subsection{Fine-tuning RoBERTa on GLUE}

We evaluated the performance of our framework in memory-efficient fine-tuning using the GLUE benchmark~\citep{wang2018glue}, a widely-used collection of tasks for evaluating language models. 
Following the approach from~\citet{zhao2024galore}, we fine-tuned RoBERTa-base~\citep{liu2019roberta} using LoRA~\citep{lora} and GaLore as baselines for comparison. 
We adhered to the setup described in LoRA, where low-rank updates of rank 8 were applied only to the Q and V matrices. See detailed description in~\Cref{app:fine-tuning_additional}.

For this experiment we opted for columnwise selection of active parameters.
This transition from blockwise to columnwise selection was necessary to maintain comparable memory usage across methods, as the number of trainable parameters in LoRA with rank 8 is approximately $2.5$ times fewer than the number of parameters in any RoBERTa matrix. 
For the same reason, we did not include comparisons with BAdam~\citep{luo2024badam} in this setup.

The results are presented in~\Cref{tab:fine-tuning}. 
Since the LoRA setup adds trainable adapters only to the Q and V matrices, while the GaLore code uses all modules as projectable parameters, we conducted experiments in both setups. 
The results demonstrate that \ALG\ significantly outperforms GaLore and shows comparable results to LoRA.

As in~\Cref{sec:exp pre-training_main_results}, we conducted additional experiments with \ALG\ using $\rho = 0.0$. 
In this setup, only the classification head is trained using AdamW, while the embedding parameters remain frozen, and the remaining parameters are trained using signSGD. 
The results demonstrate that this training approach barely compromises performance compared to \ALG\ with rank 8, and still outperforms GaLore. 

Similar to our findings in~\Cref{sec:exp pre-training_main_results}, we observe that the classification head parameters are particularly sensitive to the choice of optimizer, which can be seen in~\Cref{tab:fine-tuning-zero_density} where the model's performance significantly deteriorates when using signSGD for classification head optimization.

\subsection{Fine-tuning LLaMA on commonsense reasoning}
In addition to our experiments with RoBERTa, we conducted experiments on LLM fine-tuning to evaluate our framework's performance in this practically important area.
To assess this capability, we chose LLaMA 3.1-8B~\citep{llama3} and commonsense reasoning benchmark, as this domain represents a fundamental capability requiring both factual knowledge and logical inference.
This benchmark includes 8 subtasks each containing its own training and test splits.
Following the setup from~\citet{llm-adapters} we train the model on a single combined Commonsense170K dataset~\cite{llm-adapters}.
We refer readers to the original paper for detailed descriptions of these tasks and the construction of the Commonsense170K dataset.

Following the experimental protocol from~\citet{llm-adapters}, we apply memory-efficient methods to the same parameter subsets: the Q, K, V, Up, and Down projection matrices. 
We used the same hyperparameter configuration as in the original work, except for the learning rate, which we varied across [5e-6, 1e-5, 2e-5, 5e-5, 1e-4, 2e-4] for each algorithm to ensure optimal performance. 
The results in~\cref{tab:commonsense fine-tuning} show that FRUGAL again slightly outperforms both LoRA and GaLore in the average accuracy across all 8 tasks. 
Remarkably, our method attains this advantage even with $\rho=0$ (effectively signSGD), requiring zero memory for optimizer state storage while delivering better accuracy.

\vspace{-5px}
\section{Conclusion}\label{sec:conclusion}

In this work, we introduce a new memory-efficient optimization framework, \ALG. 
Within this framework, the optimization space is divided into two subspaces: the first is updated using a state-full algorithm such as Adam, while the second is updated using a state-free algorithm such as signSGD. 
We prove theoretical convergence guarantees for our framework with SGDM serving as the state-full algorithm and SGD as the state-free algorithm. 
In experiments involving pre-training and fine-tuning of language models, \ALG\ outperforms other approaches.

\section*{Acknowledgements}

The work on the final version was conducted at Moscow Institute of Physics and Technology and was supported by a grant for research center in the field of artificial intelligence, provided by the Ministry of Economic Development of the Russian Federation (agreement No. 139-15-2025-013, dated June 20, 2025, subsidy identifier 000000C313925P4B0002). The work was partially conducted while Philip Zmushko visited Mohamed bin Zayed University of Artificial Intelligence (MBZUAI).

\section*{Impact Statement}

Our work makes large-scale model training more accessible to the broader research community by reducing memory requirements. This reduction in computational demands not only decreases the financial cost of training but also potentially lowers the environmental impact. The proposed methods enable researchers with limited resources to participate in large-scale machine learning research, potentially accelerating progress in the field through wider community involvement.




\bibliography{icml2025}

\begin{thebibliography}{61}
\providecommand{\natexlab}[1]{#1}
\providecommand{\url}[1]{\texttt{#1}}
\expandafter\ifx\csname urlstyle\endcsname\relax
  \providecommand{\doi}[1]{doi: #1}\else
  \providecommand{\doi}{doi: \begingroup \urlstyle{rm}\Url}\fi

\bibitem[Almazrouei et~al.(2023)Almazrouei, Alobeidli, Alshamsi, Cappelli, Cojocaru, Debbah, Goffinet, Hesslow, Launay, Malartic, et~al.]{almazrouei2023falcon}
Almazrouei, E., Alobeidli, H., Alshamsi, A., Cappelli, A., Cojocaru, R., Debbah, M., Goffinet, {\'E}., Hesslow, D., Launay, J., Malartic, Q., et~al.
\newblock The falcon series of open language models.
\newblock \emph{arXiv preprint arXiv:2311.16867}, 2023.

\bibitem[Balles \& Hennig(2018)Balles and Hennig]{sign_sgd_adam_similarity1}
Balles, L. and Hennig, P.
\newblock Dissecting adam: The sign, magnitude and variance of stochastic gradients.
\newblock In \emph{International Conference on Machine Learning}, pp.\  404--413. PMLR, 2018.

\bibitem[Balles et~al.(2020)Balles, Pedregosa, and Roux]{sign_sgd_adam_similarity2}
Balles, L., Pedregosa, F., and Roux, N.~L.
\newblock The geometry of sign gradient descent.
\newblock \emph{arXiv preprint arXiv:2002.08056}, 2020.

\bibitem[Bernstein et~al.(2018)Bernstein, Wang, Azizzadenesheli, and Anandkumar]{signsgd-pmlr-v80-bernstein18a}
Bernstein, J., Wang, Y.-X., Azizzadenesheli, K., and Anandkumar, A.
\newblock sign{SGD}: Compressed optimisation for non-convex problems.
\newblock In Dy, J. and Krause, A. (eds.), \emph{Proceedings of the 35th International Conference on Machine Learning}, volume~80 of \emph{Proceedings of Machine Learning Research}, pp.\  560--569. PMLR, 10--15 Jul 2018.
\newblock URL \url{https://proceedings.mlr.press/v80/bernstein18a.html}.

\bibitem[Brown(2020)]{tasks1}
Brown, T.~B.
\newblock Language models are few-shot learners.
\newblock \emph{arXiv preprint arXiv:2005.14165}, 2020.

\bibitem[Chen et~al.(2016)Chen, Xu, Zhang, and Guestrin]{gradient_checkpointing}
Chen, T., Xu, B., Zhang, C., and Guestrin, C.
\newblock Training deep nets with sublinear memory cost.
\newblock \emph{arXiv preprint arXiv:1604.06174}, 2016.

\bibitem[Chen et~al.(2024{\natexlab{a}})Chen, Feng, Li, Lai, Yue, Yuan, and Wang]{chen2024fira}
Chen, X., Feng, K., Li, C., Lai, X., Yue, X., Yuan, Y., and Wang, G.
\newblock Fira: Can we achieve full-rank training of llms under low-rank constraint?
\newblock \emph{arXiv preprint arXiv:2410.01623}, 2024{\natexlab{a}}.

\bibitem[Chen et~al.(2024{\natexlab{b}})Chen, Liang, Huang, Real, Wang, Pham, Dong, Luong, Hsieh, Lu, et~al.]{lion}
Chen, X., Liang, C., Huang, D., Real, E., Wang, K., Pham, H., Dong, X., Luong, T., Hsieh, C.-J., Lu, Y., et~al.
\newblock Symbolic discovery of optimization algorithms.
\newblock \emph{Advances in neural information processing systems}, 36, 2024{\natexlab{b}}.

\bibitem[Dettmers et~al.(2021)Dettmers, Lewis, Shleifer, and Zettlemoyer]{adam8bit}
Dettmers, T., Lewis, M., Shleifer, S., and Zettlemoyer, L.
\newblock 8-bit optimizers via block-wise quantization.
\newblock \emph{arXiv preprint arXiv:2110.02861}, 2021.

\bibitem[Dubey et~al.(2024)Dubey, Jauhri, Pandey, Kadian, Al-Dahle, Letman, Mathur, Schelten, Yang, Fan, et~al.]{llama3}
Dubey, A., Jauhri, A., Pandey, A., Kadian, A., Al-Dahle, A., Letman, A., Mathur, A., Schelten, A., Yang, A., Fan, A., et~al.
\newblock The llama 3 herd of models.
\newblock \emph{arXiv preprint arXiv:2407.21783}, 2024.

\bibitem[Frankle \& Carbin(2018)Frankle and Carbin]{frankle2018lottery}
Frankle, J. and Carbin, M.
\newblock The lottery ticket hypothesis: Finding sparse, trainable neural networks.
\newblock \emph{arXiv preprint arXiv:1803.03635}, 2018.

\bibitem[Gupta et~al.(2015)Gupta, Agrawal, Gopalakrishnan, and Narayanan]{stochastic_rounding}
Gupta, S., Agrawal, A., Gopalakrishnan, K., and Narayanan, P.
\newblock Deep learning with limited numerical precision.
\newblock In \emph{International conference on machine learning}, pp.\  1737--1746. PMLR, 2015.

\bibitem[Hao et~al.(2024)Hao, Cao, and Mou]{hao2024flora}
Hao, Y., Cao, Y., and Mou, L.
\newblock Flora: Low-rank adapters are secretly gradient compressors.
\newblock \emph{arXiv preprint arXiv:2402.03293}, 2024.

\bibitem[Hoffmann et~al.(2022)Hoffmann, Borgeaud, Mensch, Buchatskaya, Cai, Rutherford, Casas, Hendricks, Welbl, Clark, et~al.]{chinchilla}
Hoffmann, J., Borgeaud, S., Mensch, A., Buchatskaya, E., Cai, T., Rutherford, E., Casas, D. d.~L., Hendricks, L.~A., Welbl, J., Clark, A., et~al.
\newblock Training compute-optimal large language models.
\newblock \emph{arXiv preprint arXiv:2203.15556}, 2022.

\bibitem[Horv{\'a}th et~al.(2024)Horv{\'a}th, Laskaridis, Rajput, and Wang]{horvath2024maestro}
Horv{\'a}th, S., Laskaridis, S., Rajput, S., and Wang, H.
\newblock Maestro: Uncovering low-rank structures via trainable decomposition.
\newblock In \emph{Forty-first International Conference on Machine Learning}, 2024.
\newblock URL \url{https://openreview.net/forum?id=7bjyambg4x}.

\bibitem[Hu et~al.(2021)Hu, Shen, Wallis, Allen-Zhu, Li, Wang, Wang, and Chen]{lora}
Hu, E.~J., Shen, Y., Wallis, P., Allen-Zhu, Z., Li, Y., Wang, S., Wang, L., and Chen, W.
\newblock Lora: Low-rank adaptation of large language models.
\newblock \emph{arXiv preprint arXiv:2106.09685}, 2021.

\bibitem[Hu et~al.(2023)Hu, Wang, Lan, Xu, Lim, Bing, Xu, Poria, and Lee]{llm-adapters}
Hu, Z., Wang, L., Lan, Y., Xu, W., Lim, E.-P., Bing, L., Xu, X., Poria, S., and Lee, R. K.-W.
\newblock Llm-adapters: An adapter family for parameter-efficient fine-tuning of large language models.
\newblock \emph{arXiv preprint arXiv:2304.01933}, 2023.

\bibitem[Kingma(2014)]{kingma2014adam}
Kingma, D.~P.
\newblock Adam: A method for stochastic optimization.
\newblock \emph{arXiv preprint arXiv:1412.6980}, 2014.

\bibitem[Kunstner et~al.(2023)Kunstner, Chen, Lavington, and Schmidt]{sign_sgd_adam_similarity3}
Kunstner, F., Chen, J., Lavington, J.~W., and Schmidt, M.
\newblock Noise is not the main factor behind the gap between sgd and adam on transformers, but sign descent might be.
\newblock \emph{arXiv preprint arXiv:2304.13960}, 2023.

\bibitem[Le~Scao et~al.(2023)Le~Scao, Fan, Akiki, Pavlick, Ili{\'c}, Hesslow, Castagn{\'e}, Luccioni, Yvon, Gall{\'e}, et~al.]{le2023bloom}
Le~Scao, T., Fan, A., Akiki, C., Pavlick, E., Ili{\'c}, S., Hesslow, D., Castagn{\'e}, R., Luccioni, A.~S., Yvon, F., Gall{\'e}, M., et~al.
\newblock Bloom: A 176b-parameter open-access multilingual language model.
\newblock 2023.

\bibitem[Li et~al.(2024)Li, Chen, and Zhu]{opt4bit}
Li, B., Chen, J., and Zhu, J.
\newblock Memory efficient optimizers with 4-bit states.
\newblock \emph{Advances in Neural Information Processing Systems}, 36, 2024.

\bibitem[Lialin et~al.(2023)Lialin, Muckatira, Shivagunde, and Rumshisky]{relora}
Lialin, V., Muckatira, S., Shivagunde, N., and Rumshisky, A.
\newblock Relora: High-rank training through low-rank updates.
\newblock In \emph{The Twelfth International Conference on Learning Representations}, 2023.

\bibitem[Liu et~al.(2024{\natexlab{a}})Liu, Feng, Xue, Wang, Wu, Lu, Zhao, Deng, Zhang, Ruan, et~al.]{liu2024deepseek}
Liu, A., Feng, B., Xue, B., Wang, B., Wu, B., Lu, C., Zhao, C., Deng, C., Zhang, C., Ruan, C., et~al.
\newblock Deepseek-v3 technical report.
\newblock \emph{arXiv preprint arXiv:2412.19437}, 2024{\natexlab{a}}.

\bibitem[Liu et~al.(2024{\natexlab{b}})Liu, Wang, Yin, Molchanov, Wang, Cheng, and Chen]{dora}
Liu, S.-Y., Wang, C.-Y., Yin, H., Molchanov, P., Wang, Y.-C.~F., Cheng, K.-T., and Chen, M.-H.
\newblock Dora: Weight-decomposed low-rank adaptation.
\newblock \emph{arXiv preprint arXiv:2402.09353}, 2024{\natexlab{b}}.

\bibitem[Liu(2019)]{liu2019roberta}
Liu, Y.
\newblock Roberta: A robustly optimized bert pretraining approach.
\newblock \emph{arXiv preprint arXiv:1907.11692}, 2019.

\bibitem[Liu et~al.(2020)Liu, Gao, and Yin]{liu2020improved}
Liu, Y., Gao, Y., and Yin, W.
\newblock An improved analysis of stochastic gradient descent with momentum.
\newblock \emph{Advances in Neural Information Processing Systems}, 33:\penalty0 18261--18271, 2020.

\bibitem[Loshchilov(2017)]{loshchilov2017decoupled}
Loshchilov, I.
\newblock Decoupled weight decay regularization.
\newblock \emph{arXiv preprint arXiv:1711.05101}, 2017.

\bibitem[Luo et~al.(2024)Luo, Yu, and Li]{luo2024badam}
Luo, Q., Yu, H., and Li, X.
\newblock Badam: A memory efficient full parameter training method for large language models.
\newblock \emph{arXiv preprint arXiv:2404.02827}, 2024.

\bibitem[Lv et~al.(2023)Lv, Yang, Liu, Gao, Guo, and Qiu]{lomo}
Lv, K., Yang, Y., Liu, T., Gao, Q., Guo, Q., and Qiu, X.
\newblock Full parameter fine-tuning for large language models with limited resources.
\newblock \emph{arXiv preprint arXiv:2306.09782}, 2023.

\bibitem[Modoranu et~al.(2024)Modoranu, Safaryan, Malinovsky, Kurtic, Robert, Richtarik, and Alistarh]{modoranu2024microadam}
Modoranu, I.-V., Safaryan, M., Malinovsky, G., Kurtic, E., Robert, T., Richtarik, P., and Alistarh, D.
\newblock Microadam: Accurate adaptive optimization with low space overhead and provable convergence.
\newblock \emph{arXiv preprint arXiv:2405.15593}, 2024.

\bibitem[OpenAI(2023)]{gpt4}
OpenAI.
\newblock Gpt-4 technical report.
\newblock \emph{arXiv preprint arXiv:2303.08774}, 2023.

\bibitem[Ortega \& Rheinboldt(2000)Ortega and Rheinboldt]{bcd1}
Ortega, J.~M. and Rheinboldt, W.~C.
\newblock \emph{Iterative solution of nonlinear equations in several variables}.
\newblock SIAM, 2000.

\bibitem[Pan et~al.(2024)Pan, Liu, Diao, Pi, Zhang, Han, and Zhang]{pan2024lisa}
Pan, R., Liu, X., Diao, S., Pi, R., Zhang, J., Han, C., and Zhang, T.
\newblock Lisa: Layerwise importance sampling for memory-efficient large language model fine-tuning.
\newblock \emph{arXiv preprint arXiv:2403.17919}, 2024.

\bibitem[Pan \& Li(2023)Pan and Li]{adam_better_sgd2}
Pan, Y. and Li, Y.
\newblock Toward understanding why adam converges faster than sgd for transformers.
\newblock \emph{arXiv preprint arXiv:2306.00204}, 2023.

\bibitem[Radford et~al.(2019)Radford, Wu, Child, Luan, Amodei, Sutskever, et~al.]{gpt2}
Radford, A., Wu, J., Child, R., Luan, D., Amodei, D., Sutskever, I., et~al.
\newblock Language models are unsupervised multitask learners.
\newblock \emph{OpenAI blog}, 1\penalty0 (8):\penalty0 9, 2019.

\bibitem[Raffel et~al.(2020)Raffel, Shazeer, Roberts, Lee, Narang, Matena, Zhou, Li, and Liu]{c4}
Raffel, C., Shazeer, N., Roberts, A., Lee, K., Narang, S., Matena, M., Zhou, Y., Li, W., and Liu, P.~J.
\newblock Exploring the limits of transfer learning with a unified text-to-text transformer.
\newblock \emph{Journal of machine learning research}, 21\penalty0 (140):\penalty0 1--67, 2020.

\bibitem[Rajbhandari et~al.(2020)Rajbhandari, Rasley, Ruwase, and He]{memory_offloading}
Rajbhandari, S., Rasley, J., Ruwase, O., and He, Y.
\newblock Zero: Memory optimizations toward training trillion parameter models.
\newblock In \emph{SC20: International Conference for High Performance Computing, Networking, Storage and Analysis}, pp.\  1--16. IEEE, 2020.

\bibitem[Richt{\'a}rik \& Tak{\'a}{\v{c}}(2014)Richt{\'a}rik and Tak{\'a}{\v{c}}]{richtarik2014iteration}
Richt{\'a}rik, P. and Tak{\'a}{\v{c}}, M.
\newblock Iteration complexity of randomized block-coordinate descent methods for minimizing a composite function.
\newblock \emph{Mathematical Programming}, 144\penalty0 (1):\penalty0 1--38, 2014.

\bibitem[Richt{\'a}rik \& Tak{\'a}{\v{c}}(2015{\natexlab{a}})Richt{\'a}rik and Tak{\'a}{\v{c}}]{richtarik2015optimal}
Richt{\'a}rik, P. and Tak{\'a}{\v{c}}, M.
\newblock On optimal probabilities in stochastic coordinate descent methods.
\newblock \emph{Optimization Letters, 2015}, pp.\  1--11, 2015{\natexlab{a}}.

\bibitem[Richt{\'a}rik \& Tak{\'a}{\v{c}}(2015{\natexlab{b}})Richt{\'a}rik and Tak{\'a}{\v{c}}]{richtarik2015parallel}
Richt{\'a}rik, P. and Tak{\'a}{\v{c}}, M.
\newblock Parallel coordinate descent methods for big data optimization.
\newblock \emph{Mathematical Programming, Series A}, pp.\  1--52, 2015{\natexlab{b}}.

\bibitem[Robert et~al.(2024)Robert, Safaryan, Modoranu, and Alistarh]{robert2024ldadam}
Robert, T., Safaryan, M., Modoranu, I.-V., and Alistarh, D.
\newblock Ldadam: Adaptive optimization from low-dimensional gradient statistics.
\newblock \emph{arXiv preprint arXiv:2410.16103}, 2024.

\bibitem[Romera-Paredes et~al.(2024)Romera-Paredes, Barekatain, Novikov, Balog, Kumar, Dupont, Ruiz, Ellenberg, Wang, Fawzi, et~al.]{tasks3}
Romera-Paredes, B., Barekatain, M., Novikov, A., Balog, M., Kumar, M.~P., Dupont, E., Ruiz, F.~J., Ellenberg, J.~S., Wang, P., Fawzi, O., et~al.
\newblock Mathematical discoveries from program search with large language models.
\newblock \emph{Nature}, 625\penalty0 (7995):\penalty0 468--475, 2024.

\bibitem[Sardana \& Frankle(2023)Sardana and Frankle]{beyondchinchilla}
Sardana, N. and Frankle, J.
\newblock Beyond chinchilla-optimal: Accounting for inference in language model scaling laws.
\newblock \emph{arXiv preprint arXiv:2401.00448}, 2023.

\bibitem[Shazeer(2020)]{shazeer2020glu}
Shazeer, N.
\newblock Glu variants improve transformer.
\newblock \emph{arXiv preprint arXiv:2002.05202}, 2020.

\bibitem[Shazeer \& Stern(2018)Shazeer and Stern]{adafactor}
Shazeer, N. and Stern, M.
\newblock Adafactor: Adaptive learning rates with sublinear memory cost.
\newblock In \emph{International Conference on Machine Learning}, pp.\  4596--4604. PMLR, 2018.

\bibitem[Touvron et~al.(2023{\natexlab{a}})Touvron, Lavril, Izacard, Martinet, Lachaux, Lacroix, Rozi{\`e}re, Goyal, Hambro, Azhar, et~al.]{llama}
Touvron, H., Lavril, T., Izacard, G., Martinet, X., Lachaux, M.-A., Lacroix, T., Rozi{\`e}re, B., Goyal, N., Hambro, E., Azhar, F., et~al.
\newblock Llama: Open and efficient foundation language models.
\newblock \emph{arXiv preprint arXiv:2302.13971}, 2023{\natexlab{a}}.

\bibitem[Touvron et~al.(2023{\natexlab{b}})Touvron, Martin, Stone, Albert, Almahairi, Babaei, Bashlykov, Batra, Bhargava, Bhosale, et~al.]{llama2}
Touvron, H., Martin, L., Stone, K., Albert, P., Almahairi, A., Babaei, Y., Bashlykov, N., Batra, S., Bhargava, P., Bhosale, S., et~al.
\newblock Llama 2: Open foundation and fine-tuned chat models.
\newblock \emph{arXiv preprint arXiv:2307.09288}, 2023{\natexlab{b}}.

\bibitem[Tseng(2001)]{bcd2}
Tseng, P.
\newblock Convergence of a block coordinate descent method for nondifferentiable minimization.
\newblock \emph{Journal of optimization theory and applications}, 109:\penalty0 475--494, 2001.

\bibitem[Vyas et~al.(2024)Vyas, Morwani, and Kakade]{vyas2024adamem}
Vyas, N., Morwani, D., and Kakade, S.~M.
\newblock Adamem: Memory efficient momentum for adafactor.
\newblock In \emph{2nd Workshop on Advancing Neural Network Training: Computational Efficiency, Scalability, and Resource Optimization (WANT@ ICML 2024)}, 2024.

\bibitem[Wang(2018)]{wang2018glue}
Wang, A.
\newblock Glue: A multi-task benchmark and analysis platform for natural language understanding.
\newblock \emph{arXiv preprint arXiv:1804.07461}, 2018.

\bibitem[Wortsman et~al.(2023)Wortsman, Dettmers, Zettlemoyer, Morcos, Farhadi, and Schmidt]{mp8bit}
Wortsman, M., Dettmers, T., Zettlemoyer, L., Morcos, A., Farhadi, A., and Schmidt, L.
\newblock Stable and low-precision training for large-scale vision-language models.
\newblock \emph{Advances in Neural Information Processing Systems}, 36:\penalty0 10271--10298, 2023.

\bibitem[Yang et~al.(2024)Yang, Jin, Tang, Han, Feng, Jiang, Zhong, Yin, and Hu]{tasks2}
Yang, J., Jin, H., Tang, R., Han, X., Feng, Q., Jiang, H., Zhong, S., Yin, B., and Hu, X.
\newblock Harnessing the power of llms in practice: A survey on chatgpt and beyond.
\newblock \emph{ACM Transactions on Knowledge Discovery from Data}, 18\penalty0 (6):\penalty0 1--32, 2024.

\bibitem[Zamirai et~al.(2020)Zamirai, Zhang, Aberger, and De~Sa]{bf16}
Zamirai, P., Zhang, J., Aberger, C.~R., and De~Sa, C.
\newblock Revisiting bffloat16 training.
\newblock 2020.

\bibitem[Zhang \& Sennrich(2019)Zhang and Sennrich]{rmsnorm}
Zhang, B. and Sennrich, R.
\newblock Root mean square layer normalization.
\newblock \emph{Advances in Neural Information Processing Systems}, 32, 2019.

\bibitem[Zhang et~al.(2024{\natexlab{a}})Zhang, Liu, Cherry, and Firat]{finetuning_constraints}
Zhang, B., Liu, Z., Cherry, C., and Firat, O.
\newblock When scaling meets llm finetuning: The effect of data, model and finetuning method.
\newblock \emph{arXiv preprint arXiv:2402.17193}, 2024{\natexlab{a}}.

\bibitem[Zhang et~al.(2020)Zhang, Karimireddy, Veit, Kim, Reddi, Kumar, and Sra]{adam_better_sgd1}
Zhang, J., Karimireddy, S.~P., Veit, A., Kim, S., Reddi, S., Kumar, S., and Sra, S.
\newblock Why are adaptive methods good for attention models?
\newblock \emph{Advances in Neural Information Processing Systems}, 33:\penalty0 15383--15393, 2020.

\bibitem[Zhang et~al.(2024{\natexlab{b}})Zhang, Zeng, Wang, and Lu]{zhang2024tinyllama}
Zhang, P., Zeng, G., Wang, T., and Lu, W.
\newblock Tinyllama: An open-source small language model.
\newblock \emph{arXiv preprint arXiv:2401.02385}, 2024{\natexlab{b}}.

\bibitem[Zhang et~al.(2024{\natexlab{c}})Zhang, Chen, Li, Ding, Wu, Ye, Luo, and Sun]{adam-mini}
Zhang, Y., Chen, C., Li, Z., Ding, T., Wu, C., Ye, Y., Luo, Z.-Q., and Sun, R.
\newblock Adam-mini: Use fewer learning rates to gain more.
\newblock \emph{arXiv preprint arXiv:2406.16793}, 2024{\natexlab{c}}.

\bibitem[Zhang et~al.(2024{\natexlab{d}})Zhang, Jaiswal, Yin, Liu, Zhao, Tian, and Wang]{zhang2024q-galore}
Zhang, Z., Jaiswal, A., Yin, L., Liu, S., Zhao, J., Tian, Y., and Wang, Z.
\newblock Q-galore: Quantized galore with int4 projection and layer-adaptive low-rank gradients.
\newblock \emph{arXiv preprint arXiv:2407.08296}, 2024{\natexlab{d}}.

\bibitem[Zhao et~al.(2024{\natexlab{a}})Zhao, Zhang, Chen, Wang, Anandkumar, and Tian]{zhao2024galore}
Zhao, J., Zhang, Z., Chen, B., Wang, Z., Anandkumar, A., and Tian, Y.
\newblock Galore: Memory-efficient llm training by gradient low-rank projection.
\newblock \emph{arXiv preprint arXiv:2403.03507}, 2024{\natexlab{a}}.

\bibitem[Zhao et~al.(2024{\natexlab{b}})Zhao, Morwani, Brandfonbrener, Vyas, and Kakade]{anything_but_sgd}
Zhao, R., Morwani, D., Brandfonbrener, D., Vyas, N., and Kakade, S.
\newblock Deconstructing what makes a good optimizer for language models.
\newblock \emph{arXiv preprint arXiv:2407.07972}, 2024{\natexlab{b}}.

\end{thebibliography}
\bibliographystyle{icml2025}
\newpage
\appendix
\onecolumn
\section{Experimental setups}\label{app:exp_additional}

This section describes the main setups used in the experiments and presents additional experiments.

To begin, we introduce the hyperparameter density $\rho$. This hyperparameter represents the fraction of the total space in Linear layers that is updated with a stateful optimizer. For GaLore, this parameter is equal to $\rho = r/h$, where $r$ is the projection rank, and $h$ is the hidden size of the model. For the RandK projection, this parameter can be expressed as $1-s$, where $s$ means sparsity. For BAdam and \ALG\;with the blockwise update, this parameter denotes the ratio of the number of active blocks $a_\text{block}$ to the total number of blocks $p$, that is, $\rho = a_{\text{block}}/p$. When using \ALG\ with the column-wise update, as in~\Cref{sec:exp fune-tuning}, $\rho$ is equal to the ratio of the number of active columns $a_\text{column}$ to their total number $h$, i.e., $\rho = a_\text{column} / h$.

\subsection{Pre-training setup}\label{app:pre-training_additional}

We adopt a LLaMA-based architecture with RMSNorm~\citep{rmsnorm} and SwiGLU~\citep{shazeer2020glu} activations on the C4 dataset. 
Following~\citet{zhao2024galore}, we trained using a batch size of 512 sequences, sequence length of 256, weight decay of 0, and no gradient clipping. 
We used T5 tokenizer, since it also was trained on C4 with dictionary size equal to 32k.
The update frequency $T$ is set to 200. 

\textcolor{black}{Since, unlike GaLore, we consider not only matrix projections, we decided to generalize the concept of rank $r$. Instead, we use density $\rho$, which represents the proportion of Linear layer parameters in the state-full subspace. Thus, for SVD-like projection as in GaLore, the density equals $\rho = r / h$, where $h$ denotes the hidden dimension of the model.}
We also should point out that similarly to~\citet{zhao2024galore}, we keep Embeddings, RMSNorms, and Output layer in the state-full subspace throughout the training and don't reset the optimizer state for them.

We used standard Adam hyperparameters: $\beta_1 = 0.9, \beta_2 = 0.999, \varepsilon=1e-8$. For all methods except GaLore, we selected the learning rate equal to the optimal learning rate for Adam, which we determined through a grid search among values $[1e-4, 3e-4, 1e-3, 3e-3]$. \ALG's learning rate for the state-free optimizer was set equal to that for the state-full optimizer for simplicity and ease of tuning. For a fair comparison with GaLore~\citep{zhao2024galore}, we conducted experiments with two learning rate values: 1) the one specified by the authors in the original paper and 2) the optimal learning rate for Adam, as used for other methods. We did this because the learning rate in the original paper could have been optimized for a different number of iterations.

To match the learning rate changes in the first steps of our training with~\citet{zhao2024galore}, we used a cosine learning rate schedule with restarts, with a warmup of 10\% of the steps in a cycle length, and decay of the final learning rate down to 10\% of the peak learning rate. To verify that our results are not sensitive to the choice of scheduler, we repeated the experiments for LLaMA-130M with other schedulers. The results for constant with warm-up and cosine (one cycle) with warm-up schedulers can be found in~\Cref{tab:sch_constant,tab:sch_one_cycle}.

For pre-training GPT-2 124M~\citep{gpt2} we followed the setup described above except for the tokenizer.
We utilized the GPT-2 original tokenizer, with 50257 vocabulary size.
The results are presented in~\Cref{tab:pre-training_gpt}.

\begin{table*}[h!]
\caption{Comparison of validation perplexity and memory estimation for various optimization methods across LLaMA model scales trained on C4 \textbf{with} $\boldsymbol{\beta_2 = 0.95}$.
We also indicate the additional memory overhead introduced by the optimization algorithm. 
The values are calculated assuming that each float value occupies 4 bytes (float32). 
$\rho$ denotes the proportion of the Linear layer parameters in the state-full subspace.
Note that Embeddings, RMSNorms, and Output layer are always trained with AdamW.
}
\label{tab:pre-training beta0.95}
\begin{center}
\begin{tabular}{l|ccc}
\toprule
& 60M & 130M & 350M\\
\midrule
AdamW & 23.51 (0.43G) & 18.32 (1.00G) & 14.57 (2.74G) \\
\midrule
GaLore, $\rho=0.25$ & 26.66 (0.30G) & 21.03 (0.54G) & 16.79 (1.10G)\\
BAdam, $\rho=0.25$ & 25.40 (0.29G) & 20.17 (0.52G) & 16.54 (1.05G) \\
\ALG,\;$\rho=0.25$ & \textbf{24.07} (0.29G) & \textbf{18.79} (0.52G) & \textbf{14.96} (1.05G) \\
\ALG,\;$\rho=0.0$ & 24.54 (0.24G) & 19.11 (0.37G) & 15.20 (0.49G) \\
\midrule
Training tokens & 20B & 20B & 24B \\
Number of iterations & 200k & 200k & 240k \\
\bottomrule
\end{tabular}
\end{center}
\vskip -0.15in
\end{table*}

\begin{table}[!h]
    \parbox{0.475\linewidth}{
        \caption{Perplexity of LLaMA-130M models pre-trained on C4 using pure bfloat16 format both for model weights and optimizer statistics. }
        \label{tab:bf16}
        \begin{center}
        \begin{tabular}{l|c}
        \toprule
        \textbf{Method} & \textbf{100k iterations} \\
        \hline
        Adam & 21.88 \\
        \hline
        GaLore, $\rho=0.25$ & 24.19 \\
        BAdam, $\rho=0.25$ & 25.03 \\
        \ALG,\;$\rho=0.25$ & 23.17 \\
        \ALG,\;$\rho=0.0$ & \textbf{22.64} \\
        \bottomrule
        \end{tabular}
        \end{center}
    }
    \hfill
    \parbox{.475\linewidth}{
        \caption{Perplexity of LLaMA-130M models pre-trained on C4 for 200k steps with different state-free optimizers for \ALG.}
        \label{tab:sgd_state_free}
        \begin{center}
        \begin{tabular}{l|c|c}
        \toprule
        \multirow{ 2}{*}{\textbf{Method}} & State-free & Validation \\
        & optimizer & perplexity \\
        \hline
        Adam & --- & 18.13 \\
        \hline
        \ALG,\;$\rho=0.25$ & signSGD &  \textbf{18.60} \\
        \ALG,\;$\rho=0.25$ & SGD &  19.11 \\
        \bottomrule
        \end{tabular}
        \end{center}
    } 
\end{table}


\begin{table}[!h]
    \parbox{.475\linewidth}{
        \caption{Perplexity of LLaMA-130M models pre-trained on C4 with Lion as state-full optimizer for 200k steps.}
        \label{tab:lion_state_full}
        \begin{center}
        \begin{tabular}{l|c}
        \toprule
        \textbf{Method} & \textbf{200k} \\
        \hline
        \textcolor{black}{Adam} & \textcolor{black}{18.13} \\
        \hline
        Lion & 18.55 \\
        \hline
        GaLore (+ Lion), $\rho=0.25$ & 21.65  \\
        \ALG\;(+ Lion), $\rho=0.25$ & \textbf{18.89} \\
        \bottomrule
        \end{tabular}
        \end{center}
    }
    \hfill
    \parbox{.475\linewidth}{
        \caption{Validation perplexity of GPT-2 124M model pre-trained on C4 for 200k steps with various optimization methods.
        }
        \label{tab:pre-training_gpt}
        \begin{center}
        \begin{tabular}{l|cc}
        \toprule
        \textbf{Method} & Validation perplexity \\
        \hline
        Adam & 21.94   \\
        \hline
        GaLore, $\rho=0.25$ & 25.84 \\
        BAdam, $\rho=0.25$ & 25.43  \\
        \ALG,\;$\rho=0.25$ & \textbf{23.23} \\
        \ALG,\;$\rho=0.0$ & 25.04 \\
        \bottomrule
        \end{tabular}
        \end{center}
    }
\end{table}

\begin{table}[!h]
    \parbox{0.475\linewidth}{
        \caption{Perplexity of LLaMA-130M models pre-trained on C4 for 200k iterations using \ALG\;with $\rho=1/3$ and different Block update strategy, taken from~\citet{luo2024badam}.}
        \label{tab:block_strategy}
        \begin{center}
        \begin{tabular}{l|c}
        \toprule
        \textbf{Method} & \textbf{Perplexity} \\
        Random & \textbf{18.50} \\
        Ascending & 18.54 \\
        Descending & \textbf{18.50} \\
        \bottomrule
        \end{tabular}    
        \end{center}
    }
    \hfill
    \parbox{0.475\linewidth}{
        \caption{Perplexity of LLaMA-130M models pre-trained on C4 for 200k iterations (20B tokens) using \ALG\;with $\rho=0.25$ and different update frequency $T$.}
        \label{tab:update_gap}
        \begin{center}
        \begin{tabular}{c|c}
        \toprule
        \textbf{Update frequency $T$} & \textbf{Perplexity} \\
        10 & 18.82 \\
        20 & 18.73 \\
        50 & 18.69 \\
        100 & 18.65 \\
        200 & \textbf{18.60} \\
        500 & \textbf{18.60} \\
        1000 & 18.61 \\
        \bottomrule
        \end{tabular}
        \end{center}
    }
\end{table}

\begin{table}[!h]
    \parbox{.475\linewidth}{
        \caption{Perplexity of LLaMA-130M models pre-trained on C4 using constant scheduler with warm-up at various training iterations.}
        \label{tab:sch_constant}
        \begin{center}
        \begin{tabular}{l|c|c}
        \toprule
        \textbf{Method} & \textbf{100k} & \textbf{200k} \\
        \hline
        Adam & 19.51 & 18.51 \\
        \hline
        GaLore, $\rho=0.25$ & 22.63 & 21.03  \\
        BAdam, $\rho=0.25$ & 22.31 & 20.66 \\
        \ALG,\;$\rho=0.25$ & \textbf{19.97} & \textbf{18.85} \\
        \ALG,\;$\rho=0.0$ & 20.33 & 19.14 \\
        \bottomrule
        \end{tabular}
        \end{center}
    }
    \hfill
    \parbox{.475\linewidth}{
        \caption{Perplexity of LLaMA-130M models pre-trained on C4 using cosine scheduler with warm-up at various training iterations.}
        \label{tab:sch_one_cycle}
        \begin{center}
        \begin{tabular}{l|c|c}
        \toprule
        \textbf{Method} & \textbf{100k} & \textbf{200k} \\
        \hline
        Adam & 19.38 & 17.95 \\
        \hline
        GaLore, $\rho=0.25$ & 22.30 &  20.60 \\
        BAdam, $\rho=0.25$ & 22.35 & 20.07 \\
        \ALG,\;$\rho=0.25$ & \textbf{19.62} & \textbf{18.16} \\
        \ALG,\;$\rho=0.0$ & 19.83 & 18.34 \\
        \bottomrule
        \end{tabular}
        \end{center}
    } 
\end{table}

\begin{table}[h!]
    \caption{Perplexity of LLaMA-130M models pre-trained on C4 for 200k iterations (20B tokens) using \ALG\;with different density $\rho$.}
    \label{tab:rho_ablation}
    \centering
    \begin{tabular}{c|c|c|c|c|c|c|c|c}
        \toprule
         & \multicolumn{7}{c|}{\textbf{\ALG}} &  \\
         \midrule
        $\rho$ 
         & 1.0 (\textbf{Adam}) & 0.5 & 0.33 & 0.25 & 0.125 & 0.0625 & 0.0 & \textbf{signSgd} \\

        Perplexity & 18.13 & 18.40 & 18.50 & 18.63 & 18.71 & 18.80 & 18.90 & 33.22 \\
        \bottomrule
    \end{tabular}
\end{table}

\subsection{RoBERTa fine-tuning setup}\label{app:fine-tuning_additional}

The batch size and learning rate values used for \ALG\ in the experiments from~\Cref{tab:fine-tuning} are presented in~\Cref{tab:fine-tuning hyperparameters}. 
In all experiments, we set the learning rate for the state-free optimizer to $1/10$ of the learning rate of the state-full optimizer.
Other hyperparameters, such as scheduler, number of epochs, maximum sequence length, and warmup ratio, were taken from~\citet{lora}.

\begin{table*}[h]
    \caption{Hyperparameters of fine-tuning RoBERTa-base for \ALG.}
    \centering
    \label{tab:fine-tuning hyperparameters}
    \begin{center}
    \begin{tabular}{ccccccccc}
    \toprule
    & MNLI   & SST-2 & MRPC    & CoLA    & QNLI    & QQP     & RTE     & STS-B   \\
    \midrule
    Batch Size    & 128     & 128    & 16 & 256      & 256      &  128     & 32      & 16     \\
    State-full Learning Rate & 5E-05  & 5E-05     & 2E-04   & 5E-04   & 1E-04   & 5E-05   & 2E-04   & 1E-04   \\   
    State-free lr multiplier & & & & 0.1 & & & & \\
    
    Rank/Density & & \multicolumn{5}{c}{$r = 8\;\;/ \;\;r = 0 \;(\rho=0)$} & & \\
    \bottomrule
    \end{tabular}
    \end{center}
\end{table*}

We also present a comparison between fine-tuning using \ALG\ with $\rho=0.0$ and full fine-tuning using signSGD. 
Essentially, the only difference is that in the second case, the classification head is updated with signSGD instead of Adam. 
The results in~\Cref{tab:fine-tuning-zero_density} show that the classification head is extremely sensitive to the optimizer type, and switching the optimizer significantly drops the accuracy.

\begin{table*}[!h]
    \caption{Results of fine-tuning RoBERTa-Base on several tasks from GLUE. 
    The left column indicates which modules were trained using the state-full optimizer Adam. 
    The remaining modules, except for the frozen Embedding layer, were trained using the state-free signSGD.}
    \label{tab:fine-tuning-zero_density}
    \begin{center}
    \footnotesize
    \addtolength{\tabcolsep}{-4pt}
    \begin{tabular}{ccc|ccc} %
    \toprule
               \multicolumn{2}{c|}{\textbf{Method}} & & \textbf{SST2} & \textbf{QNLI} & \textbf{QQP}  \\
    \midrule
    \multicolumn{2}{c|}{\textbf{Classification head} (corresponds to the \ALG\;with $\rho=0.0$)} & & \textbf{94.9\textsubscript{$\pm$.2}} & 92.8\textsubscript{$\pm$.1} & 91.3\textsubscript{$\pm$.1} \\
    \midrule
    \multicolumn{2}{c|}{\textbf{None} (corresponds to the fine-tuning using signSGD)}& & 
    89.7 &81.6&74.3\\
    \bottomrule
    \end{tabular}
    \end{center}
\end{table*}

\section{Comparison with Concurrent Methods}\label{sec:discussion}

In this section, we present a comparison with several concurrent works --- namely, Fira~\citep{chen2024fira}, LDAdam~\citep{robert2024ldadam}, and Adamem~\citep{vyas2024adamem} --- which we discovered only during the final stages of our project. 
Due to this timing, we did not have the opportunity to conduct comprehensive experimental comparisons with them as we did with other baselines in~\Cref{tab:pre-training_main_results}.

\subsection{Algorithmic comparison}\label{sec:concurrent works comparison algorithmic}

We begin by comparing these methods with \ALG\ from an algorithmic perspective.

\paragraph{Adamem.} Similarly to our framework, \citet{vyas2024adamem} divides the gradient into two parts: the first part is a projection of the gradient onto the top SVD subspace, while the second part is the residual outside this subspace. 
The first part is used to update momentum, which is then fed into Adafactor's preconditioner~\citet{adafactor}, while the second part is fed directly into a one-sided Adafactor preconditioner~\citet{vyas2024adamem}. 
The outputs of these preconditioners are then used for the final parameter update.
Although the core idea of splitting the gradient into two orthogonal subspaces is analogous to~\Cref{alg:Frugal}, Adamem implements \textit{only one possible variant} for choosing subspace and for updates of each component. 
Thus, Adamem represents a \textit{special case of} \ALG: with SVD-based projection, Adafactor with momentum as the state-full optimizer, and one-sided Adafactor as the state-free optimizer.

\paragraph{Fira.} \citet{chen2024fira} also apply the idea of decomposing the full-rank gradient into a low-rank subspace gradient and a residual gradient. 
Similarly to GaLore, the low-rank part is obtained through SVD and passed through an AdamW step; however, unlike GaLore, the residual part is not discarded, but is also used in the update in an SGD-like format without using additional optimizer state.
An important feature that Fira introduces is norm-based scaling: this technique adaptively scales the per-column learning rate for the residual part by $\frac{\|\psi_t(R_t)\|}{\|R_t\|}$ where $\psi_t$ denotes the AdamW update rule.
In addition, for enhanced training stability, Fira proposes replacing standard gradient clipping with a norm-growth limiter, which transforms abrupt gradient spikes into gradual, smooth increases.

Thus, the key difference between Fira and \ALG\ is that the state-free update sophisticatedly utilizes information from the state-full update.
However, Fira has several limitations. 
First, Fira relies on computing SVD for projection onto the low-rank subspace, which introduces additional memory and computational overhead, with the computational burden becoming more pronounced as model size increases (see~\Cref{sec:method frugal,app:memory}.
Moreover, it is not readily obvious how to propose an alternative to SVD to eliminate this overhead, because, for example, block projection from BAdam~\citep{luo2024badam} cannot work with norm-based scaling~\citep{chen2024fira} by construction.
Second, following GaLore, Fira continues using the old optimizer state after updating the projection, which should be suboptimal (see~\Cref{app:reset_reproj}).
Furthermore, according to the original paper~\citet{chen2024fira}, the reason for instability in standard gradient clipping may be precisely the moments of low-rank subspace transitions.

We also would like to note separately that neither Fira~\citep{chen2024fira} nor Adamem~\cite{vyas2024adamem} provide theoretical convergence proofs. 
In contrast, \textit{we provide a proof that recovers the best known convergence rate under conventional assumptions}.

\paragraph{LDAdam.} Notably, unlike Fira and Adamem, \citet{robert2024ldadam} identify the addition of momentum and gradient from different subspaces as mathematically incorrect (similarly to~\Cref{app:reset_reproj}). 
To resolve this inconsistency, ~\citet{robert2024ldadam}, propose to reproject the previous optimizer state (similarly to~\Cref{alg:Frugal}). 
Interestingly, LDAdam differs from \ALG, Adamem, and Fira in that each individual step remains low-rank, but unlike GaLore and BAdam, the remaining information is \textit{not discarded but preserved in an error feedback buffer}, thus allowing LDAdam to have convergence guarantees and demonstrate high performance.

However, LDAdam also has a drawback. The method requires updating the projection at every step; and although~\citet{robert2024ldadam} proposed replacing the expensive SVD decomposition with significantly cheaper block power iteration, the necessity of performing this operation at every step causes a slowdown of $\geq15\%$.

\subsection{Experimental comparison}\label{sec:concurrent works comparison experimental}

We also present a preliminary experimental comparison of \ALG\ with Adamem~\citep{vyas2024adamem}, Fira~\citep{chen2024fira}, and LDAdam~\citep{robert2024ldadam}.

\paragraph{Adamem.}
We compared \ALG\ with our reimplementation of Adamem~\citep{vyas2024adamem} on pre-training LLaMA model of 3 sizes --- 60M, 130M, and 350M --- in the same setup as in~\Cref{sec:exp pre-training_main_results,tab:pre-training_main_results}. 
Unique Adamem hyperparameters were taken from~\citet{vyas2024adamem}. 
The results presented in~\Cref{tab:adamem} show that while Adamem performs significantly better than GaLore due to utilizing information from the gradient residual, it still falls slightly short of \ALG.
However, at this point, we are not ready to claim whether this is a consequence of a less favorable choice of state-full and state-free update rules or potentially suboptimal hyperparameter settings for our experimental setup.

\begin{table*}[t!]
\caption{Comparison of validation perplexity for AdamW, \ALG\;and AdaMeM across LLaMA model scales trained on C4.
}
\label{tab:adamem}
\begin{center}
\begin{tabular}{l|ccc}
\toprule
& 60M & 130M & 350M  \\
\midrule
AdamW & 22.73 & 18.13 & 14.43 \\
\midrule
AdaMeM, $\rho=0.25$ & 23.81 & 18.99 & 15.10 \\
\midrule
\ALG,\;$\rho=0.25$ & \textbf{23.59} & \textbf{18.60} & \textbf{14.79} \\
\ALG,\;$\rho=0.0$ & 24.06 & 18.90 & 15.03 \\
\midrule
Training tokens & 20B & 20B & 24B \\
Number of iterations & 200k & 200k & 240k \\
\bottomrule
\end{tabular}
\end{center}
\vskip -0.15in
\end{table*}

\paragraph{Fira and LDAdam}

For experiments with Fira~\citep{chen2024fira} and LDAdam~\citep{robert2024ldadam}, we deviated from the setup in~\Cref{app:exp_additional,tab:pre-training_main_results}.
This modification was necessary since our main setup following GaLore~\citep{zhao2024galore} does not use gradient clipping, while its counterpart—the norm-growth limiter—is critically important for Fira's performance.
Additionally, we observed that Fira significantly benefits from using weight decay, so for the experiments in~\Cref{tab:fira ldadam} we enabled both gradient clipping (norm-growth limiter for Fira) and weight decay. 
The results show that all three methods achieve performance very close to AdamW. 
Surprisingly, LDAdam even outperforms the full-rank AdamW baseline for the 350M model (which, however, may be a result of insufficient hyperparameter grid search). 
Fira performs slightly better than \ALG.

However, we would like to note that these improvements are valid in terms of the \textit{number of iterations}, and not in terms of \textit{wall clock time}, where both Fira and LDAdam introduce \textit{noticeable time overhead}.
In the same table, we report the approximate wall-clock slowdown of the methods compared to plain AdamW. 
We provide approximate values taken from the original papers because the actual slowdown can vary significantly depending on the setup (number of workers in training, GPU type, etc.).
One can observe that the slowdown due to the need to perform additional heavyweight operations during the optimizer step can negate the advantages gained from a more carefully designed update rule.
Thus, practitioners should choose the algorithm for their specific use case.

\begin{table*}[t!]
\caption{Comparison of validation perplexity for AdamW, \ALG\; Fira and LDAdam across LLaMA model scales trained on C4. We also report approximate slowdown comparing to AdamW (taken from original papers).
}
\label{tab:fira ldadam}
\begin{center}
\begin{tabular}{l|c|cc}
\toprule
& \textbf{Approximate Slowdown} &130M  & 350M\\
\midrule
AdamW & - & 17.52 & 13.81 \\
\midrule
Fira, $\rho=0.25$ & $10\%$ & 17.58 & 13.74 \\
LDAdam, $\rho=0.25$ & $15\%$ &  \textbf{17.54} & \textbf{13.54}\\
\midrule
\ALG,\;$\rho=0.25$ & \textbf{0\%} & 17.58 & 13.99 \\
\bottomrule
\end{tabular}
\end{center}
\vskip -0.15in
\end{table*}

\section{Memory estimation}\label{app:memory}

In this section, we will examine memory requirements for different projection types using the LLaMA-like architecture as an example and show that RandK, column-wise, and blockwise projections result in approximately the same amount of additional memory for a given density value $\rho$~\Cref{app:exp_additional}. In contrast, the semi-orthogonal projection matrix (GaLore-like) requires a slightly larger value in this setup. 
Recall that we follow the setup from~\citet{zhao2024galore}, where Embeddings, RMSNorms, and Output layer remain in the state-full subspace throughout the training, so the projection does not interact with them, and they give the same memory overhead for all projection methods.

Let the number of parameters in the remaining projectable parameters be $P$. 
Then, training using Adam gives an additional overhead of $2P$ float values for storing $m$ and $v$ for each parameter. 
Now, let's consider blockwise and column-wise projections and suppose we want to achieve a density $\rho$.
For blockwise, we take $\text{round}(\rho\cdot L)$ layers, where $L$ is the total number of transformer layers, and for column-wise, we take $\text{round}(\rho\cdot k)$ columns for each matrix of size $n\times k$. 
Since the memory required to store block or column indices is negligible compared to other costs, we find that the total size of the optimizer state when using Adam as a state-full optimizer will be $2\rho\cdot P$, with an adjustment for rounding.

In the case of RandK projection, we have the same $2\rho\cdot P$ float values $\mM$ and $\mV$ in the optimizer state.
However, we must also know the current indices corresponding to these values. 
On the other hand, it is widely known that if one needs to save a set of random values, they don't need to store all these values - it's sufficient to store only the seed from which they were generated.
Thus, for RandK, the total memory also equals $2\rho\cdot P$.

If we recalculate this considering a specific LLaMA-like architecture, each layer consists of 7 matrices: 4 matrices of size $h\times h$ (Query, Key, Value, Output) and 3 matrices of size $h\times h_{ff}$ (Gate, Down, Up), where $h$ is the hidden size of the model, and $h_{ff}$ is the FFN hidden size. In the LLaMA architecture, it's typically:

$$h_{ff} = 4h\cdot \frac{2}{3} = \frac{8}{3}h.$$

Then, the amount of memory for RandK projection (and consequently for all others mentioned above) is:

$$2\cdot(4\cdot(\rho h^2) + 3\cdot(\rho\cdot h \cdot h_{ff})) = 2\cdot(4\cdot\rho h^2 + 3\cdot(\frac{8}{3}\rho\cdot h^2)) = 24\rho\cdot h^2$$

for each layer on average ($2$ corresponds to the number of matrices $\mM$ and $\mV$).

In the case of a GaLore-like semi-orthogonal projection matrix, the situation is as follows. We have projections onto a low-rank subspace of rank $r$, where $r = \text{round}(\rho\cdot h)$. Then, for Query, Key, Value, and Output projections, we need to store $\mP, \mM, \mV\in\mathbb{R}^{h\times r},$ and for Gate, Down and Up projections either $\mP\in\mathbb{R}^{h\times r}, \mM, \mV\in\mathbb{R}^{h_{ff}\times r}$, or $\mP\in\mathbb{R}^{h_{ff}\times r}, \mM, \mV\in\mathbb{R}^{h\times r}$. Since the second option requires less memory, it is used by default in~\citep{zhao2024galore} and, therefore, in \ALG, too. Then, the total memory requirements are:

$$ 4\cdot(3\cdot rh) + 3\cdot(2\cdot r\cdot h + r\cdot h_{ff}) = 12rh + 6rh + 3rh_{ff} = (12 + 6 + 3\cdot\frac{8}{3})rh = 26\rho h^2.$$

To sum up, RandK, column-wise and blockwise projection requires $2\rho P$ additional memory, while semi-orthogonal projection (GaLore-like) requires $\frac{26}{24}\cdot 2\rho P = \frac{13}{12}\cdot 2\rho P$ additional memory.

Let's recall that in addition to this, SVD requires additional computation, which can take up to 10\% as the model size increases~\citep{zhao2024galore}. 
Therefore, for our method, we settled on blockwise projection.

\section{Optimizer state management}\label{app:reset_reproj}

In this section, we would like to propose some modifications to the GaLore algorithm. 
These modifications are also used in our framework as SVD projection.

Specifically, we want to consider the projection of the state when changing the active subspace. 
In GaLore~\citep{zhao2024galore}, when updating the projection, the optimizer states $M$ and $V$ do not change. This results in new projected gradients and old $M$ and $V$ being in different subspaces. This implementation has little effect on the result with large values of update frequency $T$, as the values of $M$ and $V$ from the previous subspace decay exponentially quickly. However, more frequent changes $T$ significantly affect the result. We hypothesize that this is why in~\citet{zhao2024galore} the model quality degraded so significantly when $T$ was decreased, while as seen in~\Cref{tab:update_gap}, \ALG\ experiences much less degradation.

There are two different ways to overcome this obstacle: either project the state back to full-rank space or reset the state before a new round. 
However, the first option may be challenging in the case of arbitrary projection. 
Specifically, while it's possible to project momentum back to full-rank space (see Alg. 2 in~\citet{hao2024flora}), the same cannot be easily done with variance because its values depend quadratically on the projection matrix.
However, the projection of variance will also be trivial if the set of basis vectors for the projection is fixed, which is true, for example, for coordinate projection with RandK.

\begin{figure}[h]
    \centering
    \includegraphics[width=0.8\linewidth]{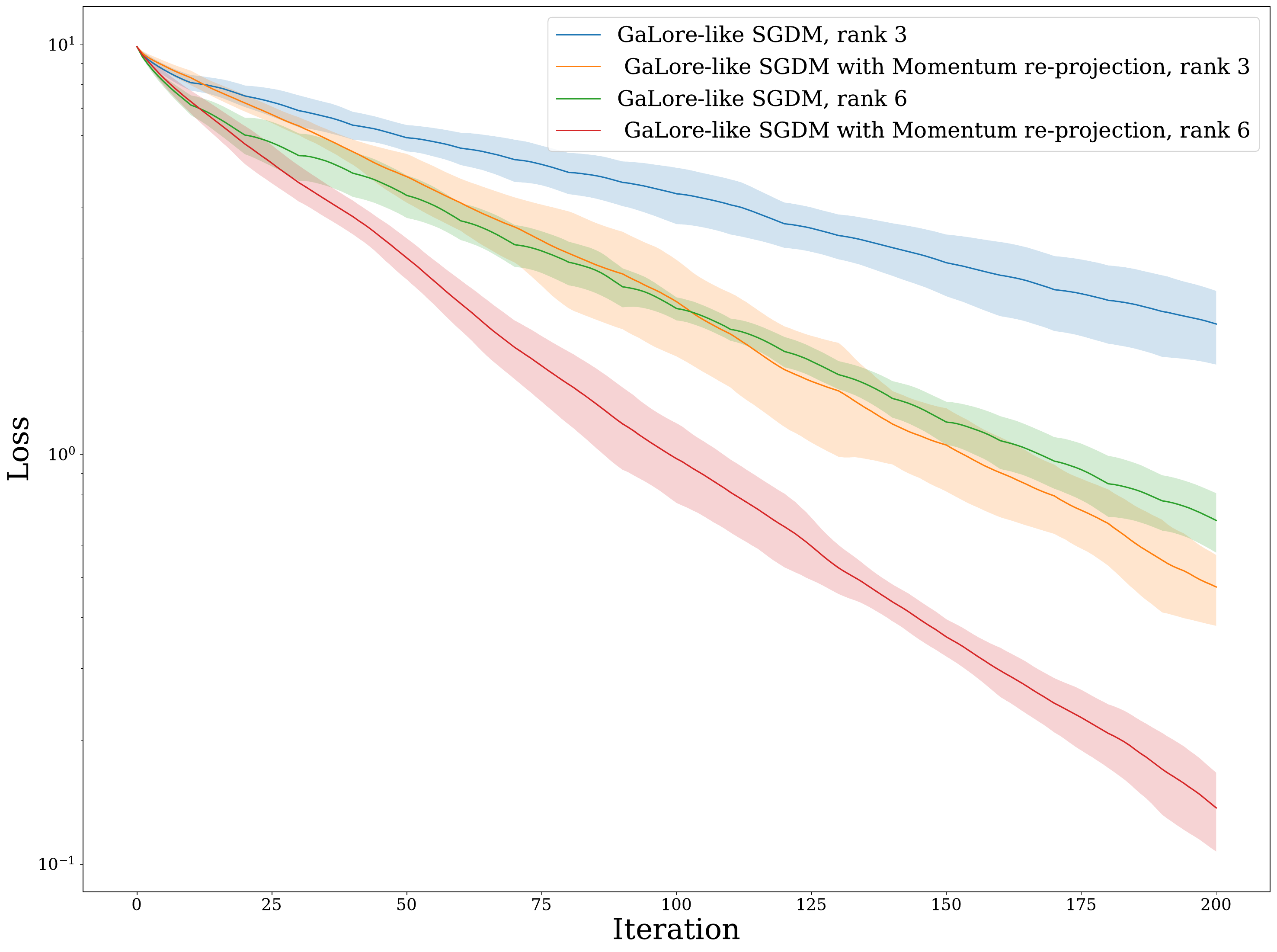}
    \caption{Toy example of solving quadratic minimization problem with GaLore-like SGDM with and without re-projection of optimizer state. Algorithm with re-projection converges much faster.}
    \label{fig:reproj_toy}
\end{figure}

To demonstrate the effectiveness of this improvement, we provide a toy example. 
We consider a quadratic minimization problem of $\|W\|^2, W\in\mathbb{R}^{10\times10}$. 
For optimization, we use GaLore-like SGDM and GaLore-like SGDM with Momentum state projection. 
This projection is similar to Alg. 2 from~\citep{hao2024flora}, except we additionally normalize the new momentum by the ratio of norms before and after re-projection to preserve momentum mass. 
We use ranks of $3$ and $6$, and an update frequency $T = 10$ and plot mean and standard deviation across $5$ independent runs.
The results are presented in~\Cref{fig:reproj_toy}.
As can be seen, the variant with state projection converges much faster.



    

\clearpage

\clearpage

\section{Convergence Theory}
\label{app:theory}

Firstly, we provide ommited definition of $L$-smooth function.

\begin{definition}
\label{def: smoothness}
We say that $f: \Rd \rightarrow \R$ is $L-$smooth with $L\geq 0$, if it is differentiable and satisfies
\[
f(y)\leq f(x)+\langle \nabla f(x), y-x\rangle+\frac{L}{2}\|y-x\|^2, \forall x, y \in\Rd.
\]
\end{definition}

Below, we provide an equivalent formulation of Algorithm~\ref{alg: SGDM} that enables us to use the proof of the similar structure to SGDM momentum analyis of \cite{liu2020improved}.

\begin{algorithm}[H]
\caption{\ALG \texttt{(SGDM, SGD)}: Equivalent to Algorithm~\ref{alg: SGDM} for constant step size}
\label{alg: SGDM_Theory}
    \textbf{Input:} momentum weight $\beta\in [0,1)$, initialization $x^1\in\Rd$ and $m^0=0$, step sizes $\{\alpha_k := \alpha >0\}_{k=1}^K$, momentum set $J_k \subset [d]$ for $k = 1, 2 \hdots $
\begin{algorithmic}[1]
\FOR{$k=1,2,\hdots$}
    \STATE Compute stochastic gradient $\tilde{g}^{k}\leftarrow \nabla f_{\zeta^k}(x^{k})$
    \STATE Update momentum vector $\tilde{m}_j^{k} \leftarrow (1-\beta)\tilde{g}_j^{k} + \beta \begin{cases} \tilde{m}_j^{k-1} & \text{if } j \in J_k, \\ 
     0& \text{otherwise} \end{cases}$
    \STATE Update iterate $x^{k+\nicefrac12}\leftarrow x^k - \alpha \tilde{m}^k$
    \STATE $x_j^{k+1} \leftarrow \begin{cases}  \frac{x_j^{k+\nicefrac12}}{1 - \beta} - \frac{\beta x^{k}_j}{1 - \beta} & \text{if } j \notin J_{k+1}, \\ x_j^{k+\nicefrac12} & \text{otherwise} \end{cases}$
\ENDFOR
\end{algorithmic}
\end{algorithm}

Next, we present several key ingredients of the proof. Firstly, we can express the momentum term $\tm_j^k$
as %
\begin{align}
\tm_j^k &= 
(1-\beta)\sum_{i=t^k_j}^k\beta^{k-i}\tilde{g}_j^i\label{equ: tm^k},
\end{align}
where $t^k_j:= \max_{t \leq k} \{j \notin J_t\}$, i.e., the last time when the momentum buffer was released. We denote 
\begin{align}
m_j^k &= 
(1-\beta)\sum_{i=t^k_j}^k\beta^{k-i}g_j^i\label{equ: m^k},
\end{align}
Using this notation, we proceed with two lemmas, one showing variance reduction effect of momentum, the other boundess of momentum bias.
\begin{lemma}
\label{lem: m^k}
Under Assumption \ref{assump: standard assumption}, the update vector $\tm^k$ in Algorithm~\ref{alg: SGDM_Theory} satisfies
\begin{align*}
\E\left[\left\|\tm^k - m^k\right\|^2\right]
&\leq \frac{1-\beta}{1+\beta}\sigma^2.
\end{align*}
\end{lemma}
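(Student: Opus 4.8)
\textbf{Proof proposal for Lemma~\ref{lem: m^k}.}

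The plan is to work coordinate-by-coordinate using the explicit representation of the momentum buffer. Fix a coordinate $j$ and recall from \eqref{equ: tm^k} and \eqref{equ: m^k} that $\tm_j^k - m_j^k = (1-\beta)\sum_{i=t^k_j}^k \beta^{k-i}(\tg_j^i - g_j^i)$, where $t^k_j$ is the last index at or before $k$ at which the buffer for coordinate $j$ was reset. The key structural observation is that the summands $\tg_j^i - g_j^i$ are, conditionally on the randomness up to iteration $i-1$ (which determines $g_j^i$ and whether $i = t^k_j$), mean zero by the unbiasedness assumption, and across different $i$ they are uncorrelated because the samples $\{\zeta^k\}$ are independent (Assumption~\ref{assump: standard assumption}, items 2 and 3). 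There is a subtlety: the lower summation limit $t^k_j$ is itself random (it depends on the random sets $J_t$). I would handle this either by conditioning on the sequence $\{J_t\}$ throughout — the variance bound $\sigma_j^2$ holds conditionally as well — or by noting that the event $\{t^k_j = s\}$ is measurable with respect to information strictly prior to the gradient noise at iterations $s, s+1, \dots, k$, so the cross terms still vanish in expectation.

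Granting the vanishing of cross terms, I would expand the squared norm of the $j$-th coordinate:
\begin{align*}
\E\left[\left(\tm_j^k - m_j^k\right)^2\right]
&= (1-\beta)^2 \sum_{i=t^k_j}^k \beta^{2(k-i)} \E\left[(\tg_j^i - g_j^i)^2\right]
\leq (1-\beta)^2 \sigma_j^2 \sum_{i=t^k_j}^k \beta^{2(k-i)}.
\end{align*}
Then I would bound the geometric sum crudely by its infinite counterpart: $\sum_{i=t^k_j}^k \beta^{2(k-i)} \leq \sum_{\ell=0}^{\infty} \beta^{2\ell} = \frac{1}{1-\beta^2}$. Combining gives $\E[(\tm_j^k - m_j^k)^2] \leq (1-\beta)^2 \sigma_j^2 / (1-\beta^2) = \frac{1-\beta}{1+\beta}\sigma_j^2$, using $1-\beta^2 = (1-\beta)(1+\beta)$. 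Finally I would sum over $j \in [d]$, invoke $\|\tm^k - m^k\|^2 = \sum_j (\tm_j^k - m_j^k)^2$ and the definition $\sigma^2 = \sum_j \sigma_j^2$, to conclude $\E[\|\tm^k - m^k\|^2] \leq \frac{1-\beta}{1+\beta}\sigma^2$.

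The main obstacle I anticipate is making the cross-term cancellation fully rigorous given that the window of summation has a random left endpoint $t^k_j$ determined by the (possibly adversarially or randomly chosen) momentum sets $J_t$. The cleanest route is to condition on the entire realization of $\{J_t\}_{t\ge 1}$ at the outset: then $t^k_j$ becomes a fixed number, the representation \eqref{equ: tm^k} is a deterministic linear combination of the gradient noises, the standard independence/unbiasedness argument applies verbatim, and the resulting bound does not depend on the realization of $\{J_t\}$ — so it survives taking the outer expectation over $\{J_t\}$. Everything else is routine: the geometric series estimate and the coordinatewise-to-full-vector summation are elementary.
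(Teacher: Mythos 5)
Your proposal is correct and follows essentially the same route as the paper: coordinate-wise expansion of $\tm_j^k - m_j^k = (1-\beta)\sum_{i=t^k_j}^k \beta^{k-i}(\tg_j^i - g_j^i)$, cancellation of cross terms via unbiasedness and independence of the samples, a geometric-series bound yielding the factor $\frac{(1-\beta)^2}{1-\beta^2} = \frac{1-\beta}{1+\beta}$, and summation over coordinates; the paper merely evaluates the finite sum as $\frac{1-\beta^{2(k-t^k_j+1)}}{1-\beta^2}$ and drops the $\beta$-power, which is equivalent to your infinite-series bound. Your extra care in conditioning on $\{J_t\}$ to handle the random endpoint $t^k_j$ is a reasonable (and slightly more explicit) treatment of a point the paper passes over silently.
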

\begin{proof}
Since $\tm_j^k = (1-\beta)\sum_{i=t_j^k}^k \beta^{k-i}\tg_j^i$, we have
\begin{align*}
\E\left[\left\|\tm^k - m^k\right\|^2\right] &= \sum_{j \in [d]} \E\left[\left\|\tm_j^k - m_j^k\right\|^2\right]  \\
&\leq (1-\beta)^2\sum_{j \in [d]} \E\left[\left\|\sum_{i=t^k_j}^k\beta^{k-i}(\tg_j^i-g_j^i)\right\|^2\right].
\end{align*}

Moreover, since $\zeta^1,\zeta^2,...,\zeta^k$ are independent random variables (item 3 of Assumption \ref{assump: standard assumption}), we can use conditional expectation to show that $\E\left[(\tg_j^{i_1}-g_j^{i_1})(\tg_j^{i_2}-g_j^{i_2})\right] = 0$ for $i_1 \neq i_2$. Therefore,  

\begin{align*}
\E\left[\left\|\tm^k - m^k\right\|^2\right] &\leq  (1-\beta)^2 \sum_{j \in [d]} \E\left[\sum_{i=t^k_j}^k \beta^{2(k-i)}\|\tg^i_j-g^i_j\|^2\right] \\
&\leq \frac{1-\beta}{1+\beta} \sum_{j \in [d]} \E\left[ (1-\beta^{2(k - t^k_j + 1)})\right]\sigma_j^2 \\
&\leq \frac{1-\beta}{1+\beta} \sum_{j \in [d]}  \sigma_j^2 = \frac{1-\beta}{1+\beta}  \sigma^2.
\end{align*}
\end{proof}

\begin{lemma}
\label{lem: difference}
Under Assumption \ref{assump: standard assumption}, the update vector $\tm^k$ in Algorithm~\ref{alg: SGDM_Theory} further satisfies
\begin{align*}
\EE\left[\sum_{j \in J_k}(1-\beta^{k_j})^2\left\|\frac{m_j^k}{(1 - \beta^{k_j})}-g_j^k\right\|^2\right]\leq p^k_{\max}\EE\left[\sum_{i=1}^{k-1}a_{k,i}\|x^{i+1}-x^i\|^2\right],
\end{align*}
where $k_j=k - t^k_j +1$, and

\begin{align}
\label{equ: a_k,i}
a_{k, i}=L^2\beta^{k-i}\left(k-i+\frac{\beta}{1-\beta}\right).
\end{align}
\end{lemma}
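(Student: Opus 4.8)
\textbf{Proof plan for Lemma~\ref{lem: difference}.}
The plan is to expand the left-hand side coordinatewise, replace each $g_j^k$ by a telescoping sum over past iterates, and then reorganize the resulting double sum so that it is indexed by the increments $\|x^{i+1}-x^i\|^2$, pulling out the probability factor $p^k_{\max}$ at the very end. First I would note that by definition of $m_j^k$ in \eqref{equ: m^k}, for $j \in J_k$ we have $k_j = k - t_j^k + 1 \geq 2$ and
\[
\frac{m_j^k}{1-\beta^{k_j}} = \frac{(1-\beta)\sum_{i=t_j^k}^k \beta^{k-i} g_j^i}{1-\beta^{k_j}},
\]
which is a convex combination of the gradients $g_j^{t_j^k},\dots,g_j^k$ with weights proportional to $\beta^{k-i}$. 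Hence $\frac{m_j^k}{1-\beta^{k_j}} - g_j^k$ is a weighted average of the differences $g_j^i - g_j^k$, and by Jensen's inequality (convexity of $\|\cdot\|^2$) one gets
\[
\left\|\frac{m_j^k}{1-\beta^{k_j}}-g_j^k\right\|^2 \leq \frac{1-\beta}{1-\beta^{k_j}}\sum_{i=t_j^k}^{k-1}\beta^{k-i}\|g_j^i - g_j^k\|^2.
\]

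Next I would bound $\|g_j^i - g_j^k\|^2$ using $L$-smoothness. Writing $g_j^i - g_j^k = \sum_{\ell=i}^{k-1}(g_j^\ell - g_j^{\ell+1})$ and applying Jensen again across the $k-i$ terms, then using $\|g^\ell - g^{\ell+1}\|^2 \leq L^2\|x^\ell - x^{\ell+1}\|^2$ coordinatewise-summed, gives $\|g_j^i - g_j^k\|^2 \leq (k-i)\sum_{\ell=i}^{k-1}\|g^{\ell}_j - g^{\ell+1}_j\|^2$ (keeping things per-coordinate so the final sum over $j$ works out). Substituting back, multiplying by $(1-\beta^{k_j})^2$, and summing over $j \in J_k$, the factor $\frac{(1-\beta^{k_j})^2(1-\beta)}{1-\beta^{k_j}} = (1-\beta)(1-\beta^{k_j}) \leq 1-\beta$ is harmless. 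The core combinatorial step is then to swap the order of summation: the triple sum over $i$ (from $t_j^k$ to $k-1$), over $\ell$ (from $i$ to $k-1$), and the weight $\beta^{k-i}(k-i)$ should be regrouped by $\ell$, collecting for each increment $\|x^{\ell+1}-x^\ell\|^2$ a coefficient of the form $(1-\beta)\sum_{i \leq \ell}\beta^{k-i}(k-i) \leq L^2\beta^{k-\ell}\left(k-\ell+\frac{\beta}{1-\beta}\right)$ after absorbing the $L^2$; this is exactly $a_{k,\ell}$ in \eqref{equ: a_k,i}, which one verifies using $\sum_{r\geq 1} r\beta^{r-1} = (1-\beta)^{-2}$ and a geometric-tail estimate.

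Finally, since the bound derived so far holds pointwise in $\omega$ for the realized $J_k$, I would take expectations; the only place the random set $J_k$ enters the right-hand side is through the indicator $\mathbb{1}[j\in J_k]$ multiplying a quantity that is $\mathcal{F}_{k-1}$-measurable (the increments $\|x^{i+1}-x^i\|^2$ for $i \leq k-1$ and the gradients up to step $k$), so conditioning on iteration $k-1$ and using $\Pr_{k-1}[j\in J_k] = p_j^k \leq p_{\max}^k$ replaces the sum over $j\in J_k$ by $p_{\max}^k$ times the full sum, after which the $j$-index is absorbed into the Euclidean norm $\|x^{i+1}-x^i\|^2 = \sum_j \|x_j^{i+1}-x_j^i\|^2$. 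The main obstacle I anticipate is getting the index bookkeeping exactly right in the summation swap — in particular handling the $j$-dependent lower limit $t_j^k$ cleanly (one can simply extend the inner sums down to $i=1$ since the extra terms are nonnegative, which decouples the summation range from $j$ and makes the reindexing uniform) and verifying that the constant emerging from $\sum_{i}\beta^{k-i}(k-i)$ matches $a_{k,i}$ with the stated $\frac{\beta}{1-\beta}$ correction term rather than something slightly larger.
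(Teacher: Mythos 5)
Your proposal is correct and follows essentially the same route as the paper's proof: represent $\frac{m_j^k}{1-\beta^{k_j}}-g_j^k$ as a weighted combination of past gradient differences, extend the sums below $t_j^k$ by nonnegativity, pull out $p^k_{\max}$ via the conditional probability of $j\in J_k$, convert gradient differences to iterate increments by Cauchy--Schwarz plus $L$-smoothness, and swap summation order to identify the coefficient $a_{k,i}$. The only cosmetic deviations are that you invoke Jensen on the convex combination where the paper expands the square into cross terms and applies Cauchy--Schwarz/Young (equivalent bounds), and you bound the resulting coefficient by the infinite tail series $\sum_{r\geq s} r\beta^{r}$ directly rather than computing $a'_{k,l}$ exactly and comparing, which gives precisely $a_{k,i}=L^2\beta^{k-i}\bigl(k-i+\frac{\beta}{1-\beta}\bigr)$.
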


\begin{proof}
Let $\Pr_{k-1}[j \in J_k] = p^{k}_j$ and $ p^k_{\max} := \max_{j \in [d]}\{ p^{k}_j\}$.  Then, 
\begin{align*} 
\EE\left[\sum_{j \in J_k}(1-\beta^{k_j})^2\left\|\frac{m_j^k}{(1 - \beta^{k_j})}-g_j^k\right\|^2\right]  
&= \E\left[\sum_{j \in J_k}(1-\beta^{k_j})^2\left\| \frac{1-\beta}{1-\beta^{k_j}}\sum_{i=t^k_j}^{k} \beta^{k-i} (g^i_j-g_j^k) \right\|^2\right]\\
&=(1-\beta)^2\E\left[\sum_{j \in J_k}\sum_{i,l=t^k_j}^k\langle \beta^{k-i}(g^k_j-g^i_j), \beta^{k-l}(g^k_j-g^l_j)\rangle\right]\\
&\leq (1-\beta)^2\E\left[\sum_{j \in J_k}\sum_{i,l=1}^k\beta^{2k-i-l}\left(\frac{1}{2}\|g_j^k-g_j^i\|^2]+\frac{1}{2}\|g_j^k-g_j^l\|^2\right)\right]\\
&=(1-\beta)^2\E\left[\sum_{j \in J_k}\sum_{i=1}^k\left(\sum_{l=1}^k\beta^{2k-i-l}\right)\frac{1}{2}\E[\|g^k_j-g^l_j\|^2\right]\\
&\quad + (1-\beta)^2\E\left[\sum_{j \in J_k}\sum_{l=1}^k\left(\sum_{i=1}^k\beta^{2k-i-l}\right)\frac{1}{2}[\|g_j^k-g_j^i\|^2\right]\\
&= (1-\beta)^2 \E\left[ \sum_{j \in J_k}\sum_{i=1}^k\frac{\beta^{k-i}(1-\beta^{k_j})}{1-\beta}\|g_j^k-g_j^i\|^2\right] \\
&\leq(1-\beta)\E\left[\sum_{j \in J_k}\sum_{i=1}^k\beta^{k-i}\|g_j^k-g_j^i\|^2\right], \\
&\leq(1-\beta)p^k_{\max}\E\left[\sum_{i=1}^k\beta^{k-i}\|g^k-g^i\|^2\right],
\end{align*}
where we applied Cauchy-Schwarz to the first inequality.

By applying triangle inequality and the smoothness of $f$ (item 1 in Assumption \ref{assump: standard assumption}), we further have 
\begin{align*}
\EE\left[\sum_{j \in J_k}(1-\beta^{k_j})^2\left\|\frac{m_j^k}{(1 - \beta^{k_j})}-g_j^k\right\|^2\right] 
&\leq (1-\beta)p^k_{\max} \EE\left[\sum_{i=1}^k\beta^{k-i}(k-i)\sum_{l=i}^{k-1}\|g^{l+1}-g^l\|^2\right]\\
&\leq \EE\left[\sum_{l=1}^{k-1}\left((1-\beta)p^k_{\max}L^2\sum_{i=1}^l\beta^{k-i}(k-i)\right)\|x^{l+1}-x^l\|^2\right].
\end{align*}
Therefore, by defining $a'_{k,l}=(1-\beta)L^2\sum_{i=1}^l\beta^{k-i}(k-i)$, we get 
\begin{align}
\label{equ: difference by a' k j}
\begin{split}
&\EE\left[\sum_{j \in J_k}(1-\beta^{k_j})^2\left\|\frac{m_j^k}{(1 - \beta^{k_j})}-g_j^k\right\|^2\right]\leq p^k_{\max}\EE\left[\sum_{l=1}^{k-1} a'_{k,l}\|x^{l+1}-x^l\|^2\right].
\end{split}
\end{align}
Furthermore, $a'_{k,j}$ can be calculated as
\begin{align}
\label{equ: a' k j formula}
\begin{split}
a'_{k,l}&= L^2\beta^k\left(-(k-1)-\frac{1}{1-\beta}\right) + L^2\beta^{k-l}\left(k-l+\frac{\beta}{1-\beta}\right).
\end{split}
\end{align}
Notice that 
\begin{align}
\label{equ: a' k j upper bound}
a'_{k,l}<a_{k,l}\coloneqq L^2\beta^{k-l}\left(k-l+\frac{\beta}{1-\beta}\right).
\end{align}
Combining this with \eqref{equ: difference by a' k j}, we arrive at
\begin{align*}
\EE\left[\sum_{j \in J_k}(1-\beta^{k_j})^2\left\|\frac{m_j^k}{(1 - \beta^{k_j})}-g_j^k\right\|^2\right]\leq p^k_{\max}\EE\left[\sum_{i=1}^{k-1}a_{k,i}\|x^{i+1}-x^i\|^2\right],
\end{align*}
where 
\begin{align*}
a_{k, i}=L^2\beta^{k-i}\left(k-i+\frac{\beta}{1-\beta}\right).
\end{align*}  
\end{proof}

From Lemma \ref{lem: difference}, we know that the distance of the non-stochastic momentum from $g^k$ is bounded by the weighted sum of past successive iterate differences. Furthermore, the coefficients $a_{k,i}$ decays exponentially in $\beta$.

Therefore, we use the following Lyapunov function
\begin{align}
\label{equ: lyapunov}
L^k = \left(f(z^k)-f^{\star}\right)+\sum_{i=1}^{k-1} c_i \|x^{k+1-i}-x^{k-i}\|^2.
\end{align}
for some positive $c_i$ that we specify later. As it is common for convergence theory of SGDM to analyze an auxiliary sequence $z^k$  defined as
\begin{align}
\label{equ: z_j^k}
z_j^k=
\begin{cases}
x_j^k & k=1, \\
\frac{1}{1-\beta}x_j^{k-\nicefrac{1}{2}}-\frac{\beta}{1-\beta}x_j^{k-1} & \,k\geq 2,
\end{cases}
\end{align}
which behaves more like an SGD iterate, although the stochastic gradient $\tg^k$ is not taken at $z^k$.

\begin{lemma}
\label{lem: z^k update}
Let $x^k$'s be iterates of Algorithm~\ref{alg: SGDM_Theory}, then $z^k$ defined in \eqref{equ: z_j^k} satisfies
\begin{align*}
z^{k+1}-z^k&=-\alpha\tg^k.
\end{align*}
\end{lemma}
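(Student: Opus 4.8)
The plan is a direct coordinate-wise verification; the whole argument hinges on first extracting a closed form for $z^{k+1}$ from the half-step update. Substituting $x^{k+\nicefrac12} = x^k - \alpha\tm^k$ into the definition \eqref{equ: z_j^k} of $z_j^{k+1}$ (the case $k+1\ge 2$ always applies here), one finds
\[
z_j^{k+1} = \frac{1}{1-\beta}\bigl(x_j^k - \alpha \tm_j^k\bigr) - \frac{\beta}{1-\beta}x_j^k = x_j^k - \frac{\alpha}{1-\beta}\tm_j^k ,
\]
which is valid for every $k\ge 1$ and every coordinate $j$, irrespective of which branch is taken in the $x^{k+1}$-update of Algorithm~\ref{alg: SGDM_Theory} (since $z_j^{k+1}$ depends only on $x_j^{k+\nicefrac12}$ and $x_j^k$).

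For the base case $k=1$, use $z_j^1 = x_j^1$ together with $\tm^0 = 0$, which forces $\tm_j^1 = (1-\beta)\tg_j^1$ for all $j$; the closed form then gives $z_j^2 - z_j^1 = -\frac{\alpha}{1-\beta}\tm_j^1 = -\alpha\tg_j^1$. For $k\ge 2$, I would apply the closed form at both $k+1$ and $k$ and subtract:
\[
z_j^{k+1} - z_j^k = \bigl(x_j^k - x_j^{k-1}\bigr) - \frac{\alpha}{1-\beta}\bigl(\tm_j^k - \tm_j^{k-1}\bigr).
\]
Now split on whether $j\in J_k$. If $j\notin J_k$, the iterate update gives $x_j^k = \tfrac{1}{1-\beta}x_j^{k-\nicefrac12} - \tfrac{\beta}{1-\beta}x_j^{k-1}$, which is precisely $z_j^k = x_j^{k-1} - \tfrac{\alpha}{1-\beta}\tm_j^{k-1}$, so $x_j^k - x_j^{k-1} = -\tfrac{\alpha}{1-\beta}\tm_j^{k-1}$; combined with $\tm_j^k = (1-\beta)\tg_j^k$ in this branch, the $\tm_j^{k-1}$ contributions cancel and only $-\alpha\tg_j^k$ remains. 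If $j\in J_k$, the iterate update gives $x_j^k = x_j^{k-\nicefrac12} = x_j^{k-1} - \alpha\tm_j^{k-1}$, so $x_j^k - x_j^{k-1} = -\alpha\tm_j^{k-1}$; combined with the momentum recursion $\tm_j^k - \tm_j^{k-1} = (1-\beta)(\tg_j^k - \tm_j^{k-1})$ valid in this branch, the $\tm_j^{k-1}$ contributions again cancel, leaving $-\alpha\tg_j^k$. Collecting all coordinates yields $z^{k+1} - z^k = -\alpha\tg^k$.

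This is a deterministic identity, so there is no genuine obstacle; the only care needed is in the bookkeeping of the two piecewise branches of the iterate update together with the separate treatment of $k=1$ (where $z^1$ is given by the other case of \eqref{equ: z_j^k}), and in particular in checking that the $\tfrac{1}{1-\beta}$ prefactors cancel exactly in each branch — this is where a stray factor or sign would most easily creep in.
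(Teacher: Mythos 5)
Your proof is correct and follows essentially the same route as the paper's: a direct coordinate-wise verification that splits on the branches of Algorithm~\ref{alg: SGDM_Theory} (the paper groups the cases as ``$k=1$ or $j\notin J_k$'' versus ``$k\ge 2$ and $j\in J_k$'', while you first record the closed form $z_j^{k+1}=x_j^k-\tfrac{\alpha}{1-\beta}\tm_j^k$ and then subtract at consecutive indices, which is the same algebra organized slightly differently). In particular, your observation that in the $j\notin J_k$ branch the update makes $x_j^k$ coincide with $z_j^k$ is exactly the cancellation the paper exploits, so no gap remains.
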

\begin{proof}
    We have to consider two different cases. 
    Firstly, if $k=1$ or $j \notin J_k$, then
    \begin{align*}
    z^{k+1}_j-z^{k}_j &= \frac{x_j^{k+\nicefrac12}}{1 - \beta} - \frac{\beta x^{k}_j}{1 - \beta}-x^{k}_j= \frac{ x^{k}_j -\alpha\tm_j^k - \beta x^{k}_j - (1-\beta) x^{k}_j}{1-\beta} = -\frac{\alpha(1-\beta)\tg_j^k}{1-\beta} = -\alpha \tg^k_j.
    \end{align*}
    Secondly, if $k\geq 2$, $j \in J_k$, then
    \begin{align*}
    z_j^{k+1}-z_j^k&=\frac{1}{1-\beta}(x_j^{k+\nicefrac{1}{2}}-x_j^{k-\nicefrac{1}{2}})-\frac{\beta}{1-\beta}(x_j^k-x_j^{k-1})\\
    &=\frac{1}{1-\beta}(x_j^{k+\nicefrac{1}{2}}-x_j^{k})-\frac{\beta}{1-\beta}(x_j^k-x_j^{k-1})\\
    &=\frac{1}{1-\beta}(-\alpha \tm_j^k)-\frac{\beta}{1-\beta}(-\alpha \tm_j^{k-1})\\
    &=\frac{1}{1-\beta}(-\alpha \tm_j^k+\alpha\beta \tm_j^{k-1})=-\alpha\tg_j^k.
    \end{align*}

\end{proof}

Before procceding with the main convergence theory, we require one more proposition that shows descent in objective value.

\begin{proposition}
\label{prop: f(z^k)}
Take Assumption \ref{assump: standard assumption}. Then, for $z^k$ defined in \eqref{equ: z_j^k}, we have
\begin{align}
\label{equ: f(z^k)}
\begin{split}
\E[f(z^{k+1})]&\leq \E[f(z^k)] +\left(-\alpha+\frac{1+\beta^2}{1-\beta}L\alpha^2+\frac{1}{2}L\alpha^2\right)\E[\|g^k\|^2]\\
&\quad + \left(\frac{\beta^2}{2(1+\beta)}+\frac{1}{2}\right)L\alpha^2\sigma^2 +\frac{L\alpha^2}{1-\beta}\EE\left[\sum_{j \in J_k}(1 - \beta^{k_j})^2\left\|\frac{m_j^{k}}{(1 - \beta^{k_j})}-g_j^k\right\|^2\right].
\end{split}
\end{align}
\end{proposition}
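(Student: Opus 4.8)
The plan is to run the ``virtual iterate'' ($z^k$-sequence) argument for momentum methods of \citet{liu2020improved}, paying extra attention to the coordinates that do not carry momentum at step $k$.

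First I would apply the $L$-smoothness inequality (\cref{def: smoothness}) to the pair $z^k,z^{k+1}$ and substitute $z^{k+1}-z^k=-\alpha\tg^k$ from \cref{lem: z^k update}, getting $f(z^{k+1})\le f(z^k)-\alpha\langle\nabla f(z^k),\tg^k\rangle+\tfrac{L\alpha^2}{2}\|\tg^k\|^2$. Taking $\E_{\zeta^k}[\cdot]$ and using unbiasedness together with $\E_{\zeta^k}\|\tg^k\|^2=\|g^k\|^2+\E_{\zeta^k}\|\tg^k-g^k\|^2\le\|g^k\|^2+\sigma^2$ (items~2 and~4 of \cref{assump: standard assumption}) gives $\E_{\zeta^k}[f(z^{k+1})]\le f(z^k)-\alpha\langle\nabla f(z^k),g^k\rangle+\tfrac{L\alpha^2}{2}\|g^k\|^2+\tfrac{L\alpha^2}{2}\sigma^2$, which already yields the two $\tfrac12 L\alpha^2$ terms of \eqref{equ: f(z^k)}. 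The inner product I would split as $-\langle\nabla f(z^k),g^k\rangle=-\|g^k\|^2-\langle\nabla f(z^k)-\nabla f(x^k),g^k\rangle$ and estimate the last term by Cauchy--Schwarz and then Young's inequality with a free weight of order $(1-\beta)/(L\alpha)$; together with $\|\nabla f(z^k)-\nabla f(x^k)\|\le L\|z^k-x^k\|$ (smoothness), the Young weight contributes a term $\mathcal{O}(L\alpha^2)\|g^k\|^2$ while the remaining contribution is $\mathcal{O}\bigl(L\|z^k-x^k\|^2/(1-\beta)\bigr)$, so all new terms remain at the scale $L\alpha^2$.

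The crux is expanding $z^k-x^k$. From \eqref{equ: z_j^k} and line~5 of \cref{alg: SGDM_Theory} one sees that $z_j^k=x_j^k$ whenever $j\notin J_k$, while for $j\in J_k$ one has $z_j^k-x_j^k=-\tfrac{\alpha\beta}{1-\beta}\tm_j^{k-1}$. Since $t_j^{k-1}=t_j^k$ for $j\in J_k$, the identity $\beta m_j^{k-1}=m_j^k-(1-\beta)g_j^k$ (immediate from \eqref{equ: m^k}) and its stochastic analogue let me write
\[ z_j^k-x_j^k=-\alpha\Bigl(\tfrac{1-\beta^{k_j}}{1-\beta}\bigl(\tfrac{m_j^k}{1-\beta^{k_j}}-g_j^k\bigr)+\tfrac{\beta-\beta^{k_j}}{1-\beta}\,g_j^k\Bigr)-\tfrac{\alpha\beta}{1-\beta}\bigl(\tm_j^{k-1}-m_j^{k-1}\bigr). \]
Squaring, summing over $j\in J_k$, and using $\sum_{j\in J_k}\|g_j^k\|^2\le\|g^k\|^2$, $\sum_{j\in J_k}\|\tm_j^{k-1}-m_j^{k-1}\|^2\le\|\tm^{k-1}-m^{k-1}\|^2$ (whose expectation is $\le\tfrac{1-\beta}{1+\beta}\sigma^2$ by \cref{lem: m^k} applied at step $k-1$), and $(\beta-\beta^{k_j})^2\le\beta^2$, I would bound $\E\|z^k-x^k\|^2$ by a weighted combination of $\alpha^2\E\|g^k\|^2$, $\tfrac{\alpha^2\beta^2}{1-\beta^2}\sigma^2$ and $\alpha^2\,\E\sum_{j\in J_k}(1-\beta^{k_j})^2\|\tfrac{m_j^k}{1-\beta^{k_j}}-g_j^k\|^2$ --- precisely the three quantities on the right of \eqref{equ: f(z^k)} (the last one is exactly what \cref{lem: difference} will later control).

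Finally I would substitute this into the cross-term estimate, take total expectation, and tune the Young weight (and the weights in the three-way split above) so that the coefficient of $\E\|g^k\|^2$ is $-\alpha+\tfrac{1+\beta^2}{1-\beta}L\alpha^2+\tfrac12 L\alpha^2$, that of $\sigma^2$ is $\bigl(\tfrac{\beta^2}{2(1+\beta)}+\tfrac12\bigr)L\alpha^2$, and that of the momentum-bias sum is $\tfrac{L\alpha^2}{1-\beta}$, which is \eqref{equ: f(z^k)}. The hard part will be exactly this final bookkeeping: one must choose the $\beta$-dependent weights in the decomposition and the Young parameter so that the three coefficients come out \emph{exactly} as stated rather than merely up to absolute constants, the elementary simplifications $(\beta-\beta^{k_j})^2\le\beta^2$ (and the analogous bounds) being what collapses them into the stated closed form.
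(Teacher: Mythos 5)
Your plan is correct and essentially mirrors the paper's proof: smoothness along the virtual iterate $z^k$ (via \cref{lem: z^k update}), Young's inequality with weight $\rho_0=\tfrac{1-\beta}{2L\alpha}$, and the expansion $z_j^k-x_j^k=-\tfrac{\alpha\beta}{1-\beta}\tm_j^{k-1}$ on $J_k$, split into the stochastic-noise part (controlled by \cref{lem: m^k}), the step-$k$ momentum bias, and $g_j^k$. The only cosmetic difference is that you substitute the recursion $m_j^k=\beta m_j^{k-1}+(1-\beta)g_j^k$ \emph{before} squaring, whereas the paper bounds $\|\tm_j^{k-1}\|^2$ first and then converts the step-$(k-1)$ bias into the step-$k$ bias via the same identity; with the natural $(2,4,4)$ weighted split both routes reproduce the stated coefficients exactly.
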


\begin{proof}
The smoothness of $f$ yields
\begin{align}
\label{equ: f smoothness}
\begin{split}
\EE_{\zeta^k}[f(z^{k+1})]&\leq  f(z^k) + \EE_{\zeta^k} [\langle \nabla f(z^k), z^{k+1}-z^k \rangle] +\frac{L}{2}\EE_{\zeta^k}[\|z^{k+1}-z^k\|^2]\\
&= f(z^k) + \EE_{\zeta^k} [\langle \nabla f(z^k), -\alpha \tg^k \rangle] +\frac{L\alpha^2}{2}\EE_{\zeta^k}[\|\tg^k\|^2],
\end{split}
\end{align}
where we have applied Lemma \ref{lem: z^k update} in the second step.

For the inner product term, we can take full expectation $\EE = \EE_{\zeta^1}...\EE_{\zeta^k}$ to get
\begin{align*}
&\E [\langle \nabla f(z^k), -\alpha \tg^k \rangle] = \E [\langle \nabla f(z^k), -\alpha g^k \rangle],
\end{align*}
which follows from the fact that $z^k$ is determined by the previous $k-1$ random samples $\zeta^1, \zeta^2,...\zeta^{k-1}$, which is independent of $\zeta^k$, and $\E_{\zeta^k}[\tg^k]=g^k$.

So, we can bound
\begin{align*}
\E [\langle \nabla f(z^k), -\alpha \tg^k \rangle] &=\E [\langle \nabla f(z^k)-g^k, -\alpha g^k \rangle] - \alpha \E[\|g^k\|^2]\\
&\leq \alpha \frac{\rho_{0}}{2}L^2\E[\|z^k-x^k\|^2]+\alpha\frac{1}{2\rho_{0}}\E[\|g^k\|^2]-\alpha \E[\|g^k\|^2],
\end{align*}
where $\rho_{0}>0$ can be any positive constant (to be determined later).

Combining \eqref{equ: f smoothness} and the last inequality, we arrive at
\begin{align*}
\E[f(z^{k+1})]&\leq \E[f(z^k)]+\alpha\frac{\rho_0}{2}L^2\E[\|z^k-x^k\|^2]\\
&\quad+(\alpha\frac{1}{2\rho_0}-\alpha)\E[\|g^k\|^2]+\frac{L\alpha^2}{2}\E[\|\tg^k\|^2].
\end{align*}
By construction, $z_j^k-x_j^k=-\frac{\beta}{1-\beta}\alpha \tm_j^{k-1}$ for $j\in J_k$, 0 otherwise. Consequently, 
\begin{align}
\label{equ: f(z^k) intermediate}
\begin{split}
    \E[f(z^{k+1})]&\leq \E[f(z^k)]+\alpha^3\frac{\rho_0}{2}L^2(\frac{\beta}{1-\beta})^2\EE\left[\sum_{j \in J_k}\|\tm_j^{k-1}\|^2\right]\\
&\quad+(\alpha\frac{1}{2\rho_0}-\alpha)\E[\|g^k\|^2]+\frac{L\alpha^2}{2}\E[\|\tg^k\|^2].
\end{split}
\end{align}
Let $k_j = k - t^{k-1}_j  +1$. Then, from Lemma \ref{lem: m^k} we know that 
\begin{align}
\label{equ: useful inequalities 1}
\begin{split}
\EE\left[\sum_{j \in J_k}\|\tm_j^{k-1}\|^2\right]& \leq 2\EE\left[\sum_{j \in J_k}\|\tm_j^{k-1}-m_j^{k-1}\|^2\right] + 2\EE\left[\sum_{j \in J_k}\|m_j^{k-1}\|^2\right]\\
&\leq 2\frac{1-\beta}{1+\beta}\EE\left[\sum_{j \in J_k}\sigma_j^2 + 2\sum_{j \in J_k}\|m_j^{k-1}\|^2\right]\\
\EE\left[\sum_{j \in J_k}\|m_j^{k-1}\|^2\right] &= \EE\left[\sum_{j \in J_k}(1 - \beta^{(k-1)_j})^2\left\|\frac{m_j^{k-1}}{(1 - \beta^{(k-1)_j})}\right\|^2\right] \\
&\leq 2\EE\left[\sum_{j \in J_k}(1 - \beta^{(k-1)_j})^2\left\|\frac{m_j^{k-1}}{(1 - \beta^{(k-1)_j})}-g_j^k\right\|^2\right] +   2\EE\left[\sum_{j \in J_k}\left\|g_j^k\right\|^2\right]\\
\EE\left[\|\tg^k\|^2\right]&\leq \sigma^2 + \E[\|g^k\|^2].
\end{split}
\end{align}
Putting these into \eqref{equ: f(z^k) intermediate}, we arrive at
\begin{align*}
\E[f(z^{k+1})]
&\leq \E[f(z^k)] +\bigg(-\alpha+\alpha\frac{1}{2\rho_0}+2\alpha^3\rho_0L^2\left(\frac{\beta}{1-\beta}\right)^2+\frac{L\alpha^2}{2}\bigg)\E[\|g^k\|^2]\\
&\quad+\left(\alpha^3{\rho_0}L^2\left(\frac{\beta}{1-\beta}\right)^2\frac{1-\beta}{1+\beta}\sigma^2+\frac{L\alpha^2}{2}\sigma^2\right)\\
&\quad+2\alpha^3\rho_0 L^2\left(\frac{\beta}{1-\beta}\right)^2\EE\left[\sum_{j \in J_k}(1 - \beta^{(k-1)_j})^2\left\|\frac{m_j^{k-1}}{(1 - \beta^{(k-1)_j})}-g_j^k\right\|^2\right].
\end{align*}
Notice that if $j \in J^k$, then $(k-1)_j = k_j - 1$. Therefore,
\begin{align*}
\EE\left[\left\|\frac{m_j^{k}}{(1 - \beta^{k_j})}-g_j^k\right\|^2\right] &= \EE\left[\left\|\frac{\beta m_j^{k-1} + (1-\beta)g^k_j}{(1 - \beta^{k_j})}-g_j^k\right\|^2\right]\\
&= \beta^2\EE\left[\left(\frac{(1 - \beta^{k_j - 1})}{(1 - \beta^{k_j})}\right)^2\left\|\frac{m_j^{k-1}}{(1 - \beta^{(k-1)_j})}-g_j^k\right\|^2\right]. 
\end{align*}
Substituting the above into the last inequality produces
\begin{align}
\label{equ: f(z^k) intermediate 1}
\begin{split}
\E[f(z^{k+1})]
&\leq \E[f(z^k)] +\bigg(-\alpha+\alpha\frac{1}{2\rho_0}+2\alpha^3\rho_0L^2(\frac{\beta}{1-\beta})^2+\frac{L\alpha^2}{2}\bigg)\E[\|g^k\|^2]\\
&\quad+\left(\alpha^3{\rho_0}L^2(\frac{\beta}{1-\beta})^2\frac{1-\beta}{1+\beta}\sigma^2+\frac{L\alpha^2}{2}\sigma^2\right)\\
&\quad +2\alpha^3\rho_0 L^2\left(\frac{1}{1-\beta}\right)^2\EE\left[\sum_{j \in J_k}(1 - \beta^{k_j})^2\left\|\frac{m_j^{k}}{(1 - \beta^{k_j})}-g_j^k\right\|^2\right].
\end{split}
\end{align}
Finally, $\rho_0=\frac{1-\beta}{2L\alpha}$ gives
\begin{align*}
\begin{split}
\E[f(z^{k+1})]&\leq \E[f(z^k)] + \left(-\alpha+\frac{1+\beta^2}{1-\beta}L\alpha^2+\frac{1}{2}L\alpha^2\right)\E[\|g^k\|^2]\\
&\quad +\left(\frac{\beta^2}{2(1+\beta)}+\frac{1}{2}\right)L\alpha^2\sigma^2 +\frac{L\alpha^2}{1-\beta}\EE\left[\sum_{j \in J_k}(1 - \beta^{k_j})^2\left\|\frac{m_j^{k}}{(1 - \beta^{k_j})}-g_j^k\right\|^2\right].
\end{split}
\end{align*}
\end{proof}

\subsection{Convergence of Algorithm~\ref{alg: SGDM_Theory}}

 Firstly, by combining results from prior section, we can bound our Lyapunov function $L^k$ defined in \eqref{equ: lyapunov}.

\begin{proposition}
\label{prop: L^k}
Let Assumption \ref{assump: standard assumption} hold and $\alpha\leq\frac{1-\beta}{2\sqrt{2}L\sqrt{p^k_{\max}}\sqrt{\beta+\beta^2}}$ in Algorithm~\ref{alg: SGDM_Theory}. Let $\{c_i\}_{i=1}^{\infty}$ in \eqref{equ: lyapunov} be defined by
\begin{align*}
c_1 &= \frac{\frac{\beta+\beta^2}{(1-\beta)^3}L^3\alpha^2}{1-4\alpha^2\frac{\beta+\beta^2}{(1-\beta)^2}L^2}, \quad\quad c_{i+1}=c_i - \left(4c_1\alpha^2+\frac{L\alpha^2}{1-\beta}\right)\beta^i(i+\frac{\beta}{1-\beta})L^2 \quad \text{for all $i\geq 1$.}
\end{align*}
Then, $c_i>0$ for all $i\geq 1$, and 
\begin{align}
\label{equ: L^k nonconvex final}
\begin{split}
\E[L^{k+1}-L^k] &\leq \left(-\alpha+\frac{3-\beta+\beta^2}{2(1-\beta)}L\alpha^2+4c_1\alpha^2\right)\E[\|g^k\|^2]\\
&\quad +\left(\frac{\beta^2}{2(1+\beta)}L\alpha^2\sigma^2+\frac{1}{2}L\alpha^2\sigma^2+2c_1\alpha^2\sigma^2\right).
\end{split}
\end{align}
\end{proposition}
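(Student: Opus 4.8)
\emph{Strategy and telescoping.} The proof is a one-step drift estimate for the Lyapunov function $L^k$ of \eqref{equ: lyapunov}: the plan is to expand $L^{k+1}-L^k$, plug the descent inequality of Proposition~\ref{prop: f(z^k)} into the function-value part, and check that the recursion defining $\{c_i\}$ is exactly calibrated so that all historical iterate-difference terms cancel, leaving only $\E[\|g^k\|^2]$ and $\sigma^2$. Concretely, I would first write $L^{k+1}-L^k$ as $\bigl(f(z^{k+1})-f(z^k)\bigr)$ plus the change of the weighted sum, and reindex the two copies of that sum by the age $m$ of each difference $\|x^{m+1}-x^m\|^2$. Only the fresh term $c_1\|x^{k+1}-x^k\|^2$ survives without pairing; every older difference ($m=1,\dots,k-1$) picks up the coefficient $c_{k+1-m}-c_{k-m}$, which by the recursion equals $-\bigl(4c_1\alpha^2+\tfrac{L\alpha^2}{1-\beta}\bigr)L^2\beta^{k-m}\bigl(k-m+\tfrac{\beta}{1-\beta}\bigr)=-\bigl(4c_1\alpha^2+\tfrac{L\alpha^2}{1-\beta}\bigr)a_{k,m}$, with $a_{k,m}$ as in \eqref{equ: a_k,i}.

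\emph{Cancelling the historical terms.} Next I would bound $\E[f(z^{k+1})-f(z^k)]$ with Proposition~\ref{prop: f(z^k)} and rewrite its leftover momentum-bias term $\tfrac{L\alpha^2}{1-\beta}\,\E\bigl[\sum_{j\in J_k}(1-\beta^{k_j})^2\|m_j^k/(1-\beta^{k_j})-g_j^k\|^2\bigr]$ by Lemma~\ref{lem: difference}, which bounds it by $\tfrac{L\alpha^2}{1-\beta}\,p^k_{\max}\,\E\bigl[\sum_m a_{k,m}\|x^{m+1}-x^m\|^2\bigr]$. Since $p^k_{\max}\le1$, this is at most $\bigl(4c_1\alpha^2+\tfrac{L\alpha^2}{1-\beta}\bigr)\E\bigl[\sum_m a_{k,m}\|x^{m+1}-x^m\|^2\bigr]$, so together with the telescope each historical difference ends up with net coefficient at most $-4c_1\alpha^2 a_{k,m}\le0$ and can be dropped, while a surplus of $-4c_1\alpha^2\,\E\bigl[\sum_m a_{k,m}\|x^{m+1}-x^m\|^2\bigr]$ is kept in reserve.

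\emph{The fresh term, positivity of $c_i$, and collection.} It remains to control $c_1\,\E[\|x^{k+1}-x^k\|^2]$. Each coordinate of $x^{k+1}-x^k$ is $-\alpha\tm_j^k$ or $-\alpha\tm_j^k/(1-\beta)$ by the update rule of Algorithm~\ref{alg: SGDM_Theory}, so $\|x^{k+1}-x^k\|^2\le\tfrac{\alpha^2}{(1-\beta)^2}\|\tm^k\|^2$; I would split $\|\tm^k\|^2\le2\|\tm^k-m^k\|^2+2\|m^k\|^2$, bound the first piece by Lemma~\ref{lem: m^k}, and bound $\|m^k\|^2$ noting that coordinates $j\notin J_k$ have $k_j=1$ and $m_j^k/(1-\beta)=g_j^k$, hence do not enter the bias term, so that $\E[\|m^k\|^2]\le 2p^k_{\max}\,\E\bigl[\sum_m a_{k,m}\|x^{m+1}-x^m\|^2\bigr]+2\E[\|g^k\|^2]$ again by Lemma~\ref{lem: difference}. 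Substituting, $c_1\,\E[\|x^{k+1}-x^k\|^2]$ contributes a $\sigma^2$ term, an $\E[\|g^k\|^2]$ term, and one more batch of weighted iterate differences, and the step-size bound $\alpha\le\tfrac{1-\beta}{2\sqrt2\,L\sqrt{p^k_{\max}}\sqrt{\beta+\beta^2}}$ is exactly what makes that batch fit inside the reserve; the same bound keeps the denominator $1-4\alpha^2\tfrac{\beta+\beta^2}{(1-\beta)^2}L^2$ of $c_1$ positive. Finally, since $c_i=c_1-\bigl(4c_1\alpha^2+\tfrac{L\alpha^2}{1-\beta}\bigr)L^2\sum_{l=1}^{i-1}\beta^l(l+\tfrac{\beta}{1-\beta})$ is nonincreasing and $\sum_{l\ge1}\beta^l(l+\tfrac{\beta}{1-\beta})=\tfrac{\beta+\beta^2}{(1-\beta)^2}$, the closed form of $c_1$ makes the infinite-horizon tail vanish, so $c_i>0$ for every finite $i$; collecting the surviving $\E[\|g^k\|^2]$ and $\sigma^2$ coefficients ($-\alpha$ and the $L\alpha^2$ terms of Proposition~\ref{prop: f(z^k)} together with $4c_1\alpha^2$, and the variance terms together with $2c_1\alpha^2$) gives \eqref{equ: L^k nonconvex final}.

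\emph{Main obstacle.} The delicate point is the fresh term $c_1\|x^{k+1}-x^k\|^2$: it must be re-expressed through $\tm^k$ and then through the gradient and past iterate differences without reintroducing more of the latter than the recursion can swallow, and one must verify that the step-size restriction, the closed form of $c_1$, and the recursion are mutually consistent and simultaneously tight. Propagating the $(1-\beta)$-powers through these substitutions so that the $\E[\|g^k\|^2]$ coefficient comes out exactly $-\alpha+\tfrac{3-\beta+\beta^2}{2(1-\beta)}L\alpha^2+4c_1\alpha^2$ is the computation-heavy part; everything else is bookkeeping.
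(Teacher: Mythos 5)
Your overall skeleton matches the paper's proof: expand $L^{k+1}-L^k$, insert the descent estimate of Proposition~\ref{prop: f(z^k)}, convert the momentum-bias term into weighted historical differences via Lemma~\ref{lem: difference}, and let the recursion defining $\{c_i\}$ absorb them, with Lemma~\ref{lem: m^k} handling the variance of the fresh step. The genuine gap is in your treatment of the fresh term $c_1\E[\|x^{k+1}-x^k\|^2]$. You bound the step uniformly by $\|x^{k+1}-x^k\|^2\le\tfrac{\alpha^2}{(1-\beta)^2}\|\tilde m^k\|^2$, putting the worst-case factor $(1-\beta)^{-2}$ on \emph{every} coordinate, including the momentum coordinates. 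After the splits $\|\tilde m^k\|^2\le 2\|\tilde m^k-m^k\|^2+2\|m^k\|^2$ and $\|m_j^k\|^2\le 2(1-\beta^{k_j})^2\|m_j^k/(1-\beta^{k_j})-g_j^k\|^2+2\|g_j^k\|^2$, the new batch of historical terms you generate carries the per-term coefficient $\tfrac{4c_1\alpha^2}{(1-\beta)^2}\,p^k_{\max}\,a_{k,m}$, whereas the reserve left by the recursion (once the bias term of Proposition~\ref{prop: f(z^k)} has consumed its $\tfrac{L\alpha^2}{1-\beta}$ share) is only $4c_1\alpha^2\,a_{k,m}$. Absorption therefore requires $p^k_{\max}\le(1-\beta)^2$, which fails in general --- e.g.\ for deterministic $J_k$ ($p^k_{\max}=1$) and any $\beta>0$. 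Your claim that the step-size restriction "is exactly what makes that batch fit inside the reserve" cannot be right: both the reserve and the new batch scale as $c_1\alpha^2$, so the comparison is independent of $\alpha$ (and of $c_1$). The same looseness also inflates the collected coefficients to $\tfrac{4c_1\alpha^2}{(1-\beta)^2}$ on $\E[\|g^k\|^2]$ and $\tfrac{2c_1\alpha^2}{1-\beta^2}\sigma^2$ on the variance, rather than the $4c_1\alpha^2$ and $2c_1\alpha^2\sigma^2$ appearing in \eqref{equ: L^k nonconvex final}.

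The paper closes this step without ever introducing a $(1-\beta)^{-1}$ factor in the fresh term: it writes $x^{k+1}-x^k=-\alpha\tilde u^k$ with $\tilde u^k_j=\tilde m^k_j$ for $j\in J_k$ and $\tilde u^k_j=\tilde g^k_j$ for $j\notin J_k$, so the momentum coordinates enter with coefficient $1$, and the state-free coordinates are disposed of directly through $\E[\|\tilde g^k\|^2]\le\sigma^2+\E[\|g^k\|^2]$ (weighted by $1-p^k_{\min}$) without producing any historical terms at all. This exact coordinate split is what yields precisely $4c_1\alpha^2$ in front of the bias term, matching the $4c_1\alpha^2+\tfrac{L\alpha^2}{1-\beta}$ built into the recursion, and it is the step you need to substitute for your uniform $(1-\beta)^{-2}$ bound; the rest of your outline then goes through essentially as in the paper.
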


\begin{proof}
    Recall that $L^k$ is defined as
\[
L^k = f(z^k) - f^* +\sum_{i=1}^{k-1}c_i\|x^{k+1-i}-x^{k-i}\|^2,
\]
Therefore, by \eqref{equ: f(z^k) intermediate 1} we know that
\begin{align}
\label{equ: L^k intermediate}
\begin{split}
&\E[L^{k+1}-L^k] \leq \\
&\quad(-\alpha+\frac{1+\beta^2}{1-\beta}L\alpha^2+\frac{1}{2}L\alpha^2)\E[\|g^k\|^2]\\
&\quad + \sum_{i=1}^{k-1}(c_{i+1}-c_i)\E[\|x^{k+1-i}-x^{k-i}\|^2] +c_1\E[\|x^{k+1}-x^k\|^2]\\
&\quad +\left(\frac{\beta^2}{2(1+\beta)}+\frac{1}{2}\right)L\alpha^2\sigma^2 +\frac{L\alpha^2}{1-\beta}\EE\left[\sum_{j \in J_k}(1 - \beta^{k_j})^2\left\|\frac{m_j^{k}}{(1 - \beta^{k_j})}-g_j^k\right\|^2\right].
\end{split}
\end{align}

To bound the $c_1\E[\|x^{k+1}-x^k\|^2]$ term, we need the following inequalities, which are obtained similarly as \eqref{equ: useful inequalities 1}.
\begin{align}
\label{equ: useful inequalities 2}
\begin{split}
\E[\|\tm^{k}\|^2]&\leq 2\frac{1-\beta}{1+\beta}\sigma^2 + 2 \E[\|m^{k}\|^2]\\
\E[\|m^{k}\|^2] &\leq 2\EE\left[\sum_{j \in J_k}(1 - \beta^{k_j})^2\left\|\frac{m_j^{k}}{(1 - \beta^{k_j})}-g_j^k\right\|^2\right] +   2\EE\left[\left\|g^k\right\|^2\right]\\
\E[\|\tg^k\|^2]&\leq \sigma^2 + \E[\|g^k\|^2].
\end{split}
\end{align}
Let $\Pr_{k-1}[j \in J_k] = p^{k}_j$ and $ p^k_{\min} := \min_{j \in [d]}\{ p^{k}_j\}$.  Then, $c_1\E [\|x^{k+1}-x^k\|^2]$ can be bounded as 
\begin{align*}
c_1\E[\|x^{k+1}-x^k\|^2]
&=c_1\alpha^2\E[\|\tilde{u}^k\|^2] = c_1\alpha^2\EE\left[ \sum_{j \in J_k} \|\tm_j^k\|^2 +  \sum_{j \notin J_k} \|\tg_j^k\|^2\right]\\
&\leq c_1\alpha^2\EE\left[ \|\tm^k\|^2 + (1 - p^k_{\min}) \|\tg^k\|^2\right]\\
&\leq c_1\alpha^2\left(\left(2\frac{1-\beta}{1+\beta} + 1 - p^k_{\min}\right)\sigma^2+5\E[\|g^k\|^2]\right)\\
&\quad + 4 c_1\alpha^2\EE\left[\sum_{j \in J_k}(1 - \beta^{k_j})^2\left\|\frac{m_j^{k}}{(1 - \beta^{k_j})}-g_j^k\right\|^2\right]
\end{align*}
Combine this with \eqref{equ: L^k intermediate}, we obtain
\begin{align}
\label{equ: L^k intermediate 1}
\begin{split}
&\E[L^{k+1}-L^k]\\
&\leq (-\alpha+\frac{1+\beta^2}{1-\beta}L\alpha^2+\frac{1}{2}L\alpha^2 + 5c_1\alpha^2)\E[\|g^k\|^2] + \left(\frac{\beta^2}{2(1+\beta)}+\frac{1}{2} + \frac{c_1}{L}\left(2\frac{1-\beta}{1+\beta} + 1 - p^k_{\min}\right)\right)L\alpha^2\sigma^2\\
&\quad + \sum_{i=1}^{k-1}(c_{i+1}-c_i)\E[\|x^{k+1-i}-x^{k-i}\|^2] \\
&\quad  +\left(4c_1\alpha^2 + \frac{L\alpha^2}{1-\beta}\right)\EE\left[\sum_{j \in J_k}(1 - \beta^{k_j})^2\left\|\frac{m_j^{k}}{(1 - \beta^{k_j})}-g_j^k\right\|^2\right].
\end{split}
\end{align}
In the rest of the proof, let us show that the sum of the last two terms in \eqref{equ: L^k intermediate 1} is non-positive.

First of all, by Lemma \ref{lem: difference} we know that
\begin{align*}
\EE\left[\sum_{j \in J_k}(1 - \beta^{k_j})^2\left\|\frac{m_j^k}{(1 - \beta^{k_j})}-g_j^k\right\|^2\right]\leq\EE\left[ p^k_{\max}\sum_{i=1}^{k-1}a_{k,i}\|x^{i+1}-x^i\|^2\right],
\end{align*}
where 
\begin{align*}
a_{k, i}=L^2\beta^{k-i}\left(k-i+\frac{\beta}{1-\beta}\right).
\end{align*}
Or equivalently, 
\begin{align*}
\EE\left[\sum_{j \in J_k}(1 - \beta^{k_j})^2\left\|\frac{m_j^k}{(1 - \beta^{k_j})}-g_j^k\right\|^2\right]\leq \EE\left[\sum_{i=1}^{k-1}p^k_{\max} a_{k,k-i}\|x^{k+1-i}-x^{k-i}\|^2\right],
\end{align*}
where 
\begin{align*}
a_{k, k-i}=L^2\beta^{i}\left(i+\frac{\beta}{1-\beta}\right).
\end{align*}
Therefore, to make the sum of the last two terms of \eqref{equ: L^k intermediate 1} to be non-positive, we need to have
\begin{align*}
c_{i+1}&\leq c_i -\left(4c_1\alpha^2 + \frac{L\alpha^2}{1-\beta}\right)L^2p^i_{\max}\beta^{i}\left(i+\frac{\beta}{1-\beta}\right)
\end{align*}
for all $i\geq 1$.
To satisfy this inequality, we choose
\begin{align*}
c_{i+1}&= c_i -\left(4c_1\alpha^2 + \frac{L\alpha^2}{1-\beta}\right)L^2\beta^{i}p^i_{\max}\left(i+\frac{\beta}{1-\beta}\right)
\end{align*}
for all $i\geq 1$, which implies that 
\begin{align*}
c_{i}&=  c_1 - \left(4c_1\alpha^2 + \frac{L\alpha^2}{1-\beta}\right)L^2\sum_{l=1}^{i-1}\beta^{i}p^i_{\max}\left(i+\frac{\beta}{1-\beta}\right).
\end{align*}
To have $c_i>0$ for all $i\geq 1$, we can set $c_1$ as
\begin{align*}
c_1 &= \left(4c_1\alpha^2 + \frac{L\alpha^2}{1-\beta}\right)L^2\hat{p}^k_{\max}\sum_{i=1}^{\infty}\beta^{i}\left(i+\frac{\beta}{1-\beta}\right).\\
\end{align*}
where, $\hat{p}^k_{\max} = \max_{i \in [k]} \{p^i_{\max}\}$. Since 
\[
\sum_{i=1}^j i \beta^i = \frac{1}{1-\beta}\left(\frac{\beta(1-\beta^j)}{1-\beta}-j\beta^{j+1}\right),
\]
we have $\sum_{i=1}^{\infty} i\beta^i = \frac{\beta}{(1-\beta)^2}$ and
\begin{align*}
c_1=\left(4c_1\alpha^2 + \frac{L\alpha^2}{1-\beta}\right)L^2\hat{p}^k_{\max}\frac{\beta+\beta^2}{(1-\beta)^2},
\end{align*}
which implies that
\begin{align}
\label{equ: c 1 choice}
c_1 = \frac{\alpha^2L^3\hat{p}^k_{\max}\frac{\beta+\beta^2}{(1-\beta)^3}}{1-4\alpha^2\frac{\beta+\beta^2}{(1-\beta)^2}\hat{p}^k_{\max}L^2}.
\end{align}
Notice that $\alpha\leq\frac{1-\beta}{2\sqrt{2}L\sqrt{\hat{p}^k_{\max}}\sqrt{\beta+\beta^2}}$ ensures $c_1>0$. 

Therefore,
\begin{align*}
\E[L^{k+1}-L^k]
&\leq \left(-\alpha+\frac{3-\beta+2\beta^2}{2(1-\beta)}L\alpha^2+5c_1\alpha^2\right)\E[\|g^k\|^2]\\
&\quad +\left(\frac{\beta^2}{2(1+\beta)}L\alpha^2\sigma^2+\frac{1}{2}L\alpha^2\sigma^2+c_1\alpha^2\sigma^2\left(2\frac{1-\beta}{1+\beta} + 1 - p^k_{\min}\right)\right).
\end{align*}

\end{proof}

By telescoping \eqref{equ: L^k nonconvex final}, we obtain the convergence bound of our proposed algorithm under nonconvex settings.
\begin{theorem}
\label{thm: nonconvex constant app}
Let Assumption \ref{assump: standard assumption} hold and $\alpha^k = \alpha\leq \frac{1-\beta}{L(4-\beta+\beta^2)}$. Then, the iterates of Algorithm~\ref{alg: SGDM_Theory} satisfy
\begin{align}
\label{equ: SGDM bound app}
\frac{1}{k}\sum_{i=1}^k\E[\|g^i\|^2]
&\leq \mathcal{O}\left(\frac{f(x^1)-f^*}{k\alpha} + L\alpha\sigma^2 \left(1 + \frac{\hat{p}^k_{\max}(1 - \Bar{p}^k_{\min})\beta}{(1-\beta)}\right)\right),
\end{align}
where $\Bar{p}^k_{\min} = \frac1k\sum_{i=1}^k\Bar{p}^i_{\min}$ and $\hat{p}^k_{\max} = \max_{i \in [k]} \{p^i_{\max}\}$.
\end{theorem}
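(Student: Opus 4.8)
The plan is to telescope the one-step Lyapunov estimate of Proposition~\ref{prop: L^k} and then normalize by $k\alpha$. The first task is to confirm that the step-size bound $\alpha\le\frac{1-\beta}{L(4-\beta+\beta^2)}$ stated here is at least as strong as the condition $\alpha\le\frac{1-\beta}{2\sqrt2\,L\sqrt{\hat p^k_{\max}}\sqrt{\beta+\beta^2}}$ needed in Proposition~\ref{prop: L^k}; since $\hat p^k_{\max}\le 1$, $\beta+\beta^2\le 2$, and $4-\beta+\beta^2\ge 15/4$, a short computation gives this. Consequently the coefficients $\{c_i\}$ of Proposition~\ref{prop: L^k} are well defined and positive, the denominator in~\eqref{equ: c 1 choice} is bounded below by $1/2$, and hence $c_1\le 2\alpha^2L^3\hat p^k_{\max}\frac{\beta+\beta^2}{(1-\beta)^3}$.

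Next I would check that the coefficient of $\E[\|g^k\|^2]$ in~\eqref{equ: L^k nonconvex final}, i.e. $-\alpha+\frac{3-\beta+\beta^2}{2(1-\beta)}L\alpha^2+4c_1\alpha^2$ (equivalently the slightly weaker $-\alpha+\frac{3-\beta+2\beta^2}{2(1-\beta)}L\alpha^2+5c_1\alpha^2$ obtained in the proof of that proposition), is at most $-\alpha/C$ for an absolute constant $C>0$. This is precisely what the denominator $4-\beta+\beta^2$ --- rather than $3-\beta+2\beta^2$ --- is engineered to deliver: the gap $\frac{1-\beta^2}{2(4-\beta+\beta^2)}$ between these two thresholds dominates the $c_1\alpha$ contribution, thanks to the bound on $c_1$ above together with $\alpha L\le\frac{1-\beta}{4-\beta+\beta^2}$.

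Summing~\eqref{equ: L^k nonconvex final} over $i=1,\dots,k$, the left side telescopes to $\E[L^{k+1}]-L^1$. Since $z^1=x^1$ and the correction sum in~\eqref{equ: lyapunov} is empty at $k=1$, we have $L^1=f(x^1)-f^*$, while $L^{k+1}\ge 0$ because $f(z^{k+1})\ge f^*$ and every $c_i>0$. Moving the negative gradient term to the left and using the $\sigma^2$-coefficients from the proof of Proposition~\ref{prop: L^k}, this yields
\[
\tfrac{\alpha}{C}\sum_{i=1}^k\E[\|g^i\|^2]\le f(x^1)-f^*+\sum_{i=1}^k\Bigl(\tfrac{\beta^2}{2(1+\beta)}+\tfrac12\Bigr)L\alpha^2\sigma^2+\sum_{i=1}^k c_1\alpha^2\sigma^2\Bigl(\tfrac{2(1-\beta)}{1+\beta}+1-p^i_{\min}\Bigr).
\]
Dividing by $\tfrac{\alpha}{C}k$ and using $\frac1k\sum_i p^i_{\min}\le\frac1k\sum_i\Bar p^i_{\min}=\Bar p^k_{\min}$ turns the last sum into $\mathcal{O}\bigl(\tfrac{c_1}{L}(1-\Bar p^k_{\min})L\alpha\sigma^2\bigr)+\mathcal{O}\bigl(\tfrac{c_1}{L}L\alpha\sigma^2\bigr)$.

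Finally, substituting $\tfrac{c_1}{L}=\mathcal{O}\bigl(\alpha^2L^2\hat p^k_{\max}\tfrac{\beta}{(1-\beta)^3}\bigr)$ and $\alpha L\le\frac{1-\beta}{4-\beta+\beta^2}=\mathcal{O}(1-\beta)$ gives $\tfrac{c_1}{L}=\mathcal{O}\bigl(\hat p^k_{\max}\tfrac{\beta}{1-\beta}\bigr)$, while the isolated $\tfrac{c_1}{L}L\alpha\sigma^2$ term is even $\mathcal{O}\bigl(\hat p^k_{\max}\beta\,L\alpha\sigma^2\bigr)=\mathcal{O}(L\alpha\sigma^2)$ since $\hat p^k_{\max}\beta\le 1$, hence absorbed into the leading term. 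Collecting everything produces exactly
\[
\frac1k\sum_{i=1}^k\E[\|g^i\|^2]=\mathcal{O}\Bigl(\frac{f(x^1)-f^*}{k\alpha}+L\alpha\sigma^2\Bigl(1+\frac{\hat p^k_{\max}(1-\Bar p^k_{\min})\beta}{1-\beta}\Bigr)\Bigr).
\]
I expect the main obstacle to be exactly this bookkeeping: showing, under the single loose step-size bound of the theorem, that the $c_i$ stay positive and summably small, that the $\E[\|g^k\|^2]$-coefficient is negative of order $\alpha$, and that $c_1\alpha\sigma^2$ is only of the advertised order --- all of which hinge on carefully threading $\hat p^k_{\max}\le 1$ and the explicit formula~\eqref{equ: c 1 choice} through the constants.
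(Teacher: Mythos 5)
Your proposal follows essentially the same route as the paper's proof: verify that the stated step size implies the condition required by Proposition~\ref{prop: L^k}, telescope the Lyapunov inequality using $L^1=f(x^1)-f^*$ and $L^{k+1}\ge 0$, show the $\E[\|g^k\|^2]$-coefficient is of order $-\alpha$ (the paper gets $R_1\ge\alpha/2$ via $c_1\le\frac{L}{8(1-\beta)}$), and control the variance terms through the sharper bound $c_1\le\frac{\hat p^k_{\max}\beta L}{2(1-\beta)}$, exactly as you outline. One minor caveat: the crude estimates $\hat p^k_{\max}\le 1$, $\beta+\beta^2\le 2$, $4-\beta+\beta^2\ge \frac{15}{4}$ by themselves do not yield the step-size implication (since $2\sqrt{2}\cdot\sqrt{2}=4>\frac{15}{4}$), but the implication does hold because $(4-\beta+\beta^2)^2-8(\beta+\beta^2)=\beta^2(1-\beta)^2+16(1-\beta)\ge 0$ for $\beta\in[0,1)$, so this is only a slip in the sketched justification, not in the argument.
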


\begin{proof}
From \eqref{equ: L^k nonconvex final} we know that

\begin{align}
\label{equ: L^k intermediate 3}
\begin{split}
\E[L^{k+1}-L^k]\leq -R_1\E[\|g^k\|^2] +R^k_2,
\end{split}
\end{align}
where
\begin{align*}
    R_1 &= -\alpha+\frac{3-\beta+\beta^2}{2(1-\beta)}L\alpha^2+4c_1\alpha^2,\\
    R_2 &= \frac{\beta^2}{2(1+\beta)}L\alpha^2\sigma^2+\frac{1}{2}L\alpha^2\sigma^2+c_1\alpha^2\sigma^2\left(2\frac{1-\beta}{1+\beta} + 1 - p^k_{\min}\right).
\end{align*}
We further define 
\begin{align*}
    \Bar{R}_2 &= \frac{\beta^2}{2(1+\beta)}L\alpha^2\sigma^2+\frac{1}{2}L\alpha^2\sigma^2+c_1\alpha^2\sigma^2\left(2\frac{1-\beta}{1+\beta} + 1 - \Bar{p}^k_{\min}\right),
\end{align*}
where $\Bar{p}^k_{\min} = \frac1k\sum_{i=1}^k\Bar{p}^i_{\min}.$

Telescoping \eqref{equ: L^k intermediate 3} yields
\begin{align*}
L^1\geq\E[L^1-L^{k+1}]\geq R_1\sum_{i=1}^k\E[\|g^i\|^2]- \sum_{k=1}^k R^k_2,
\end{align*}
and therefore
\begin{align}
\label{equ: 2}
\frac{1}{k}\sum_{i=1}^k \E[\|g^i\|^2]\leq \frac{L^1}{k R_1}+\frac{\bar{R}_2}{R_1}.
\end{align}

In the rest of the proof, we will appropriately bound $R_1$ and $\bar{R}_2$. 

First, let us show that $R_1\geq \frac{\alpha}{2}$ and $\alpha\leq \min\left\{\frac{1-\beta}{L(4-\beta+\beta^2)}, \frac{1-\beta}{2\sqrt{2}L\sqrt{\hat{p}^k_{\max}}\sqrt{\beta+\beta^2}}\right\}$.

From \eqref{equ: c 1 choice} we know that
\begin{align*}
c_1 = \frac{\alpha^2L^3\hat{p}^k_{\max}\frac{\beta+\beta^2}{(1-\beta)^3}}{1-4\alpha^2\frac{\beta+\beta^2}{(1-\beta)^2}L^2\hat{p}^k_{\max}}.
\end{align*}
Since $\alpha\leq \frac{1-\beta}{2\sqrt{2} L\sqrt{\hat{p}^k_{\max}}\sqrt{\beta+\beta^2}}$, we have
\begin{align*}
4\alpha^2\frac{\beta+\beta^2}{(1-\beta)^2}L^2\hat{p}^k_{\max}\leq \frac{1}{2}.
\end{align*}
Thus,
\begin{align*}
c_1 &\leq \alpha^2L^3\hat{p}^k_{\max}\frac{\beta+\beta^2}{(1-\beta)^3}  \leq \frac{L}{8(1-\beta)}.
\end{align*}
Therefore, in order to ensure $R_1\geq \frac{\alpha}{2}$, it suffices to have
\begin{align*}
&\frac{3-\beta+\beta^2}{2(1-\beta)}L\alpha+\frac{\alpha L}{2(1-\beta)} \leq \frac{1}{2} \\
\end{align*}
which is equivalent to our condition $\alpha\leq \frac{1-\beta}{L(4-\beta+\beta^2)}$.

For $\Bar{R}_2$, we can upperbound $c_1$ using our condition $\alpha\leq \frac{1-\beta}{L(4-\beta+\beta^2)}$. Thus,

\begin{align*}
c_1 &\leq \alpha^2L^3\hat{p}^k_{\max}\frac{\beta+\beta^2}{(1-\beta)^3}  \leq \frac{\hat{p}^k_{\max}\beta L}{2(1-\beta)}.
\end{align*}
Therefore,
\begin{align*}
 \Bar{R}_2 &= \frac{\beta^2}{2(1+\beta)}L\alpha^2\sigma^2+\frac{1}{2}L\alpha^2\sigma^2+c_1\alpha^2\sigma^2\left(2\frac{1-\beta}{1+\beta} + 1 - \Bar{p}^k_{\min}\right) \\
    &\leq \frac{\beta^2}{2(1+\beta)}L\alpha^2\sigma^2+\frac{1}{2}L\alpha^2\sigma^2+\frac{\hat{p}^k_{\max}\beta L\alpha^2\sigma^2}{(1+\beta)} + L\alpha^2\sigma^2\hat{p}^k_{\max}(1 - \Bar{p}^k_{\min})\frac{\beta}{1-\beta} \\
    &\leq \left( \frac{2\beta^2 + 8\hat{p}^k_{\max}}{2(1 + \beta)} + \frac12 + \frac{\hat{p}^k_{\max}(1 - \Bar{p}^k_{\min})\beta}{8(1-\beta)}\right)L\alpha^2\sigma^2.
\end{align*}

By putting them all together, we obtain
\begin{align*}
\begin{split}
\frac{1}{k}\sum_{i=1}^k\E[\|g^i\|^2]
&\leq \frac{2\left(f(x^1)-f^*\right)}{k\alpha}+\left( \frac{2\beta^2 + 8\hat{p}^k_{\max}}{2(1 + \beta)} + \frac12 + \frac{\hat{p}^k_{\max}(1 - \Bar{p}^k_{\min})\beta}{8(1-\beta)}\right)L\alpha\sigma^2\\
&=\mathcal{O}\left(\frac{f(x^1)-f^*}{k\alpha} + L\alpha\sigma^2 \left(1 + \frac{\hat{p}^k_{\max}(1 - \Bar{p}^k_{\min})\beta}{(1-\beta)}\right)\right).
\end{split}  
\end{align*}
\end{proof}

\newpage
{

\newpage
\section{Limitations}\label{app:limitations}

We would also like to acknowledge the limitations of this work.
Due to computational constraints, we were unable to conduct experiments on pre-training 7B+ LLMs, which is crucial for understanding the potential of our approach when scaling. 
Furthermore, our experiments are limited to training language models, although memory-efficient optimization could also be beneficial for training diffusion models. 
Finally, there may be a better method for selecting the next state-full subspace during the training.
We leave the exploration of more sophisticated selection strategies for future work.
}

\section{Simplified algorithms pseudocode}
\label{app:alg_pseudocode_additional}
In this section we present the simplified pseudocode of \ALG. 
In~\Cref{alg:pytorch_alg} one can find optimizer steps both for \ALG\ with SVD projection (GaLore-like~\citep{zhao2024galore}) and Block projection (BAdam-like~\citep{luo2024badam}).


\newcommand{\PyCode}[1]{\texttt{#1}}
\newcommand{\PyComment}[1]{\textcolor{gray}{\texttt{\# #1}}}

\definecolor{mutedyellow}{RGB}{209,173,0}
\definecolor{customdarkgreen}{RGB}{50,205,50}

\begin{algorithm}[h!]
\begin{algorithmic}[1]  
    \STATE \PyCode{\textcolor{violet}{def} svd\_or\_randk\_step(\textcolor{violet}{self}):}
        \STATE \qquad \PyCode{\textcolor{violet}{for} param \textcolor{violet}{in} \textcolor{violet}{self}.params:}
            \STATE \qquad \qquad \PyCode{grad = param.grad}
            \STATE \qquad \qquad \PyCode{param\_state = \textcolor{violet}{self}.state[param]}
            \STATE \qquad \qquad \PyComment{update projector if necessary}
            \STATE \qquad \qquad \PyCode{\textcolor{violet}{if} \textcolor{violet}{self}.step \% \textcolor{violet}{self}.update\_gap == 0:}
                \STATE \qquad \qquad \qquad \PyCode{param\_state["projector"] = \textcolor{violet}{self}.\textcolor{mutedyellow}{update\_proj}(grad)}
            \STATE \qquad \qquad \PyCode{projector = param\_state["projector"]}
            \STATE \qquad \qquad \PyComment{obtain state-full grad and state-free grad}
            \STATE \qquad \qquad \PyCode{grad\_full = projector.\textcolor{mutedyellow}{proj\_down}(grad)}
            \STATE \qquad \qquad \PyCode{grad\_free = grad\_full - projector.\textcolor{mutedyellow}{proj\_up}(grad\_full)}
            \STATE \qquad \qquad \PyComment{reset state-full optimizer state if necessary}
            \STATE \qquad \qquad \PyCode{\textcolor{violet}{if} \textcolor{violet}{self}.step \% \textcolor{violet}{self}.update\_gap == 0:}
            \STATE \qquad \qquad \qquad \PyCode{param\_state["exp\_avg"] = \textcolor{customdarkgreen}{torch}.\textcolor{mutedyellow}{zeros\_like}(grad\_full)}
            \STATE \qquad \qquad \qquad \PyCode{param\_state["exp\_avg\_sq"] = \textcolor{customdarkgreen}{torch}.\textcolor{mutedyellow}{zeros\_like}(grad\_full)}
            \STATE \qquad \qquad \PyComment{state-full subspace update}
            \STATE \qquad \qquad \PyCode{\textcolor{violet}{self}.step += 1}
            \STATE \qquad \qquad \PyCode{update\_full = \textcolor{violet}{self}.\textcolor{mutedyellow}{state\_full\_step}(grad\_full, param\_state)}
            \STATE \qquad \qquad \PyCode{update\_full = projector.\textcolor{mutedyellow}{proj\_up}(update\_full)}
            \STATE \qquad \qquad \PyComment{state-free subspace update}
            \STATE \qquad \qquad \PyCode{update\_free = \textcolor{violet}{self}.\textcolor{mutedyellow}{state\_free\_step}(grad\_free)}
            \STATE \qquad \qquad \PyComment{perform resulting update}
            \STATE \qquad \qquad \PyCode{update = update\_full + update\_free}
            \STATE \qquad \qquad \PyCode{param.add\_(update)}
            \STATE
\STATE \PyCode{\textcolor{violet}{def} block\_step(\textcolor{violet}{self}):}
\STATE \qquad \PyComment{change state-full and state-free blocks if necessary} 
\STATE \qquad \PyCode{\textcolor{violet}{if} \textcolor{violet}{self}.step \% \textcolor{violet}{self}.update\_gap == 0:}
\STATE \qquad \qquad \PyCode{indices\_full = \textcolor{violet}{self}.\textcolor{mutedyellow}{update\_indices}(indices\_full)}
\STATE \qquad \qquad \PyCode{\textcolor{violet}{for} idx, param \textcolor{violet}{in} \textcolor{customdarkgreen}{enumerate}(\textcolor{violet}{self}.params):}
\STATE \qquad \qquad \qquad \PyCode{grad = param.grad}
\STATE \qquad \qquad \qquad \PyCode{param\_state = \textcolor{violet}{self}.state[param]}
\STATE \qquad \qquad \qquad \PyCode{\textcolor{violet}{if} idx \textcolor{violet}{in} indices\_full:}
\STATE \qquad \qquad \qquad \qquad \PyComment{reset state-full optimizer state} 
\STATE \qquad \qquad \qquad \qquad \PyCode{param\_state["exp\_avg"] = \textcolor{customdarkgreen}{torch}.\textcolor{mutedyellow}{zeros\_like}(grad)}
\STATE \qquad \qquad \qquad \qquad \PyCode{param\_state["exp\_avg\_sq"] = \textcolor{customdarkgreen}{torch}.\textcolor{mutedyellow}{zeros\_like}(grad)}
\STATE \qquad \qquad \qquad \qquad \PyCode{param\_state["full\_subspace"] = \textcolor{violet}{True}}
\STATE \qquad \qquad \qquad \PyCode{\textcolor{violet}{else}:}
\STATE \qquad \qquad \qquad \qquad \PyComment{free state-full optimizer state to save memory}
\STATE \qquad \qquad \qquad \qquad \PyCode{param\_state.\textcolor{mutedyellow}{clear}()}
\STATE \qquad \qquad \qquad \qquad \PyCode{param\_state["full\_subspace"] = \textcolor{violet}{False}}
\STATE \qquad \PyComment{perform updates}
\STATE \qquad \PyCode{\textcolor{violet}{for} param \textcolor{violet}{in} \textcolor{violet}{self}.params:}
\STATE \qquad \qquad \PyCode{grad = param.grad}
\STATE \qquad \qquad \PyCode{param\_state = \textcolor{violet}{self}.state[param]}
\STATE \qquad \qquad \PyComment{choose the optimizer depending on the block type}
\STATE \qquad \qquad \PyCode{\textcolor{violet}{if} param\_state["full\_subspace"]:}
\STATE \qquad \qquad \qquad \PyCode{update = \textcolor{violet}{self}.\textcolor{mutedyellow}{state\_full\_step}(grad, param\_state)}
\STATE \qquad \qquad \PyCode{\textcolor{violet}{else}:}
\STATE \qquad \qquad \qquad \PyCode{update = \textcolor{violet}{self}.\textcolor{mutedyellow}{state\_free\_step}(grad)}
\STATE \qquad \qquad \PyComment{perform resulting update}
\STATE \qquad \qquad \PyCode{param.add\_(update)}
\caption{\ALG\;step pseudocode, PyTorch-like}
\label{alg:pytorch_alg}
\end{algorithmic}
\end{algorithm}

\begin{algorithm}
\begin{algorithmic}[1]  
\STATE \PyCode{\textcolor{violet}{def} state\_full\_adam\_step(\textcolor{violet}{self}, grad, param\_state):}
\STATE \qquad \PyCode{exp\_avg = param\_state["exp\_avg"]}
\STATE \qquad \PyCode{exp\_avg\_sq = param\_state["exp\_avg\_sq"]}
\STATE \qquad \PyCode{step = \textcolor{violet}{self}.step}
\STATE \qquad \PyCode{beta1, beta2 = \textcolor{violet}{self}.betas}
\STATE \qquad \PyCode{exp\_avg.mul\_(beta1).add\_(grad, alpha=1.0-beta1)}
\STATE \qquad \PyCode{exp\_avg\_sq.mul\_(beta2).addcmul\_(grad, grad, value=1.0-beta2)}
\STATE \qquad \PyCode{denom = exp\_avg\_sq.sqrt()}
\STATE \qquad \PyCode{step\_size = \textcolor{violet}{self}.lr\_full}
\STATE \qquad \PyCode{\textcolor{violet}{if} \textcolor{violet}{self}.correct\_bias:}
\STATE \qquad \qquad \PyCode{bias\_correction1 = 1.0 - beta1 ** step}
\STATE \qquad \qquad \PyCode{bias\_correction2 = 1.0 - beta2 ** step}
\STATE \qquad \qquad \PyCode{step\_size = \textcolor{violet}{self}.lr\_full / bias\_correction1}
\STATE \qquad \qquad \PyCode{bias\_correction2\_sqrt = math.sqrt(bias\_correction2)}
\STATE \qquad \qquad \PyCode{denom.div\_(bias\_correction2\_sqrt)}
\STATE \qquad \PyCode{denom.add\_(\textcolor{violet}{self}.eps)}
\STATE \qquad \PyCode{update\_full = exp\_avg / denom * (-step\_size)}
\STATE \qquad \PyCode{\textcolor{violet}{return} update\_full}
\STATE \PyCode{}
\STATE \PyCode{\textcolor{violet}{def} state\_free\_signsgd\_step(\textcolor{violet}{self}, grad):}
\STATE \qquad \PyCode{update\_free = -\textcolor{violet}{self}.lr\_free * grad.sign()}
\STATE \qquad \PyCode{\textcolor{violet}{return} update\_free}
\label{alg:full_free_aux}
\caption{Examples of state-full and state-free steps for~\Cref{alg:pytorch_alg}}
\end{algorithmic}
\end{algorithm}

\end{document}